\newtheorem{definition}{Definition}
\newtheorem{theorem}[definition]{Theorem}
\newtheorem{proposition}[definition]{Proposition}
\newtheorem{lemma}[definition]{Lemma}
\newtheorem{example}[definition]{Example}
\newtheorem{notation}[definition]{Notation}
\def\B{{\mathbb B}}
\def\N{{\mathbb N}}
\def\R{{\mathbb R}}
\DeclareMathOperator{\dtw}{\delta}
\newcommand{\QED}{\hfill\ensuremath{\square}}
\newcommand{\commentout}[1]{}
\newcommand{\abs}[1]{\mathop{\left\lvert #1 \right\rvert}} 
\newcommand{\args}[1]{\mathop{\left( #1 \right)}} 
\newcommand{\norm}[1]{\mathop{\left\lVert #1 \right\rVert}}
\newcommand{\cbrace}[1]{\mathop{\left\{ #1 \right\}}}
\newcommand{\argsS}[2]{\mathop{\left( #1 \right)#2}} 
\newcommand{\normS}[2]{\mathop{\left\lVert #1 \right\rVert#2}}
\renewcommand{\S}[1]{{\mathcal{#1}}}           	
\renewenvironment{cases}{%
\left\{\begin{array}{c@{\quad : \quad}l}}%
{%
\end{array}\right.}
\definecolor{gr}{rgb}{0.9,0.9,0.9}
\begin{document}

\title{\LARGE Making the Dynamic Time Warping Distance Warping-Invariant}
\author{\small Brijnesh Jain \\[-1ex]
 \small Technische Universit\"at Berlin, Germany\\[-1ex]
 \small e-mail: brijnesh.jain@gmail.com}
 
\date{}
\maketitle

\begin{abstract}
The literature postulates that the dynamic time warping (dtw) distance can cope with temporal variations but stores and processes time series in a form as if the dtw-distance cannot cope with such variations. To address this inconsistency, we first show that the dtw-distance is not warping-invariant. The lack of warping-invariance contributes to the inconsistency mentioned above and to a strange behavior. To eliminate these peculiarities, we convert the dtw-distance to a warping-invariant semi-metric, called time-warp-invariant (twi) distance. Empirical results suggest that the error rates of the twi and dtw nearest-neighbor classifier are practically equivalent in a Bayesian sense. However, the twi-distance requires less storage and computation time than the dtw-distance for a broad range of problems. These results challenge the current practice of applying the dtw-distance  in nearest-neighbor classification and suggest the proposed twi-distance as a more efficient and consistent option.
\end{abstract}

\section{Introduction}\label{sec:intro}

Computing proximities is a core task for many important time-series mining tasks such as similarity search, nearest-neighbor classification, and clustering. The choice of proximity measure affects the performance of the corresponding proximity-based method. Consequently, a plethora of task-specific proximity measures have been devised \cite{Abanda2018,Aghabozorgi2015}. 

One research direction in time series mining is devoted to constructing data-specific distance functions that can capture some of the data's intrinsic structure. For example, the \emph{dynamic time warping} (dtw) distance has been devised to account for local variations in speed \cite{Vintsyuk1968,Sakoe1978}. Further examples include distances and techniques that account for other variations such as variation in speed, amplitude, offset, endpoints, occlusion, complexity, and mixtures thereof \cite{Batista2014}. 

In some -- partly influential -- work, the different distances are ascribed as invariant under the respective variation. For example, the dtw-distance is considered as warping-invariant (invariant under variation in speed) \cite{Batista2014,Chavoshi2016,Chen2013,Jain2015,Mueen2018,Silva2016}, the CID-distance \cite{Batista2014} as complexity-invariant, and the $\psi$-dtw distance \cite{Silva2016} as endpoint-invariant. Such statements in the literature collide with the mathematical meaning of \emph{invariance}. In mathematics, a distance function $d$ is called invariant under some transformation $f$ if $d(x, y) = d(f(x), f(y))$ for all elements $x, y$ from some domain. According to the mathematical definition of invariance, the CID-distance is not complexity-invariant, the $\psi$-dtw distance is not endpoint-invariant, and in particular, the dtw-distance is not warping-invariant (see Fig.~\ref{fig:ex_condensation} and Section \ref{subsec:main-warping-invariance}). To avoid confusion due to ambiguity, we assume the mathematical definition of invariance and say instead that the dtw-distance can cope with (or accounts for)  temporal variation.

The literature postulates that the dtw-distance can cope with temporal variations but stores and processes time series in a form as if the dtw-distance cannot cope with such variations. For example, the left columns of Figure \ref{fig:ex_condensation} shows two time series $x$ and $y$ from the UCR dataset \emph{SmallKitchenAppliances} \cite{Chen2015}. Both time series have length $720$ and mostly consist of constant segments of variable length (time). If the dtw-distance can cope with temporal variations, why do we not collapse constants segments to single points? The right column of Figure \ref{fig:ex_condensation} shows the corresponding condensed forms $x^*$ and $y^*$ obtained by collapsing constant segments to singletons. The condensed forms have length $\abs{x^*} = 25$ and $\abs{y^*} = 35$, which is substantially shorter than the lengths of their original time series. As a consequence, the space-saving ratios are $\rho_s(x) = 96.5\%$ and $\rho_s(y) = 95.1\%$ and the speed-up factor of computing the dtw-distance $\delta(x^*,y^*)$ over $\delta(x, y)$ is $592.5$. 

\begin{figure}[t]
\centering
\includegraphics[width=0.85\textwidth,trim= 8cm 0cm 7cm 0cm,clip]{./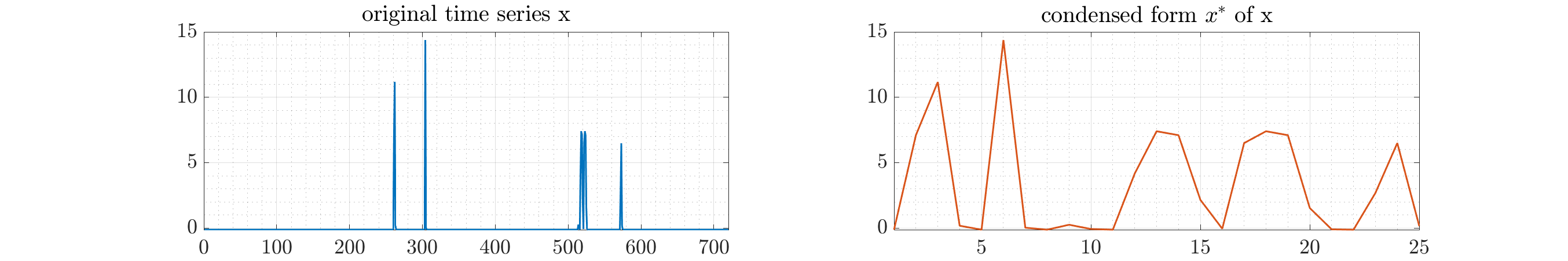}
\\[1em]
\includegraphics[width=0.85\textwidth,trim= 8cm 0cm 7cm 0cm,clip]{./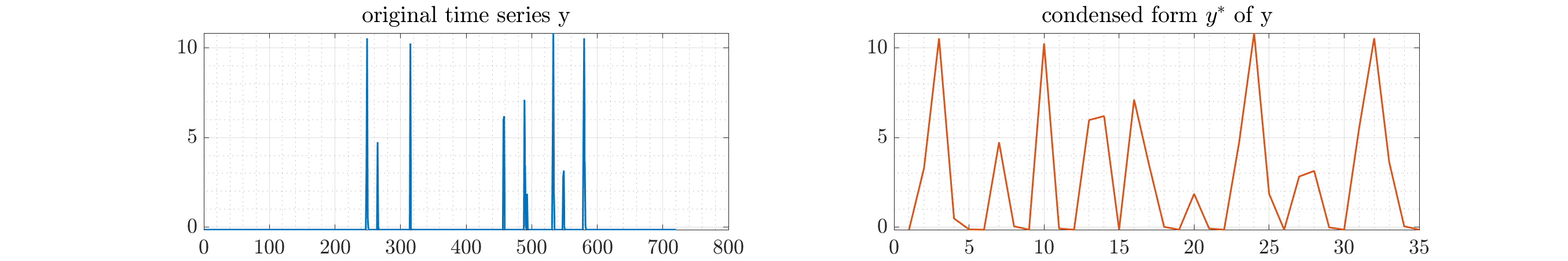}
\caption{The left column shows two time series $x$ and $y$ of length $720$ from the UCR dataset \emph{SmallKitchenAppliances}. The right column shows the corresponding condensed forms $x^*$ and $y^*$ obtained by collapsing constant segments to singletons. The dtw-distances are $\delta(x, y) = 73.2$ and $\delta(x^*,y^*) = 49.1$ suggesting that the dtw-distance is not invariant under condensation, which we regard as a special form of warping.}
\label{fig:ex_condensation}
\end{figure}

Even though Fig.~\ref{fig:ex_condensation} refers to one of the most extreme examples one can find in the UCR archive, sacrificing potential space-saving ratios and speed-up factors generally ask for a justification. An example of such a justification would be a better solution quality in time series mining tasks. Evidence to justify the inconsistency between the idea behind the dtw-distance and how this distance has been actually applied is missing.

We frame the inconsistency-problem into the wider perspective of the warping-invariance problem. Though the dtw-distance has been proposed to account for temporal variation potential effects caused by the lack of warping-invariance have not been clearly disclosed---leaving a number of open questions:
\begin{enumerate}
\itemsep0em
\item 
What are the negative effects contributed by the lack of warping-invariance?
\item 
How can we convert the dtw-distance to a warping-invariant distance? 
\item
What are the practical implications of a warping-invariant version of the dtw-distance?
\end{enumerate}
Possible adverse effects contributed by the absence of warping-invariance would ask for a solution. In this case, a possible solution would aim at converting the dtw-distance to a warping-invariant distance. This in turn raises the question, whether such a solution is not only theoretically fancier but also has practical benefits. 

To close the gap of the warping-invariance problem, we convert the dtw-distance to a warping-invariant semi-metric, called \emph{time-warp-invariant} (twi) distance. The resulting twi-distance corresponds to the dtw-distance on the subset of condensed forms. The construction of the twi-distance answers the second question and forms the foundation to answer the first and third question. 

To answer the first question, we show that the lack of warping-invariance contributes to an adverse effect of the nearest-neighbor rule in dtw-spaces. The nearest-neighbor rule is an important subroutine for several time series mining methods. Examples include similarity search, $k$-nearest-neighbor classification \cite{Abanda2018,Bagnall2017}, k-means clustering \cite{Rabiner1979,Hautamaki2008}, learning vector quantization \cite{Jain2018,Somervuo1999}, and self-organizing maps \cite{Kohonen1998}. The peculiarity of the nearest-neighbor rule affects all the time series mining methods just mentioned.  As an example, we show that it is possible to arbitrarily increase the cluster quality of k-means without changing the clusters under certain conditions. 

Finally, we answer the third question on the practical benefit of the twi-distance with respect to (i) computation time, (ii) storage consumption, and (iii) solution quality. The results show that the error rates of the twi and dtw nearest-neighbor are practically equivalent in a Bayesian sense, but the twi-distance consumes less storage and is faster to compute than the dtw-distance for a broad range of problems. 

\medskip

The rest of this article is structured as follows: Section 2 provides an informal overview of the main theoretical results. In Section 3, we point to peculiarities of the dtw-distance. Experiments are presented and discussed in Section 4. Finally, Section 5 concludes with a summary of the main results, their impact, and an outlook to further research. The theoretical treatment including all proof has been delegated to the appendix.

\section{A Warping-Invariant DTW Distance}\label{sec:results}

This section introduces the time-warp-invariant (twi) distance and shows that this distance is a warping-invariant semi-metric induced by the dtw-distance. We first introduce metric properties and the dtw-distance. Then we define the concept of warping-invariance. Then we present the main results to convert a dtw-space into a warping-invariant semi-metric space. Finally, we discuss some computational issues of the twi-distance. The theoretical treatment is delegated to \ref{sec:theory}. 

\subsection{Metric Properties}

In this section, we introduce metric properties and define the notion of semi-metric and pseudo-metric. A metric on a set $\S{X}$ is a function $d:\S{X} \times \S{X} \rightarrow \R$ satisfying the following properties:
\begin{enumerate}
\itemsep0em
\item 
$d(x, y) \geq 0$ \hfill (non-negativitiy)
\item 
$d(x, y) = 0 \;\Leftrightarrow \;x = y$ \hfill (identity of indiscernibles)
\item
$d(x,y) = d(y,x)$ \hfill (symmetry)
\item
$d(x,z) \leq d(x,y) + d(y,x)$ \hfill (triangle inequality)
\end{enumerate}
for all $x, y, z \in \S{X}$. We call $d$ a distance function. A semi-metric is a distance function satisfying the first three metric properties $(1)-(3)$. A pseudo-metric is a distance function satisfying the metric properties $(1), (3),$ and $(4)$.

\subsection{The DTW-Distance}
For a given $n \in \N$, we write $[n] = \cbrace{1, \ldots, n}$. A \emph{time series} of length $n$ is a sequence $x = (x_1, \ldots, x_n)$ with elements $x_i \in \R$ for all $i \in [n]$. We denote the set of all time series of finite length by $\S{T}$. 
Consider the ($m \times n$)-grid defined by
\[
[m] \times [n] = \cbrace{(i,j) \,:\, i \in [m], j \in [n]}.
\]
A \emph{warping path} of order $m \times n$ and length $\ell$ is a sequence $p = (p_1 , \dots, p_\ell)$ through the grid $[m] \times [n]$ consisting of $\ell$ points $p_l = (i_l,j_l) \in [m] \times [n]$ such that
\begin{enumerate}
\item $p_1 = (1,1)$ and $p_\ell = (m,n)$ \hfill (\emph{boundary conditions})
\item $p_{l+1} - p_{l} \in \cbrace{(1,0), (0,1), (1,1)}$ for all $l \in [\ell-1]$ \hfill(\emph{step condition})
\end{enumerate}
We denote the set of all warping paths of order $m \times n$ by $\S{P}_{m,n}$. Figure \ref{fig:warping-paths}(a) depicts examples of warping paths through a ($3 \times 3$)-grid.

\begin{figure}[t]
\centering
\includegraphics[width=0.95\textwidth]{./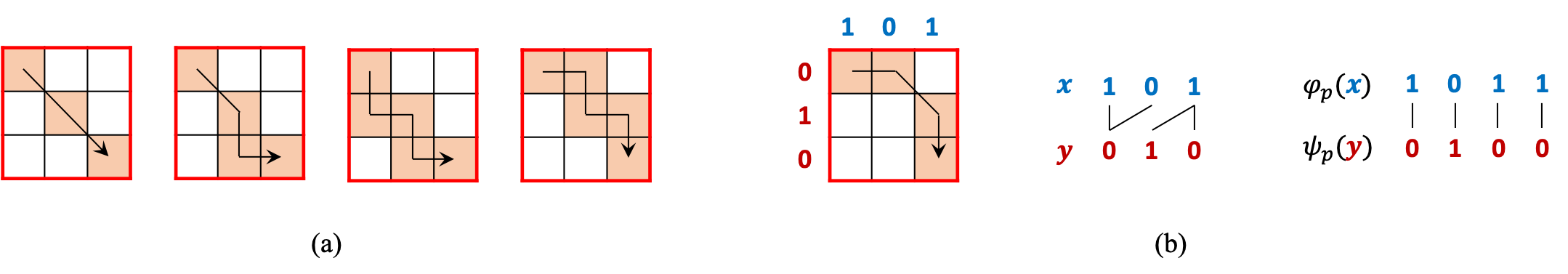}
\caption{Figure (a) shows examples of warping paths through a ($3 \times 3$)-grid. Figure (b) shows another warping path $p$ through a ($3 \times 3$)-grid, the corresponding warping of two time series $x$ and $y$, and the expansions $\phi_p(x)$ and $\psi_p(y)$. The lines between elements of $x$ and $y$ ($\phi_p(x)$ and $\psi_p(y)$) show the correspondences that contribute to the cost $C_p$ of warping $x$ and $y$ along $p$.} 
\label{fig:warping-paths}
\end{figure}

Suppose that $p = (p_1, \ldots, p_\ell) \in \S{P}_{m,n}$ is a warping path with points $p_l = (i_l, j_l)$ for all $l \in [\ell]$. Then $p$ defines an  (warping) of time series $x = (x_1, \ldots, x_m)$ and $y = (y_1, \ldots, y_n)$ to time series $\phi_p(x) = (x_{i_1}, \ldots, x_{i_\ell})$ and $\psi_p(y) = (y_{j_1}, \ldots, y_{j_\ell})$ of the same length $\ell$. By definition, the length $\ell$ of a warping path satisfies $\max(m,n) \leq \ell < m+n$. This shows that $\ell \geq \max(m,n)$ and therefore $\phi_p(x)$ and $\psi_p(y)$ are indeed expansions of $x$ and $y$. Figure \ref{fig:warping-paths}(b) illustrates the concept of expansion.

The \emph{cost} of warping time series $x$ and $y$ along warping path $p$ is defined by
\begin{equation*}
C_p(x, y) = \normS{\phi_p(x)-\psi_p(y)}{^2} = \sum_{(i,j) \in p} \argsS{x_i-y_j}{^2},
\end{equation*}
where $\norm{\cdot}$ denotes the Euclidean norm and $\phi_p$ and $\psi_p$ are the expansions defined by $p$. Then the \emph{dtw-distance} of $x$ and $y$ is of the form
\begin{align*}
\dtw(x, y) = \min \cbrace{\sqrt{C_p(x, y)} \,:\, p \in \S{P}_{m,n}}.
\end{align*}
A warping path $p$ with $C_p(x, y) = \dtw^2(x, y)$ is called an \emph{optimal warping path} of $x$ and $y$. By definition, the dtw-distance minimizes the Euclidean distance between all possible expansions that can be derived from warping paths. Computing the dtw-distance and deriving an optimal warping path is usually solved by applying a dynamic program \cite{Sakoe1978,Vintsyuk1968}.

The dtw-distance satisfies only the metric properties $(1)$ and $(3)$. Thus, the dtw-distance is neither a metric, nor a semi- and pseudo-metric.

\subsection{Warping-Invariance}\label{subsec:main-warping-invariance}

In this section, we define the concept of warping-invariance. For this, we introduce the notions of expansion and compression. Expansions have been introduced in the previous section in the context of warping. Here, we introduce expansions and compressions as operations on time series. 

\medskip

An expansion of a time series is obtained by replicating a subset of its elements. More precisely, suppose that $x =(x_1, \ldots, x_n)$ is a time series. A time series $x'$ is an \emph{expansion} of time series $x$, written as $x' \succ x$, if there are indexes $i_1, \ldots, i_k \in [n]$ and positive integers $r_1, \ldots, r_k \in \N$ for some $k \in \N_0$ such that $x'$ is of the form
\[
x' = (x_1, \ldots, 
\underbrace{x_{i_1}, \ldots, x_{i_1}}_{r_1\text{\,-times}}, x_{i_1 + 1}, \ldots, 
\underbrace{x_{i_2}, \ldots, x_{i_2}}_{r_2\text{\,-times}}, x_{i_2 + 1}, \ldots, 
\underbrace{x_{i_k}, \ldots, x_{i_k}}_{r_k\text{\,-times}}, x_{i_k+1}\ldots, x_n)
\]
By setting $k = 0$ we find that a time series is always an expansion of itself. A time series $x$ is a \emph{compression} of time series $x'$, written as $x \prec x'$, if $x'$ is an expansion of $x$. A time series $z$ is a \emph{common compression} of time series $x$ and $y$ if $z$ is a compression of $x$ and of $y$, that is $z \prec x$ and $z \prec y$. We write $x \sim y$ if both time series have a common compression. Figure \ref{fig:expansions} depicts the relationships between time series that have a common compression in form of a directed rooted tree of infinite depth. The root is the common compression of all nodes in the tree. 

\begin{figure}[t]
\centering
\includegraphics[width=0.8\textwidth]{./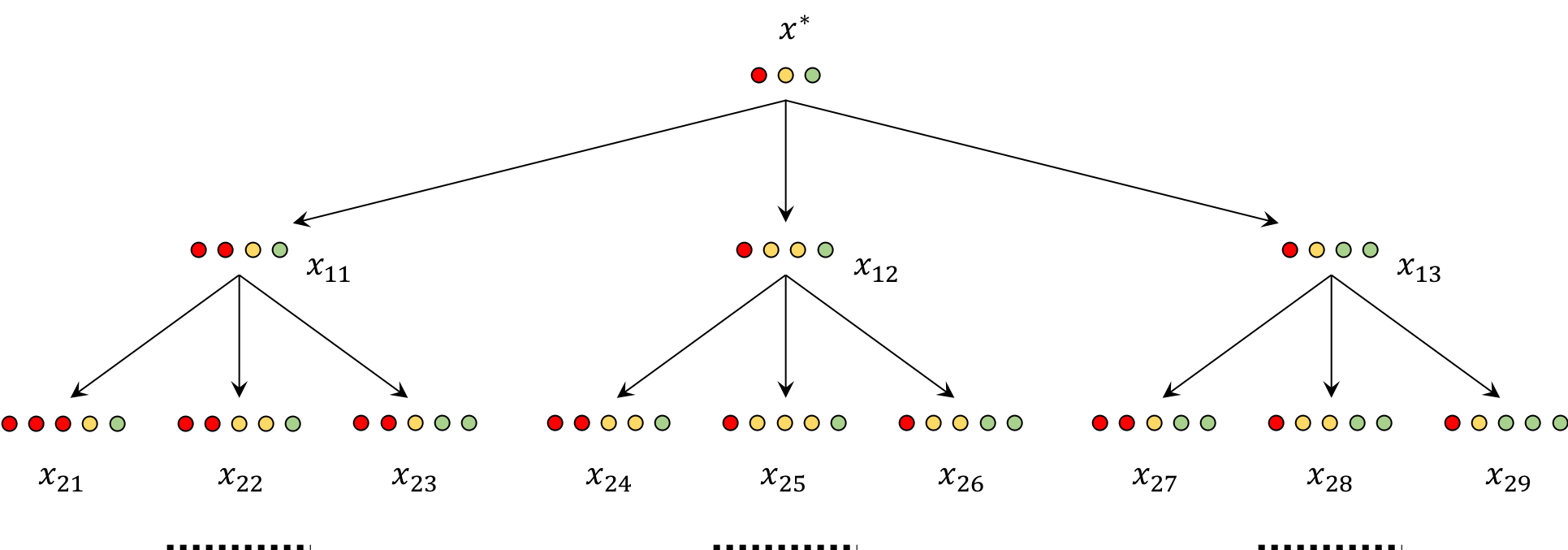}
\caption{Directed rooted tree of infinite depth with root $x^*$. Sequences with filled balls represent time series. The balls refer to the elements of the corresponding time series and different colors of the balls refer to different values. Tree nodes represent time series and directed edges represent one-element expansions from the parent to the child node. The parents are the one-element compressions of their children. If two nodes $x$ and $y$ have a common ancestor $z$, then $z$ is a common compression of $x$ and $y$. The root $x^*$ is a common compression of all nodes in the tree.}
\label{fig:expansions}
\end{figure}

A distance function $d:\S{T} \times \S{T} \rightarrow \R_{\geq 0}$ is \emph{warping-invariant} if 
\[
d(x, y) = d(x', y')
\]
for all time series $x, y, x', y' \in \S{T}$ with $x \sim x'$ and $y \sim y'$. Observe that warping-invariance as defined here is more general than the notion of invariance introduced in Section \ref{sec:intro}. The difference is that invariance in Section \ref{sec:intro} assumes that both time series undergo the same transformation, whereas in this section the transformation can be different for both time series. The next example shows that the dtw-distance is not warping-invariant.
\begin{example}\label{ex:warping-invariance} Let $x = (0, 1)$ and $x' = (0, 1, 1)$ be two time series. Then $x'$ is an expansion of $x$ and $\dtw(x, x') = 0$. Suppose that $y = (0, 2)$ is another time series. Then we have
\begin{align*}
\dtw(x, y)^2 &= (0-0)^2+(1-2)^2 = 1\\
\dtw(x', y)^2 &= (0-0)^2+(1-2)^2+(1-2)^2 = 2.
\end{align*}
This implies $\dtw(x, y) \neq \dtw(x', y)$, where $x$ is a common compression of $x, x'$ and $y$ is a compression of itself. Hence, the dtw-distance is not warping-invariant. 
\QED
\end{example}

Warping-invariance as introduced here must not be confused with the notion of warping-invariance and related terms in the literature  \cite{Batista2014,Chavoshi2016,Chen2013,Mueen2018,Silva2016}. In \cite{Jain2015}, I falsely insinuated that the dtw-distance is warping-invariant. There are also statements closely related to the notion of warping-invariance that are prone to misunderstandings. For example:
\begin{itemize}
\itemsep0em
\item \emph{The sequences are \enquote{warped} non-linearly in the time dimension to determine a measure of their similarity independent of certain non-linear variations in the time dimension} \cite{Khosrow-Pour2015}. 
\item \emph{Dynamic time warping (DTW) was introduced} [...] \emph{to overcome this limitation} [of the Euclidean distance] \emph{and give intuitive distance measurements between time series by ignoring both global and local shifts in the time dimension} \cite{Salvador2007}. 
\end{itemize}
As Example \ref{ex:warping-invariance} indicates, the dtw-distance is neither independent of non-linear variations nor ignores global and local shifts in the time dimension.

\subsection{The Time-Warp-Invariant Distance}\label{subsec:results:approach}

In this section, we present the main results of the theoretical exposition. In particular, we introduce the time-warp-invariant (twi) distance and show that the twi-distance is a warping-invariant semi-metric. \ref{sec:theory} contains the full theoretical treatment including all proofs. 

\medskip

The blueprint to convert a dtw-space into a semi-metric space follows the standard approach to convert a pseudo-metric space into a metric space. However, additional issues need to be resolved, because the dtw-distance satisfies less metric properties than a pseudo-metric. According to the blueprint, we proceed as follows: First, we introduce an equivalence relation $\sim$ on time series. Then we construct the quotient space of $\sim$. Next, we endow the quotient space with a quotient distance, called twi-distance. Finally, we show that the twi-distance is a warping-invariant semi-metric.

\medskip

We begin with introducing an equivalence relation on time series. Two time series are said to be \emph{warping-identifiable} if their dtw-distance is zero. Formally, \emph{warping identification} is the relation defined by $x \sim y \,\Leftrightarrow\, \dtw(x, y) = 0$ for all time series $x, y \in \S{T}$. From \ref{sec:theory} follows that warping identification and common compression are equivalent definitions of the same relation. In light of Fig.~\ref{fig:expansions}, this suggests that all pairs of time series from the same tree are warping-identifiable. The next result shows that warping identification is indeed an equivalent relation and thereby forms the foundation for converting the dtw-distance to a semi-metric.

\begin{proposition}\label{prop:warping-identification-class}
Warping identification $\sim$ is an equivalence relation on $\S{T}$. 
\end{proposition}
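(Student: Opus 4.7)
The plan is to verify the three axioms---reflexivity, symmetry, transitivity---of an equivalence relation for $\sim$. The first two are immediate from what is already known about $\dtw$, while transitivity is the only genuinely nontrivial step.

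\emph{Reflexivity} follows by exhibiting the diagonal warping path $p = ((1,1), (2,2), \ldots, (n,n)) \in \S{P}_{n,n}$ for any $x = (x_1, \ldots, x_n) \in \S{T}$. It satisfies the boundary and step conditions, and yields $C_p(x,x) = \sum_{i=1}^{n} (x_i - x_i)^2 = 0$, so $\dtw(x,x) = 0$ and $x \sim x$. \emph{Symmetry} is inherited directly from property (3) of $\dtw$, which the paper already notes holds: $\dtw(x,y) = \dtw(y,x)$ makes $x \sim y \Leftrightarrow y \sim x$ trivial.

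For \emph{transitivity} I would exploit the equivalence, stated just above the proposition, that $x \sim y$ holds if and only if $x$ and $y$ possess a common compression. The cleanest route is to prove the sharper fact that every $x \in \S{T}$ admits a unique \emph{condensed form} $x^\ast$, obtained by collapsing each maximal run of equal consecutive values to a single value, and that this form is invariant under $\prec$: namely, $z \prec x$ implies $z^\ast = x^\ast$. Granted this, one obtains $x \sim y \iff x^\ast = y^\ast$, because any common compression $z$ of $x$ and $y$ forces $x^\ast = z^\ast = y^\ast$, while conversely $x^\ast$ is itself a common compression of $x$ and $y$ whenever $x^\ast = y^\ast$. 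Transitivity then collapses to the trivial implication that $x^\ast = y^\ast$ and $y^\ast = z^\ast$ yield $x^\ast = z^\ast$.

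The principal obstacle is the invariance lemma $z \prec x \Rightarrow z^\ast = x^\ast$. I would handle it by induction on the number of single-element expansion steps needed to pass from $z$ to $x$: each such step replicates some entry adjacent to an equal entry and therefore neither creates nor destroys a maximal constant run, leaving the condensed form unchanged. A small bookkeeping check also confirms that $x^\ast$ really is a compression of $x$ in the formal sense of Section~\ref{subsec:main-warping-invariance}, but this is immediate once one reads the indices $i_1, \ldots, i_k$ and multiplicities $r_1, \ldots, r_k$ directly off the maximal constant runs of $x$.
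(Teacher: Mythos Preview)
Your reflexivity and symmetry arguments are fine. The transitivity argument, however, has a genuine circularity. You write that you will ``exploit the equivalence, stated just above the proposition, that $x \sim y$ holds if and only if $x$ and $y$ possess a common compression.'' In the paper this equivalence is a \emph{forward reference}: it is only established in the appendix via Prop.~\ref{prop:generator-of-[x]}, and the backward inclusion of that proof explicitly invokes Prop.~\ref{prop:warping-identification-class} (``Since $\sim$ is an equivalence relation, we have $x \sim y$ by Prop.~\ref{prop:warping-identification-class}''). So at the point where you need transitivity of $\sim$, the equivalence you are quoting has not yet been proved, and its eventual proof uses the very transitivity you are trying to establish.

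Your condensed-form machinery is actually strong enough to close the gap, but you would need to do the work yourself rather than import the equivalence. Concretely: from $\dtw(x,y)=0$ you get a warping path with $\phi_p(x)=\psi_p(y)$, hence a \emph{common expansion} $w$ of $x$ and $y$, and your invariance lemma then gives $x^*=w^*=y^*$. For the converse you must still show that $x^*=y^*$ forces $\dtw(x,y)=0$; this requires exhibiting an explicit zero-cost warping path between two expansions of the same irreducible word (a block-by-block alignment does it), and you never supply that step. Without it, your chain $x\sim y \Leftrightarrow x^*=y^*$ is only half proved from the actual definition $\dtw(x,y)=0$.

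By contrast, the paper's own proof does not touch condensed forms at all for this proposition. It works directly with warping matrices: given optimal paths $(\Phi,\Psi)$ for $(x,y)$ and $(\Phi',\Psi')$ for $(y,z)$, it invokes a pullback lemma to produce warping matrices $\Theta,\Theta'$ with $\Theta\Psi y=\Theta'\Phi' y$, and then a triangle-inequality estimate on $\norm{\Theta\Phi x-\Theta'\Psi' z}$ gives $\dtw(x,z)=0$. Your route through condensed forms is arguably more conceptual and foreshadows the later structure of the quotient space, but as written it leans on a result logically downstream of the one being proved.
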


\medskip

The assertion of Prop.~\ref{prop:warping-identification-class} is not self-evident, because warping paths are not closed under compositions and the dtw-distance fails to satisfy the triangle inequality. Therefore, it might be the case that there are time series $x, y,z$ such that $\delta(x, y) = \delta(y, z) = 0$ but $\delta(x, z) > \delta(x, y) + \delta(y, z) = 0$. 

\begin{figure}[t]
\centering
\includegraphics[width=0.55\textwidth]{./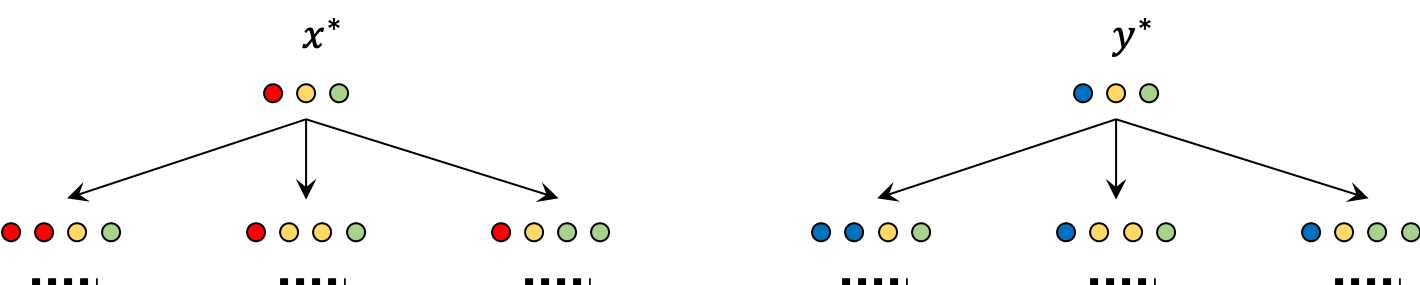}
\caption{Visualization of two equivalence classes by trees. The roots $x^*$ and $y^*$ are time series that cannot be further compressed.}
\label{fig:equivalence_classes}
\end{figure}

\medskip

We continue with constructing a new set, the so-called quotient space of equivalence relation $\sim$. This task consists of two steps. The first step forms equivalence classes and the second step collects all equivalence classes to a new set. This new set is the desired quotient space. 

The equivalence class of a time series $x$ is the set 
\[
[x] = \cbrace{y \in \S{T}\,:\, y \sim x}
\] 
consisting of all time series that are warping-identifiable to $x$. Time series from the same equivalence class have zero dtw-distance and time series from different equivalence classes have non-zero dtw-distance. Figure \ref{fig:equivalence_classes} depicts examples of two different equivalence classes. 

The set 
\[
\S{T}^* = \cbrace{[x] \,:\ x \in \S{T}}
\]
of all equivalence classes is the quotient space of $\S{T}$ by warping identification $\sim$. With regard to the visual representation of Fig.~\ref{fig:equivalence_classes}, the quotient space consists of all trees rooted at non-compressible time series. 

\medskip

Next, we endow the quotient space $\S{T}^*$ with a distance. The twi-distance is defined by
\begin{align}\label{eq:twi-distance-01}
\delta^*: \S{T}^* \times \S{T}^* \rightarrow \R_{\geq 0}, \quad ([x], [y]) \mapsto \inf_{x' \in [x]} \; \inf_{y' \in [y]} \; \delta(x', y').
\end{align}

\medskip

Readers unfamiliar with the concept of infimum may safely replace it by the minimum in this particular case. The twi-distance is a dtw-distance between special sets, the equivalence classes under warping identification. As a distance on sets, the twi-distance between two equivalence classes is the infimum dtw-distance between their corresponding representatives. For example, the twi-distance between the equivalence classes represented by the trees in Fig.~\ref{fig:equivalence_classes} is the infimum of all dtw-distances between time series obtained by expanding $x^*$ and time series obtained by expanding $y^*$.

\medskip

The last step consists in showing that the twi-distance is a warping-invariant semi-metric. However, the twi-distance 
as defined in Eq.~\eqref{eq:twi-distance-01} introduces an  additional problem. The naive way to compute the twi-distance by comparing infinitely many dtw-distances between time series from either equivalence class is computationally unfeasible.

To show warping-invariance, semi-metric properties, and to overcome the problem of an incomputable form, we derive an equivalent formulation of the twi-distance. For this, we introduce irreducible time series. A time series is said to be \emph{irreducible} if it cannot be expressed as an expansion of a shorter time series. For example, the roots of the trees in Fig.~\ref{fig:expansions} and \ref{fig:equivalence_classes} are all irreducible time series. The next result shows that an equivalence class can be generated by expanding an irreducible time series in all possible ways. 
\begin{proposition}\label{prop:generator-of-[x]}
For every time series $x$ there is an irreducible time series $x^*$ such that
\begin{align*}
[x] &= \cbrace{y \in \S{T} \,:\, y \succ x^*}.
\end{align*}
\end{proposition}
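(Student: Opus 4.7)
The plan is to take $x^*$ to be the canonical irreducible compression of $x$, obtained by iteratively deleting every element that equals its immediate predecessor. Since each deletion strictly decreases the length, the process terminates after finitely many steps, and the resulting $x^*$ is irreducible because a time series fails to be irreducible exactly when it contains two consecutive equal entries; any such pair would witness it as an expansion of the shorter series obtained by merging them.

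The key preliminary fact is the uniqueness of the irreducible compression: every time series $x$ has exactly one irreducible compression, namely the series of representatives of its maximal runs of equal consecutive values. This can be established by induction on length, since the run structure is intrinsic to $x$ and any irreducible compression must list precisely these run representatives (otherwise it would either contain consecutive duplicates or fail to be a compression of $x$). An immediate corollary, used below, is that $w \prec z$ implies $w^* = z^*$, because expanding $w$ to $z$ only lengthens existing runs and cannot alter the sequence of distinct run values.

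With these facts in hand, the two inclusions are short. For $\cbrace{y \in \S{T} : y \succ x^*} \subseteq [x]$: if $y \succ x^*$, then $x^*$ is a compression of $y$ and, by construction, of $x$ as well, so $x^*$ is a common compression of $x$ and $y$; by the equivalence stated just above the proposition between warping identification and existence of a common compression, this yields $x \sim y$, hence $y \in [x]$. For $[x] \subseteq \cbrace{y \in \S{T} : y \succ x^*}$: if $y \in [x]$, then $x$ and $y$ share some common compression $z$; applying the corollary to $z \prec x$ and to $z \prec y$ forces $y^* = z^* = x^*$, and since $y^* \prec y$ by construction, we conclude $x^* \prec y$, i.e.\ $y \succ x^*$.

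The main obstacle is the uniqueness of the irreducible compression and the compatibility statement $w \prec z \Rightarrow w^* = z^*$. Both are combinatorial claims about runs of equal consecutive values and reduce to induction on length, but they are what prevents a naive one-step argument from working, since $\prec$ is defined via multi-element replication patterns and $\sim$ is controlled by the non-triangular dtw-distance. Once the canonical-form machinery is in place, the rest of the proposition is essentially bookkeeping.
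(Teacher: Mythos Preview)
Your argument is largely sound but contains a circularity in the $\subseteq$ direction. You invoke ``the equivalence stated just above the proposition between warping identification and existence of a common compression'' to conclude that $y \in [x]$ yields a common compression $z$ of $x$ and $y$. In the paper, however, that equivalence is explicitly flagged as a \emph{consequence} of the appendix, and the appendix result that carries the non-trivial direction $\dtw(x,y)=0 \Rightarrow \text{common compression}$ is precisely Proposition~\ref{prop:generator-of-[x]} itself. So you are appealing to the statement you are trying to prove.

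The missing step is the only place where the dtw-distance genuinely enters: if $\dtw(x,y)=0$, then some optimal warping path $(\Phi,\Psi)$ satisfies $\Phi x = \Psi y$, and this common value $z$ is a common \emph{expansion} of $x$ and $y$ (Lemma~\ref{lemma:x>y=>x=PHIy}). Your own corollary $w \prec z \Rightarrow w^* = z^*$ then gives $x^* = z^* = y^*$, hence $x^* \prec y$. This is exactly what the paper does in its $\subseteq$ direction, combining the warping-matrix representation with Proposition~\ref{prop:compression-is-expansion-of-condensed-form}. Once you insert this dtw step, your proof and the paper's coincide; your run-based treatment of the condensed form and of the implication $w \prec z \Rightarrow w^* = z^*$ is a restatement of Propositions~\ref{prop:existence-of-condensed-form} and~\ref{prop:compression-is-expansion-of-condensed-form}. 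The $\supseteq$ direction is unproblematic, since ``common compression $\Rightarrow$ warping identification'' follows from Proposition~\ref{prop:expansion-inequality} together with the transitivity of~$\sim$ (Proposition~\ref{prop:warping-identification-class}) and does not depend on the proposition at hand.
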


\medskip

The irreducible time series $x^*$ in Prop.~\ref{prop:generator-of-[x]} is called the \emph{condensed form} of $x$. Every time series has a unique condensed form by \ref{sec:theory}, Prop.~\ref{prop:existence-of-condensed-form}. Hence, $x^*$ is the condensed form of all time series contained in the equivalence class $[x]$. Proposition~\ref{prop:generator-of-[x]} states that the equivalence classes are the trees rooted at a their respective condensed forms as illustrated in Figure \ref{fig:expansions} and \ref{fig:equivalence_classes}. Next, we show that expansions do not decrease the dtw-distance to other time series. 

\begin{proposition}\label{prop:expansion-inequality}
Let $x, x' \in \S{T}$ such that $x' \succ x$. Then $\dtw(x', z) \geq \dtw(x,z)$ for all $z \in \S{T}$.
\end{proposition}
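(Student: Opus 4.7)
}

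The plan is to exhibit, for any warping path between $x'$ and $z$, a warping path between $x$ and $z$ of no greater cost, from which the inequality follows by taking the infimum. By induction on the number of duplications needed to build $x'$ from $x$, it suffices to treat the case where $x' \succ x$ differs from $x$ by a single inserted duplicate: fix an index $i \in [m]$ (where $m = \abs{x}$) and suppose $x'$ has length $m+1$ with $x'_k = x_k$ for $k \leq i$, $x'_{i+1} = x_i$, and $x'_k = x_{k-1}$ for $k \geq i+2$. A finite chain of such one-step bounds then yields the general statement.

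For this one-step setting, I would introduce the projection $\sigma:[m+1]\to[m]$ given by $\sigma(k)=k$ for $k\leq i$ and $\sigma(k)=k-1$ for $k\geq i+1$. By construction $\sigma$ is weakly increasing with $\sigma(k+1)-\sigma(k)\in\{0,1\}$, and $x'_k = x_{\sigma(k)}$ for every $k$. Given any $p'=(p'_1,\ldots,p'_\ell)\in\S{P}_{m+1,n}$ with $p'_l=(k_l,j_l)$, define the sequence $q_l = (\sigma(k_l), j_l)$ and let $p$ be obtained from $(q_1,\ldots,q_\ell)$ by deleting any entry equal to its predecessor. I would then verify that $p\in\S{P}_{m,n}$ by a short case analysis on the three admissible steps of $p'$: the step $(1,0)$ either collapses (when $\sigma$ is constant) and is removed, or becomes $(1,0)$; the step $(0,1)$ is preserved as $(0,1)$; and the step $(1,1)$ becomes either $(0,1)$ or $(1,1)$ depending on whether $\sigma$ increments. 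The boundary conditions transfer since $\sigma(1)=1$ and $\sigma(m+1)=m$.

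For the cost comparison, each retained entry of $p$ contributes $(x_{\sigma(k_l)}-z_{j_l})^2 = (x'_{k_l}-z_{j_l})^2$, i.e.\ exactly the same term that $p'$ contributed at $(k_l,j_l)$. The deleted entries correspond to discarded duplicate terms, all of which are non-negative. Therefore
\begin{align*}
C_p(x,z) \;\leq\; C_{p'}(x',z),
\end{align*}
and specializing $p'$ to an optimal warping path for $(x',z)$ gives
\begin{align*}
\dtw(x,z)^2 \;\leq\; C_p(x,z) \;\leq\; C_{p'}(x',z) \;=\; \dtw(x',z)^2,
\end{align*}
from which the claim follows after taking square roots.

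The main obstacle I anticipate is the bookkeeping in the step-by-step case analysis: one has to argue carefully that deleting consecutive duplicates of $q$ yields a \emph{genuine} warping path satisfying both the step condition and the boundary conditions, and that no cost-contributing term of $p'$ is inadvertently attributed to $p$ (so that the inequality goes in the correct direction). Everything else — the reduction to a one-step expansion and the final application of the definition of $\dtw$ — is routine.
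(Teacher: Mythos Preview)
Your argument is correct, but it takes a different route from the paper's. The paper does \emph{not} reduce to a one-element expansion or do any induction. Instead, it has already set up the machinery of \emph{warping matrices} and \emph{warping walks} (warping paths in which the zero step $(0,0)$ is also permitted), and the proof becomes essentially a one-liner: since $x' \succ x$, there is a warping matrix $\Theta$ with $x' = \Theta x$; if $(\Phi,\Psi)$ is an optimal warping path for $(x',z)$, then $(\Phi\Theta,\Psi)$ is a warping \emph{walk} for $(x,z)$ with exactly the same cost $\norm{\Phi\Theta x - \Psi z} = \norm{\Phi x' - \Psi z}$, and the paper has previously shown that minimising over warping walks gives the same value as minimising over warping paths. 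Hence $\dtw(x,z) \leq \dtw(x',z)$ in a single step, with no induction and no case analysis.

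Your projection $\sigma$ plays the role of the warping matrix $\Theta$, and your ``delete consecutive duplicates'' step is precisely the condensation of a warping walk to a warping path that the paper handles once and for all in its preliminary section. In effect you are re-proving that piece of infrastructure locally, in the special case of a single duplicated element, and then chaining such steps by induction. This buys you a self-contained, elementary argument that does not depend on the paper's matrix formalism; the price is the bookkeeping you correctly identify (the three-case step analysis and the induction). The paper's version is terser but leans on its earlier lemmas about compositions of warping functions and the equivalence of walks and paths.
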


\medskip

A special case of Prop.~\ref{prop:expansion-inequality} has been proved in \cite{Brill2019}. We will use Prop.~\ref{prop:expansion-inequality} in Section \ref{sec:peculiarities} to discuss peculiarities of data-mining applications on time series. By invoking Prop.~\ref{prop:generator-of-[x]} and Prop.~\ref{prop:expansion-inequality} we obtain the first key result of this contribution.

\begin{theorem}\label{theorem:semi-metric}
The twi-distance $\dtw^*$ induced by the dtw-distance $\dtw$ is a well-defined semi-metric satisfying 
\[
\dtw^*([x], [y]) = \dtw(x^*,y^*)
\]
for all $x, y \in \S{T}$ with condensed forms $x^*$ and $y^*$, respectively. 
\end{theorem}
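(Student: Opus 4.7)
My plan is to establish the explicit formula $\delta^*([x],[y]) = \delta(x^*,y^*)$ first; once this identity is in hand, well-definedness and all three semi-metric properties follow quickly from the corresponding properties of $\delta$ together with the uniqueness of condensed forms.

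To prove the formula, I would combine Proposition \ref{prop:generator-of-[x]} with Proposition \ref{prop:expansion-inequality}. By Prop.~\ref{prop:generator-of-[x]}, every $x' \in [x]$ satisfies $x' \succ x^*$ and every $y' \in [y]$ satisfies $y' \succ y^*$. Applying Prop.~\ref{prop:expansion-inequality} to the first argument yields $\delta(x',y') \geq \delta(x^*,y')$; applying it to the second argument, using the symmetry of $\delta$ to swap arguments and swap back, gives $\delta(x^*,y') = \delta(y',x^*) \geq \delta(y^*,x^*) = \delta(x^*,y^*)$. Chaining these two inequalities, $\delta(x',y') \geq \delta(x^*,y^*)$ for every $(x',y') \in [x]\times[y]$, so $\delta(x^*,y^*)$ is a lower bound for the double infimum in \eqref{eq:twi-distance-01}. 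Since $x^* \in [x]$ and $y^* \in [y]$ themselves, this bound is attained, so the infimum is in fact a minimum equal to $\delta(x^*,y^*)$.

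From the formula, the rest is routine. Well-definedness follows because the condensed form of a class is unique (Prop.~\ref{prop:existence-of-condensed-form}), so the right-hand side depends only on $[x]$ and $[y]$. Non-negativity and symmetry of $\delta^*$ are inherited directly from the corresponding properties of $\delta$ applied to $\delta(x^*,y^*)$. For the identity of indiscernibles, $\delta^*([x],[y]) = 0$ is equivalent to $\delta(x^*,y^*) = 0$, i.e.\ $x^* \sim y^*$; since $x \sim x^*$ and $y \sim y^*$ (each is an expansion of its condensed form, so they share a common compression), this is equivalent to $[x] = [y]$. The main obstacle is the formula step: it hinges on Prop.~\ref{prop:expansion-inequality} holding for arbitrary expansions on either side, which in turn relies on the symmetry of $\delta$ to apply the inequality to the second argument. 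Without both ingredients one could not conclude that the infimum is attained at the pair of condensed forms, and the three-step conversion from dtw-space to semi-metric space would break down.
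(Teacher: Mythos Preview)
Your proposal is correct and follows essentially the same route as the paper's proof: both establish the formula by chaining Prop.~\ref{prop:expansion-inequality} through the two arguments to show $\delta(x^*,y^*)\le\delta(x',y')$ for all representatives, then read off well-definedness, non-negativity, symmetry, and the identity of indiscernibles from the formula. The only cosmetic differences are that you make explicit the appeal to Prop.~\ref{prop:generator-of-[x]} (to know that every $x'\in[x]$ is an expansion of $x^*$) and the use of symmetry of $\delta$ when applying Prop.~\ref{prop:expansion-inequality} to the second argument, whereas the paper simply writes the chain $\delta(x^*,y^*)\le\delta(x^*,y')\le\delta(x',y')$ and invokes Prop.~\ref{prop:warping-identification-class} for the final step $[x]=[x^*]=[y^*]=[y]$.
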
 

\medskip

Theorem \ref{theorem:semi-metric} converts the dtw-distance to a semi-metric and shows that the twi-distance can be efficiently computed as discussed in Section \ref{subsec:compIssues}.

The second key result of this contribution shows that the semi-metric induced by the dtw-distance is warping-invariant. Since warping-invariance is defined on the set $\S{T}$ rather than on the quotient space $\S{T}^*$, we extend the twi-distance $\dtw^*$ to a distance on $\S{T}$ by virtue of
\[
\dtw^\sim: \S{T} \times \S{T} \rightarrow \R_{\geq 0}, \quad (x, y) \mapsto \dtw^\sim(x, y) = \dtw^*([x], [y]).
\]
We call $\dtw^\sim$ the \emph{canonical extension} of $\dtw^*$.

\begin{theorem}\label{theorem:warping-invariance} 
The canonical extension $\dtw^\sim$ of the twi-distance $\dtw^*$ is warping-invariant.
\end{theorem}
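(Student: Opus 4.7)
The proof plan is to reduce warping-invariance of $\dtw^\sim$ to a basic property of equivalence classes, using the already established Proposition~\ref{prop:warping-identification-class} and Theorem~\ref{theorem:semi-metric}. Specifically, I would show that whenever $x \sim x'$, the equivalence classes coincide: $[x] = [x']$. This is immediate from $\sim$ being an equivalence relation (Prop.~\ref{prop:warping-identification-class}), since by transitivity any $z$ warping-identifiable to $x$ is also warping-identifiable to $x'$, and vice versa by symmetry. So the coset map $x \mapsto [x]$ collapses every pair of equivalent time series to the same point.

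From here the result is essentially by unfolding definitions. Assume $x \sim x'$ and $y \sim y'$. Then $[x] = [x']$ and $[y] = [y']$, and by the definition of the canonical extension,
\begin{align*}
\dtw^\sim(x, y) \;=\; \dtw^*([x], [y]) \;=\; \dtw^*([x'], [y']) \;=\; \dtw^\sim(x', y'),
\end{align*}
which is exactly the warping-invariance condition.

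If a more concrete justification is desired, I would invoke Theorem~\ref{theorem:semi-metric} instead of reasoning purely on the quotient. Since $x \sim x'$, Proposition~\ref{prop:generator-of-[x]} (together with the uniqueness of the condensed form asserted in the paragraph following it) implies that $x$ and $x'$ share the same condensed form $x^*$. Likewise $y$ and $y'$ share condensed form $y^*$. Theorem~\ref{theorem:semi-metric} then gives $\dtw^\sim(x,y) = \dtw(x^*, y^*) = \dtw^\sim(x', y')$.

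There is no real obstacle here; the theorem is almost a formal consequence of the construction. The only care point is to make sure one uses Prop.~\ref{prop:warping-identification-class} (or uniqueness of the condensed form) to deduce $[x]=[x']$ from $x\sim x'$, rather than trying to manipulate the $\inf$-$\inf$ definition of $\dtw^*$ directly---the latter would entangle the argument with the semi-metric properties that Theorem~\ref{theorem:semi-metric} has already settled.
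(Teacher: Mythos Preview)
Your proposal is correct and essentially matches the paper's own proof. Both arguments reduce to showing $[x]=[x']$ and $[y]=[y']$ and then unfold the definition of $\dtw^\sim$; the only cosmetic difference is that the paper reaches $[x]=[x']$ by passing through the condensed form of a common compression (via transitivity of expansions and Prop.~\ref{prop:generator-of-[x]}), whereas you invoke the equivalence-relation property of $\sim$ from Prop.~\ref{prop:warping-identification-class} directly---a slightly cleaner route to the same conclusion.
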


\subsection{Computational Issues}\label{subsec:compIssues}
In this section, we discuss computational issues of the twi-distance and point to optimization techniques in nearest-neighbor search. 

\paragraph*{Computing the TWI-Distance.}

\begin{figure}
\centering
\includegraphics[width=0.8\textwidth]{./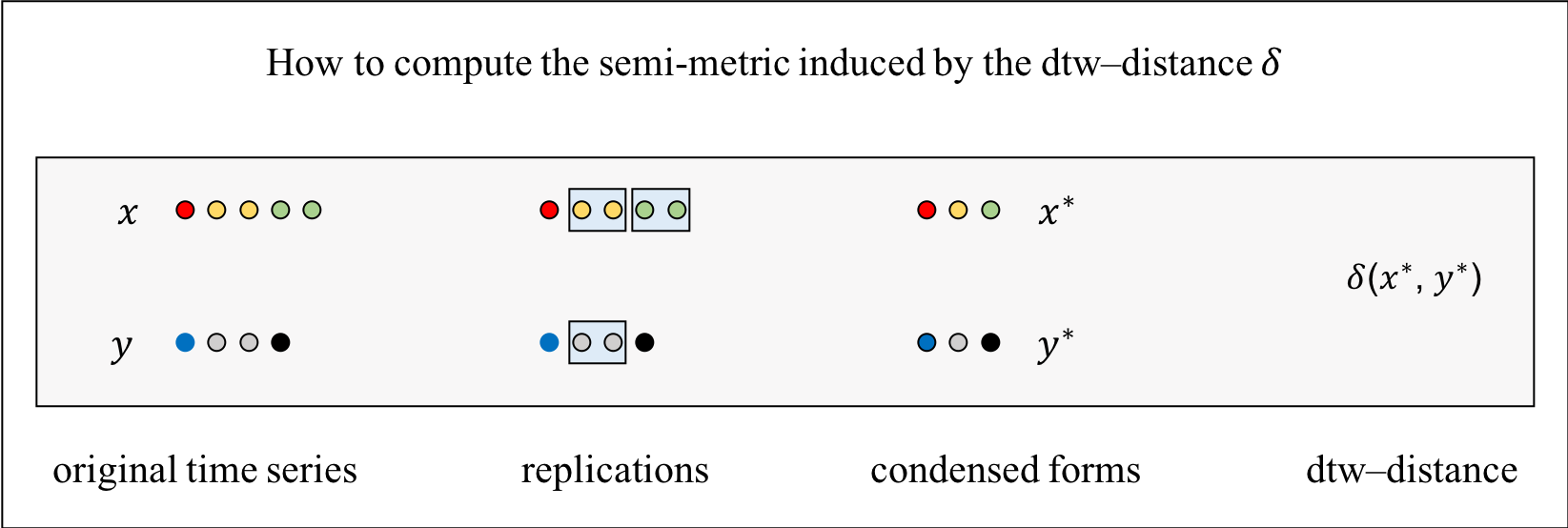}
\caption{Process of computing the proposed semi-metric when the time series to be compared are not irreducible. The first column shows two time series $x$ and $y$ consisting of $5$ and $4$ elements, respectively. Elements are shown by filled balls, where different colors refer to different real values. The second column identifies consecutive replications within the time series as highlighted by the blue-shaded boxes. The third column collapses replications to singletons resulting in condensed forms $x^*$ and $y^*$. The semi-metric is the dtw-distance between the condensed forms $x^*$ and $y^*$.}
\label{fig:semi-metric}
\end{figure}

The twi-distance and dtw-distance only coincide on irreducible time series. If at least one of the two time series to be compared is not irreducible, there are two ways to compute the twi-distance: 
\begin{enumerate}
\itemsep0em
\item The traditional way assumes the existence of time series that are not irreducible. Then the twi-distance $\dtw^*([x], [y])$ can be computed by first compressing $x$ and $y$ to their condensed forms $x^*$ and $y^*$, resp., in linear time and then taking their dtw-distance $\dtw(x^*,y^*)$. 
\item From the perspective of applications that demand warping-invariance, there is no point of storing all but one element of a constant sub-sequence. Instead, one assumes that all time series are irreducible and then computes the twi-distance as an ordinary dtw-distance on irreducible time series.
\end{enumerate}
Figure \ref{fig:semi-metric} illustrates how to compute the twi-distance for the case that both time series to be compared are not irreducible. Let $\abs{x}$ denote the length of a time series $x$. Suppose that $x$ and $y$ are two time with corresponding condensed forms $x^*$ and $y^*$, respectively. Then  $\rho_{\text{c}}(x) = \abs{x}/\abs{x^*}$ is the compression-ratio of $x$. The higher the compression-ratio of $x$, the shorter is its condensed form $x^*$. The speed-up factor of the twi-distance over the dtw-distance is given by
\begin{align*}
\phi_1 = \frac{\abs{x}\abs{y}}{\abs{x^*}\abs{y^*} + \abs{x^*} + \abs{y^*}}  
\quad  \text{and} \quad
\phi_2 = \rho_{\text{c}}(x) \cdot \rho_{\text{c}}(y) = \frac{\abs{x}\abs{y}}{\abs{x^*}\abs{y^*}},
\end{align*}
where $\phi_1$ assumes that time series are not necessarily irreducible and $\phi_2$ assumes that all time series are irreducible. Suppose that $n = \abs{x} = \abs{y}$ and $m = \max\cbrace{\abs{x^*}, \abs{y^*}}$. Then the first speed-up factor $\phi_1$ satisfies
\[
\phi_1 \geq \frac{n^2}{m^2 + 2m} > \frac{n^2}{(m+1)} \geq 1
\]
if $n > m$. Thus, computing the twi-distance is more efficient than computing the dtw-distance once there is at least one replication in both time series $x$ and $y$. In particular, the twi-distance is much more efficient to compute than the dtw-distance for extremely long time series with high compression-ratio. Such time series occur, for example, when measuring electrical power of household appliances with a sampling rate of a few seconds collected over several months, twitter activity data sampled in milliseconds, and human activities inferred from a smart home environment \cite{Mueen2018}. For such time series, storage consumption can be reduced to over $90\%$.

\medskip

Recently, two algorithms have been devised to cope with time series with high compression-ratio. The first algorithm is AWarp proposed by \cite{Mueen2018}. This algorithm exactly computes the dtw-distance for time series consisting of binary values and is an approximation of the dtw-distance otherwise. The second algorithm is the bdtw-distance (block-dtw) proposed by \cite{Sharabiani2018}. The bdtw-algorithm generalizes AWarp by exactly computing the dtw-distance not only for binary but also for any two-valued time series.  For time series with more than two-values, the bdtw-distance is an approximation of the dtw-distance and comes with a lower and upper bound. 

Both algorithms have a similar speed-up factor as the twi-distance but are slower by a small factor. This factor is caused by additional operations such as multiplying every element with the number of its repetitions in order to approximate the dtw-distance.

\paragraph*{Optimizing Nearest-Neighbor Search.}

One common task in time series mining is nearest-neighbor search \cite{Silva2018}. Given a dataset $\S{D}$ consisting of $N$ time series and a query time series $q$, the goal is to find a time series $x \in \S{D}$ that is closest to $q$ with respect to the dtw-distance. A naive implementation compares the query to all $N$ time series to identify the nearest-neighbor. The naive approach is unfeasible for large $N$ and for long time series due to the quadratic complexity of the dtw-distance. 

As a consequence, nearest-neighbor search has been continually optimized by devising combinations of sophisticated pruning methods such as lower bounding and early abandoning  \cite{Silva2018}. Since the twi-distance is a dtw-distance on irreducible time series, all pruning methods applicable to the dtw-distance can also be applied to the twi-distance. In some cases, we need to ensure that the irreducible time series are of the same length. This holds for lower-bounding techniques such as LB\_Lemire \cite{Lemire2009}and LB\_Keogh \cite{Keogh2002}. We can align irreducible time series to identical length using any technique that preserves irreducibility. An example of an irreducibility-preserving technique is linear interpolation.

\section{Peculiarities of the DTW-Distance}\label{sec:peculiarities}

In this section, we discuss some peculiarities of the dtw-distance, which do not occur for the twi-distance. 

\subsection{The Effect of Expansions on the Nearest-Neighbor Rule}

Suppose that $x'$ is an expansion of $x$. Then Prop.~\ref{prop:expansion-inequality} yields
\begin{align}
\label{eq:warping-identification}
\dtw(x, x') &= 0\\
\label{eq:expansion-inequality}
\dtw(x, y)\phantom{'} &\leq \dtw(x', y)
\end{align}
for all time series $y$. Equation \eqref{eq:warping-identification} states that a time series is warping identical with its expansions. Equation \eqref{eq:expansion-inequality} says that expansions do not decrease the dtw-distance to other time series. Equation \eqref{eq:expansion-inequality} affects the nearest-neighbor rule. To see this, we consider a set $\S{D} = \cbrace{x, y} \subseteq \S{T}$ of two prototypes. The Voronoi cells of $\S{D}$ are defined by 
\begin{align*}
\S{V_D}(x) &= \cbrace{z \in \S{T} \,:\, \dtw(x, z) \leq \dtw(y, z)} \qquad \text{and} \qquad
\S{V_D}(y) = \cbrace{z \in \S{T} \,:\, \dtw(y, z) \leq \dtw(x, z)}.
\end{align*}
Suppose that $z$ is a time series. If $z \in \S{V_D}(x)$ then prototype $x$ is the nearest-neighbor of $z$. Otherwise, prototype $y$ is the nearest-neighbor of $z$. To break ties, we arbitrarily assign $x$ as nearest-neighbor for all time series from the boundary $\S{V_D}(x) \cap \S{V_D}(y)$.
The nearest-neighbor rule assigns $z$ to its nearest-neighbor. 

Suppose that we replace prototype $x$ by an expansion $x'$ to obtain a modified set $\S{D}' = \cbrace{x', y}$ of prototypes. Although $x'$ and $x$ are warping identical by Eq.~\eqref{eq:warping-identification}, the Voronoi cells of $\S{D}$ and $\S{D}'$ differ. From Eq.~\eqref{eq:expansion-inequality} follows that $\S{V_{D'}}(x') \subseteq \S{V_D}(x)$, which in turn implies $\S{V_{D'}}(y) \supseteq \S{V_D}(y)$. Keeping prototype $y$ fixed, expansion of prototype $x$ decreases its Voronoi cell and thereby increases the Voronoi cell of $y$. This shows that the nearest-neighbor rule in dtw-spaces depends on temporal variation of the prototypes. 

This peculiarity of the nearest-neighbor rule affects time series mining methods in dtw-spaces such as similarity search, $k$-nearest-neighbor classification, k-means clustering, learning vector quantization and self-organizing maps.

\subsection{The Effect of Expansions on k-Means}

\renewcommand{\figurename}{Box}
\begin{figure}
\centering
\colorbox{gr}{\textcolor{black}{
\begin{minipage}{\textwidth}
\footnotesize
\textbf{A Brief Review on k-Means in DTW-Spaces}
\\[0.5em]
\phantom{\ \ \ } 
The foundations of current state-of-the-art k-means algorithms in dtw-spaces have been established in the 1970ies mainly by Rabiner and his co-workers  with speech recognition as the prime application \cite{Rabiner1979,Wilpon1985}. The early foundations fell largely into oblivion and where successively rediscovered, consolidated, and improved in a first step by \cite{Abdulla2003} in 2003 and then finalized by \cite{Hautamaki2008} in 2008. The k-means algorithm proposed by  \cite{Hautamaki2008} has been accidentally misrepresented in \cite{Petitjean2011} as a k-medoid algorithm. In the same work, \cite{Petitjean2011} proposed and explored the popular DBA algorithm for mean computation and plugged this algorithm into k-means \cite{Petitjean2011,Petitjean2016}. The approach to time series averaging proposed by \cite{Hautamaki2008} and the DBA algorithm proposed by \cite{Petitjean2011}  are both majorize-minimize algorithms. Conceptual differences between both algorithms are unclear, in particular after inspecting both implementations. Thereafter, the k-means algorithm in dtw-spaces has been generalized in different directions: to  weighted and kernel time warp measures \cite{Soheily-Khah2016}, to constrained dtw-distances \cite{Morel2018},  and to smoothed k-means by using a soft version of the dtw-distance \cite{Cuturi2017}.
\\[-0.7em]
\end{minipage}}}
\vspace{-0.3cm}
\caption{}
\label{box:k-means}
\end{figure}
\renewcommand{\figurename}{Figure}

As an example, we discuss how the peculiarities of the nearest-neighbor rule affect the k-means algorithm. Box \ref{box:k-means} provides a brief review about k-means in dtw-spaces. 

\medskip

To apply k-means in dtw-spaces, we need a concept of average of time series. Different forms of time series averages have been proposed (see \cite{Schultz2018} for an overview). One principled formulation of an average is based on the notion of Fr\'echet function \cite{Frechet1948}: Suppose that $\S{S} = \args{x_1, \dots, x_n}$ is a sample of $n$ time series $x_i \in \S{T}$. Then the \emph{Fr\'echet function} of $\S{S}$ is defined by
\[
F: \S{T} \rightarrow \R, \quad z \mapsto \sum_{i=1}^n \dtw\!\argsS{x_i, z}{^2},
\]
A \emph{sample mean} of $\S{S}$ is any time series $\mu \in \S{T}$ that satisfies $F(\mu) \leq F(z)$ for all $z \in \S{T}$. A sample mean exists but is not unique in general \cite{Jain2016}. Computing a mean of a sample of time series is NP-hard \cite{Bulteau2018}. Efficient heuristics to approximate a mean of a fixed and pre-specified length are stochastic subgradient methods \cite{Schultz2018} and majorize-minimize algorithms \cite{Hautamaki2008,Petitjean2011}.

\medskip

Next, we generalize the k-means algorithm to dtw-spaces by replacing arithmetic means of vectors with sample means of time series. For this, we first introduce some notations. A partition of a set $\S{S}= \cbrace{x_1, \dots, x_n} \subseteq \S{T}$ of time series is a set $\S{C} = \args{\S{C}_1, \ldots, \S{C}_k}$ of $k$ non-empty subsets $\S{C}_i \subseteq \S{S}$, called \emph{clusters}, such that $\S{S}$ is the disjoint union of these clusters. By $\Pi_k$ we denote the set of all partitions consisting of $k$ clusters. The objective of k-means is to minimize the cost function
\[
J: \Pi_k \rightarrow \R, \quad \args{\S{C}_1, \ldots, \S{C}_k} \mapsto \sum_{i=1}^k \sum_{x \in \S{C}_i} \dtw(x, \mu_i)^2,
\]
where $x \in \S{C}_i$ if the centroid $\mu_i$ of cluster $\S{C}_i$ is the nearest-neighbor of $x$ for all $i \in [k]$. By definition of the cost function $J$, the centroid $\mu_i$ of cluster $\S{C}_i$ coincides with the sample mean of $\S{C}_i$. Thus, we can equivalently express the cost function $J$ by
\[
J \args{\S{C}_1, \ldots, \S{C}_k} = \sum_{i=1}^k F_i(\mu_i)
\]
where $F_i$ is the Fr\'echet function of cluster $\S{C}_i$. To minimize the cost function $J$, the standard k-means algorithm starts with an initial set $\mu_1, \ldots, \mu_k$ of centroids and then proceeds by alternating between two steps until termination:
\begin{enumerate}
\itemsep0em
\item \emph{Assignment step}: Assign each sample time series to the cluster of its closest centroid. 
\item \emph{Update step}: Recompute the centroids for every cluster. 
\end{enumerate}
Due to non-uniqueness of sample means, the performance of k-means does not only depend on the choice of initial centroids as in Euclidean spaces, but also on the choice of centroids in the update step. In the following, we present two examples that illustrate a peculiar behavior of k-means in dtw-spaces. 

\begin{example}\label{example:assignment-k-means}
We assume that the four time series 
\begin{align*}
x_1 &= (-1, 0, 0) & x_3 &= (0, 2, 3)\\
x_2 &= (-1, 0, 2) & x_4 &= (1, 2, 3)
\end{align*}
are partitioned into two clusters $\S{C}_1 = \cbrace{x_1, x_2}$ and $\S{C}_2 = \cbrace{x_3, x_4}$. Cluster $\S{C}_1$ has a unique mean $\mu_1 = (-1,0,1)$. Cluster $\S{C}_2$ has infinitely many means as indicated by Figure \ref{fig:non-condensed}. For every $r \in \N$ the time series 
\[
\mu_2^r = (0.5, 2, \underbrace{3, \ldots, 3}_{r-\text{times}}),
\]
is a mean of $\S{C}_2$. From Eq.~\eqref{eq:warping-identification} and the transitivity of expansions follows that $\mu_2^r$ and $\mu_2^s$ are warping identical for all $r, s \in \N$. Equation \eqref{eq:expansion-inequality} yields $\dtw(\mu_2^1, y) \leq \cdots \leq \dtw(\mu_2^r, y)$ for all time series $y \in \S{T}$ and all $r \in \N$. Hence, with increasing number $r$ of replications, the Voronoi cell of site $\mu_2^r$ decreases, whereas the Voronoi cell of site $\mu_1$ increases. 
\QED
\end{example}

\medskip

Example \ref{example:assignment-k-means} shows that the Voronoi cell of a centroid $\mu_i$ can be externally controlled by expanding or compressing $\mu_i$ without changing the Fr\'echet variation $F(\mu_i)$. This, in turn, affects the assignment step, which is based on the nearest-neighbor rule. 

\begin{figure*}[t]
\centering
 \includegraphics[width=0.85\textwidth]{./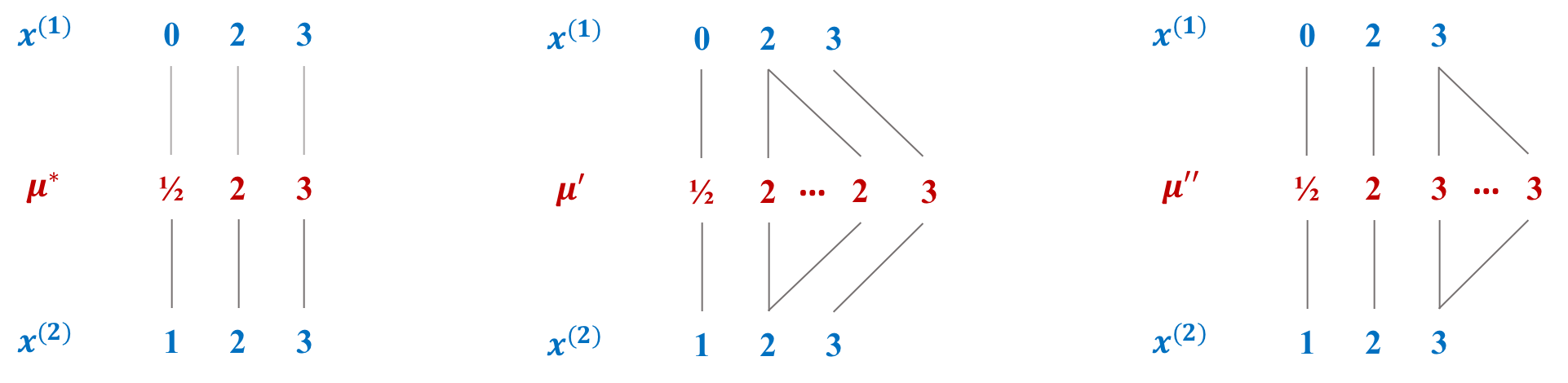}
 \caption{Means $\mu^*, \mu', \mu''$ of sample $\S{S} = \args{x^{(1)}, x^{(2)}}$. The means $\mu'$ and $\mu''$ are expansions of $\mu^*$.}
\label{fig:non-condensed}
\end{figure*}

\begin{example}\label{example:non-condensed-k-means}
Consider the two clusters of the four time series from Example \ref{example:assignment-k-means}. One way to measure the quality of the clustering $\S{C} = \cbrace{\S{C}_1, \S{C}_2}$ is by \emph{cluster cohesion} and \emph{cluster separation}. Cluster cohesion is typically defined by
\[
F_{\text{cohesion}}(\mu_1, \mu_2) = F_1(\mu_1) + F_2(\mu_2),
\]
where $F_1(z)$ and $F_2(z)$ are the Fr\'echet functions of $\S{C}_1$ and $\S{C}_2$, respectively. Thus, cluster cohesion sums the Fr\'echet variations within each cluster. Cluster separation can be defined by
\[
F_{\text{separation}}(\mu_1, \mu_2) = \dtw(\mu_1, \mu_2)^2.
\]
Cluster separation measures how well-separated or distinct two clusters are. A good clustering simultaneously minimizes cluster cohesion and maximizes cluster separation. Cluster cohesion is invariant under the choice of cluster means, because the Fr\'echet variations $F_1(\mu_1)$ and $F_2(\mu_2)$ are well-defined. The situation is different for cluster separation. From Eq.~\eqref{eq:warping-identification} follows $\dtw(\mu_1, \mu_2^1)^2 \leq \cdots \leq \dtw(\mu_1, \mu_2^r)$ for all $r \in \N$. Thus, cluster separation depends on the choice of mean $\mu_2$ of the second cluster $\S{C}_2$ (the mean $\mu_1$ of $\S{C}_1$ is unique). Figure \ref{fig:clus_sep} shows that cluster separation linearly increases with increasing number $r$ of replications. 
\begin{figure}[t]
\centering
\includegraphics[width=0.6\textwidth]{./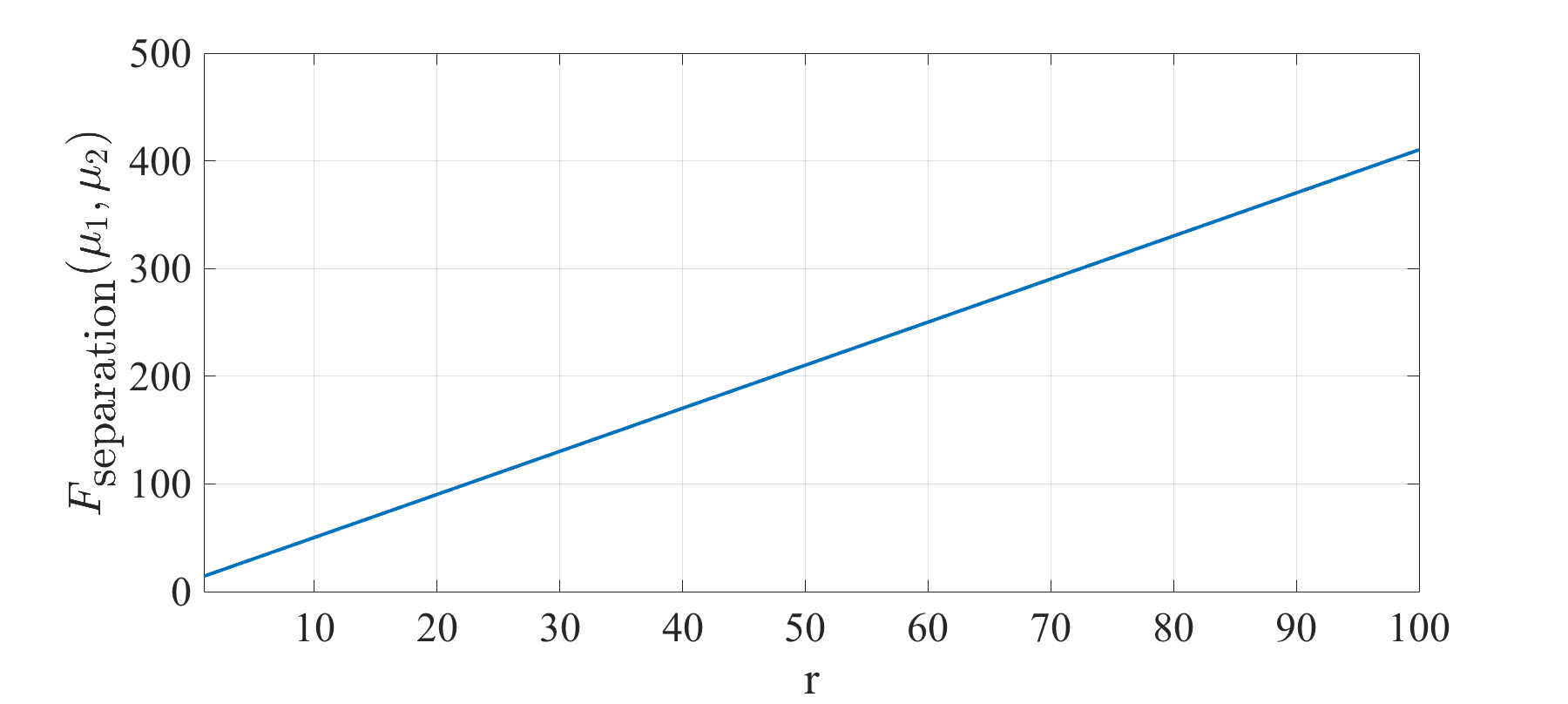}
\caption{Cluster separation $F_{\text{separation}}(\mu_1, \mu_2)$ as a function of the number $r$ of replications of the third element of $\mu^*_1 = (0.5, 2, 3)$.}
\label{fig:clus_sep}
\end{figure}
\QED
\end{example}

\medskip

Example \ref{example:non-condensed-k-means} illustrates that cluster cohesion is independent of the choice of mean but cluster separation can be maximized to infinity by expansions. In the given setting, cluster quality using cohesion and separation is not an inherent property of a clustering but rather a property of the chosen means as centroids, whose lengths can be controlled externally. Finally, we note that the scenario described in Example \ref{example:non-condensed-k-means} is not common but rather is exceptional \cite{Brill2019}. Nevertheless, subject to certain conditions, we can arbitrarily increase cluster quality without changing the clusters.

\section{Experiments}

In this experiment, we test the hypothesis of whether the twi-distance is more efficient than the dtw-distance without sacrificing for solution quality for a broad range of problems in nearest-neighbor classification. For this, we assess (i) the prevalence of reducible (non-irreducible) time series, (ii) the computation times, and (ii) the error-rates in nearest-neighbor classification

\subsection{UCR Benchmark Data}

We used the datasets from the 2015 -- 2018 version of the UCR Time Series Classification Repository \cite{Chen2015}.  

\subsection{Storage Consumption}

\begin{figure}[t]
\begin{subfigure}{0.4\textwidth} 
\includegraphics[width=\textwidth]{./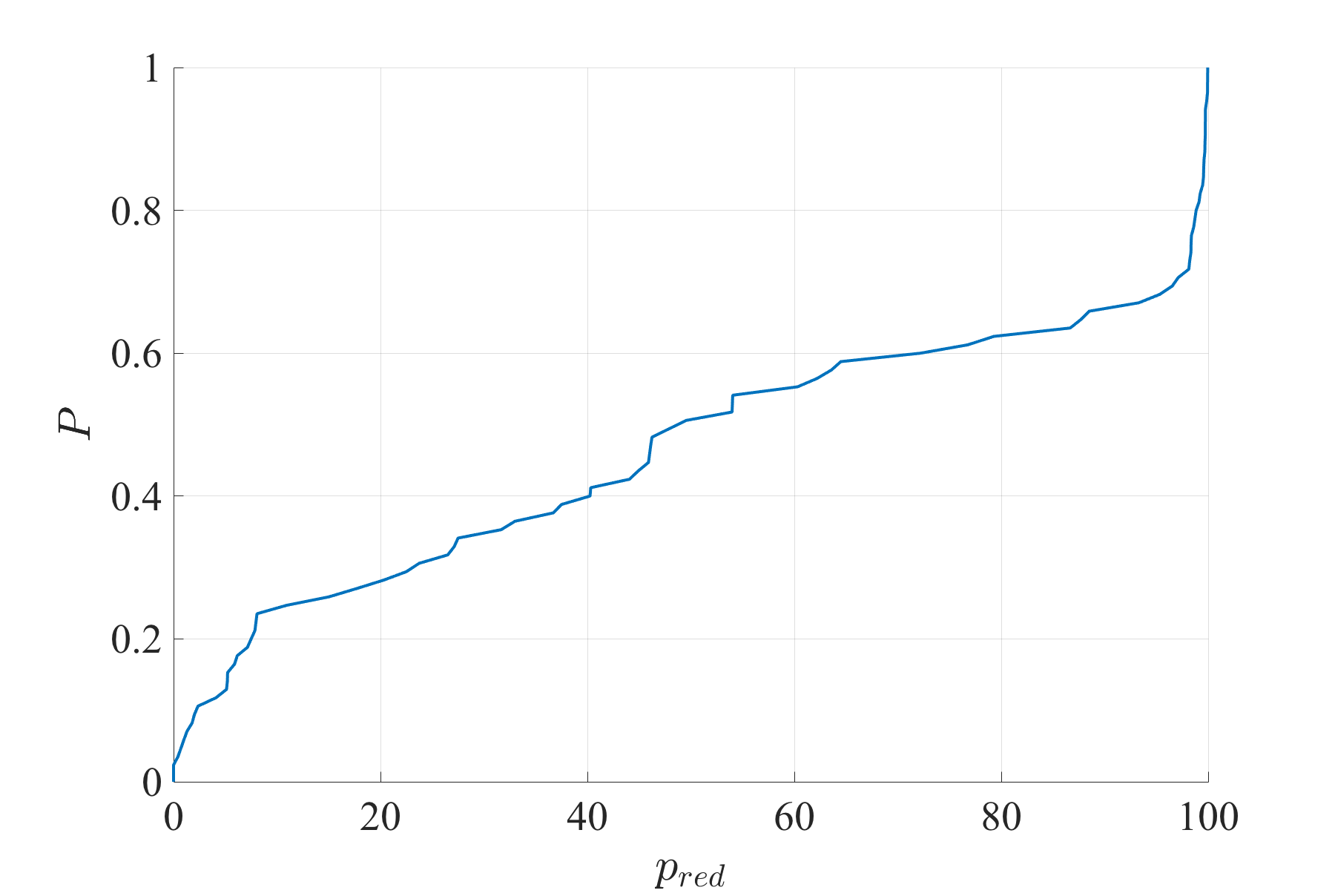}
\caption{Estimated cdf of $p_{\text{red}}$} 
\end{subfigure}
\hfill
\begin{subfigure}{0.4\textwidth}
\includegraphics[width=\textwidth]{./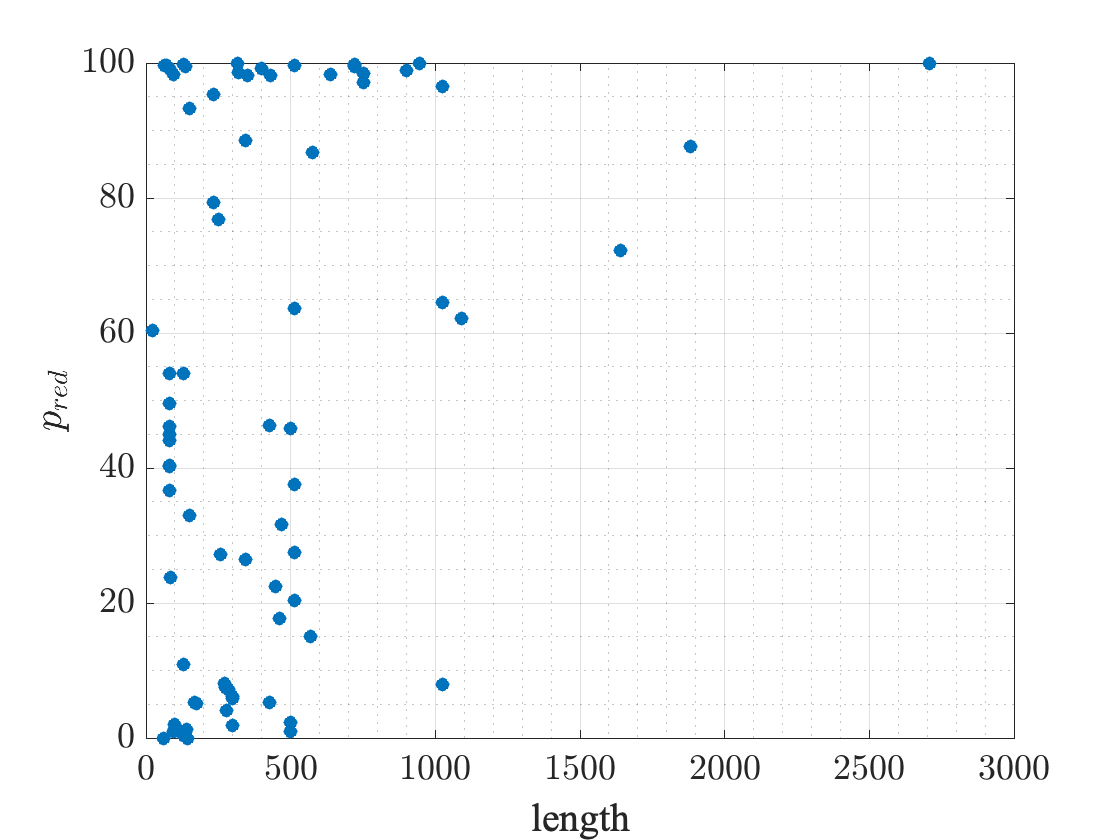}
\caption{length vs.~$p_{\text{red}}$} 
\end{subfigure}
\caption{Plot (a): Estimated cdf of the percentage $p_{\text{red}}$ of reducible time series. Plot (b): Length of time series vs.~$p_{\text{red}}$.}
\label{fig:pdf}
\end{figure}

In this section, we first study the prevalence of reducible time series. The prevalence of reducible time series provides us information on whether the twi-distance is suitable only for exceptional cases or of broader interest.  To answer this question, we computed the condensed form of every time series. A time series is reducible if it is longer than its condensed form. For every dataset, we recorded the percentage of reducible time series and the average number of deleted replications over the subset of reducible time series. 

\medskip

Figure \ref{fig:pdf}(a) shows the estimated cumulative distribution function of the percentage $p_{\text{red}}$ of reducible time series. The total number of time series over all $85$ datasets is $N = 135,\!793$. The percentage of reducible time series over all $N$ time series is $67.8 \%$. All but two out of $85$ datasets contain reducible time series. There are five datasets with at most $1 \%$ reducible time series, and $17$ datasets with at least $99 \%$ reducible time series. Among the reducible time series, the corresponding condensed forms are approximately $28.0 \%$ shorter on average. These results indicate that reducible time series are not an exceptional phenomenon but rather occur frequently. Consequently, application of the twi-distance is not restricted to exceptional cases but to a broad range of problems. 

Next, we study the relationship between the length $\ell$ of time series and the percentage $p_{\text{red}}$ of reducible time series. Figure \ref{fig:pdf}(b) shows a scatter plot of both quantities. We are interested to quantify the linear and the monotonic relationship between both variables. The Pearson correlation coefficient is $\rho= 0.32$ indicating a weak positive linear correlation.  Figure \ref{fig:pdf}(b) however suggests that this correlation could be caused by outliers. The  Spearman and Kendall correlation coefficients are $r = 0.28$ and $\tau = 0.18$, resp., indicating a weak to negligible monotonic positive monotonic correlation. We intuitively might expect that it is more likely to see two consecutive identical values in longer rather than in shorter time series. The results however do not convincingly support this intuition.

\subsection{Computation Time}

In this experiment, we compare the computation time of the twi-distance against other distances. 

\paragraph*{Experimental Protocol.}
We used the following distance functions: Euclidean (euc), dtw, bdtw, and twi.\footnote{We discuss the block dtw (bdtw) distance in Section \ref{subsec:compIssues}.}  We excluded  the \emph{StarLightCurves} and \emph{UWaveGestureLibraryAll} datasets for computational reasons. From every of the remaining $83$ UCR datasets, we randomly selected $100$ pairs of  time series. Then we computed the distance of each pair $100$-times and recorded the average time giving a total of $83 \times 100$ average times for every distance function.

\paragraph*{Results and Discussion.}

\begin{figure}[t]
	\begin{subfigure}{0.4\textwidth} 
		\includegraphics[width=\textwidth]{./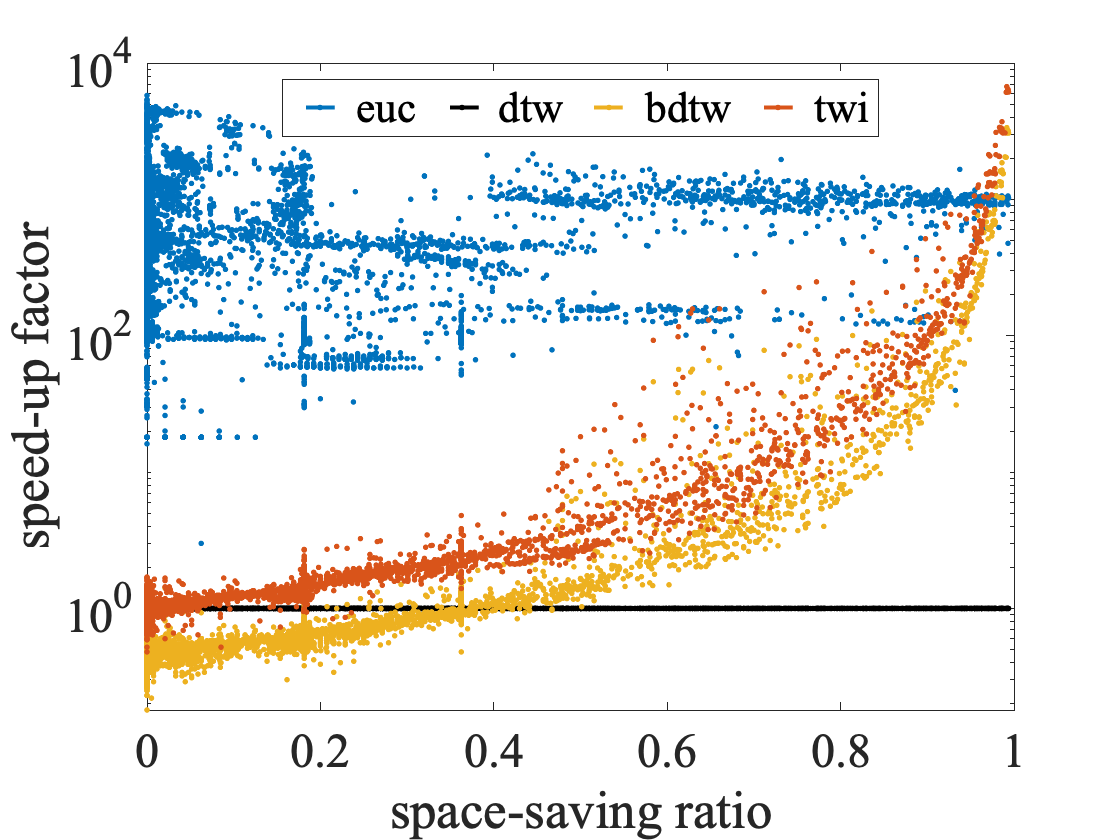}
		\caption{all vs.~dtw} 
	\end{subfigure}
	\hfill
	\begin{subfigure}{0.4\textwidth} 
		\includegraphics[width=\textwidth]{./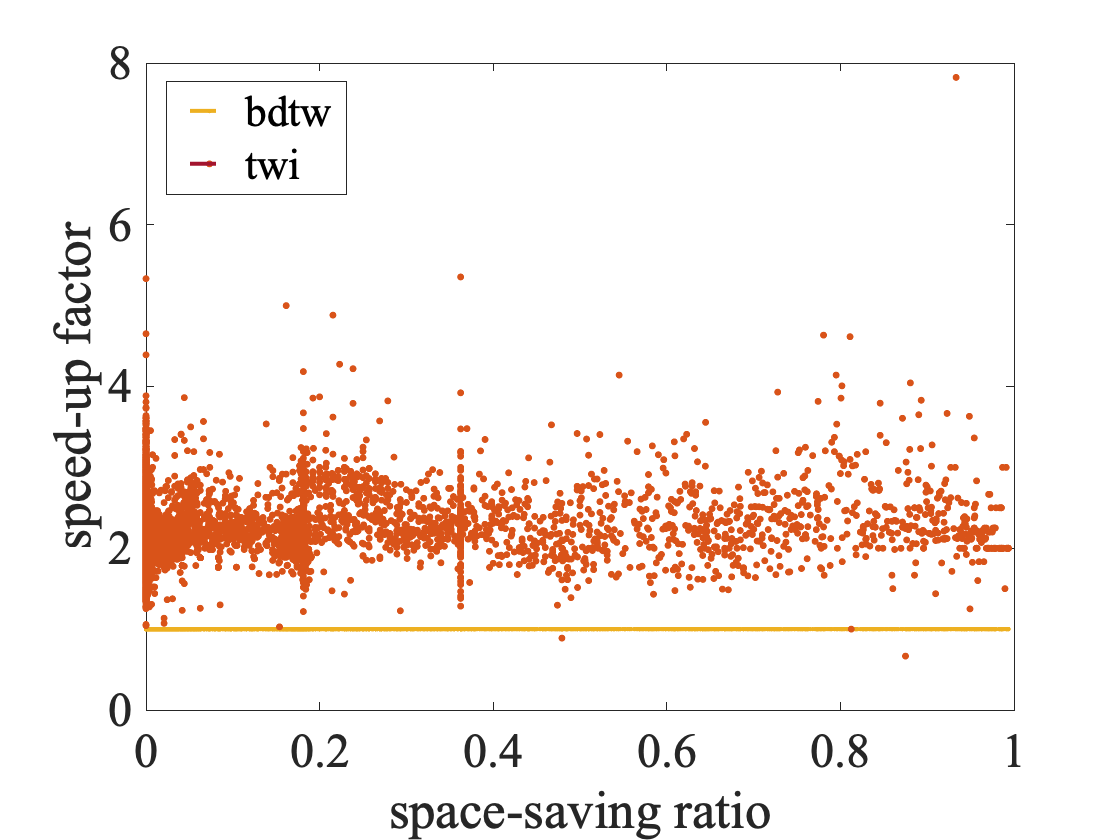}
		\caption{twi vs.~bdtw} 
	\end{subfigure}
\caption{Speed-up factor as a function of the space-saving ratio. The speed-up factor of Fig.~\ref{fig:ucr_times}(a) is in logarithmic scale.}
\label{fig:ucr_times}
\end{figure}

Figure \ref{fig:ucr_times} plots the computation times of every distance measure as a function of the (average) space-saving ratio. The computation times are shown as speed-up factors of all distances over the dtw-distance (left) and of the twi-distance over the bdtw-distance (right). The space-saving ratio for two time series $x$ and $y$ with condensed forms $x^*$ and $y^*$, resp., is defined by
\[
\rho_{\text{ss}} = 1-\frac{\abs{x^*}+\abs{y^*}}{2(\abs{x} + \abs{y})}.
\]
The shorter the condensed forms the more space can be saved with respect to the original time series. 

First, note that computing the twi-distance is never slower than computing the dtw- or bdtw-distance by construction. Variations in time, however, occur due to Java's just-in-time compilation. The speed-up factors of twi and bdtw over the dtw-distance grow with increasing space-saving ratio. At very high space-saving ratios computing the twi-distance is up to $6,000$ times faster than computing the dtw-distance and is even faster than computing the Euclidean distance. In addition, computing the twi-distance is about $2$ to $4$ times faster than computing the bdtw-distance. Computing the  bdtw-distance is slower, because additional operations are required to approximate the dtw-distance. The results suggest that the twi-distance is a more efficient alternative to the dtw-distance.

\subsection{Nearest-Neighbor Classification}

\renewcommand{\figurename}{Box}
\begin{figure}[b]
\centering
\colorbox{gr}{\textcolor{black}{
\begin{minipage}{\textwidth}
\footnotesize
\textbf{A Comment on the Standard Experimental Protocol of UCR Time Series Classification}
\\[0.5em]
\phantom{\ \ \ } 
It is common practice to apply hold-out validation using a pre-defined train-test split followed by a null hypothesis significance test to ensure statistical validity of the results.  There are strong arguments against both, hold-out validation on UCR datasets \cite{Gay2018} and null hypothesis significance testing \cite{Benavoli2018,Berrar2018}. 
\\[0.5em]
\phantom{\ \ \ } Currently, there is no established standard to ensure statistical validity of the results. As an alternative to null hypothesis significance testing, Bayesian hypothesis testing has been suggested \cite{Benavoli2018}. Instead of applying hold-out validation, resampling strategies such as bootstrap and  cross-validation are more likely to give \enquote{better} estimates of the expected error rate. However, according to \cite{Dau2018}, there is apparently at least one dataset for which resampling strategies can result in \enquote{twinning} training and test examples. We did not consider this dataset in our experiments. 
\\[-0.6em]
\end{minipage}}}
\vspace{-0.3cm}
\caption{}
\label{fig:experimental-protocol}
\end{figure}
\renewcommand{\figurename}{Figure}

In this experiment, we assess the performance of the twi-distance in nearest-neighbor (nn) classification.

\paragraph*{Experimental Protocol.}
We excluded  the \emph{StarLightCurves} and \emph{UWaveGestureLibraryAll} datasets for computational reasons. We applied $10$-fold cross-validation on the remaining $83$ datasets and recorded the average accuracy over all folds. Bayesian hypothesis testing was used to ensure statistical validity of the results. We compared the following distance functions to nn classification: Euclidean (euc), dtw, bdtw$_{\text{lb}}$, bdtw$_{\text{ub}}$, and twi, where bdtw$_{\text{lb}}$ and bdtw$_{\text{ub}}$ refer to the lower and upper bound of bdtw, respectively. Note that this experimental protocol does not follow the standard protocol in the literature. In Box \ref{fig:experimental-protocol}, we point to arguments why we applied a different protocol.

\paragraph*{Results and Discussion.}

\begin{table}[t]
\small
\begin{tabular}{l@{\qquad}rrrrr@{\qquad\qquad}l@{\qquad}rrrrr}
\toprule
&\multicolumn{5}{c}{row $\gg$ column}&&\multicolumn{5}{c}{row $\ll$ column}\\
&euc&dtw&bdtw$_{\text{lb}}$&bdtw$_{\text{ub}}$&twi&&euc&dtw&bdtw$_{\text{lb}}$&bdtw$_{\text{ub}}$&twi\\
\midrule
euc&0.0&36.1&32.5&34.9&32.5&euc&0.0&48.2&48.2&47.0&48.2\\
dtw&48.2&0.0&7.2&8.4&8.4&dtw&36.1&0.0&4.8&7.2&10.8\\
bdtw$_{\text{lb}}$&48.2&4.8&0.0&7.2&1.2&bdtw$_{\text{lb}}$&32.5&7.2&0.0&6.0&3.6\\
bdtw$_{\text{ub}}$&47.0&7.2&6.0&0.0&6.0&bdtw$_{\text{ub}}$&34.9&8.4&7.2&0.0&7.2\\
twi&48.2&10.8&3.6&7.2&0.0&twi&32.5&8.4&1.2&6.0&0.0\\
\bottomrule
\end{tabular}
\caption{Percentage of datasets for which row-classifier was practically better (left) / worse (right) than column-classifier.}
\label{tab:results-ucr-wins-losses}
\end{table}

Figure \ref{fig:ucr-results} shows the pairwise posteriors for the Bayesian sign-rank test with a rope of $0.5 \%$. Box \ref{fig:bayesian-test} briefly reviews Bayesian hypothesis testing and explains how to interpret the results in Figure \ref{fig:ucr-results}. Table \ref{tab:results-ucr-wins-losses} summarizes the percentages of pairwise practical wins and practical losses over all $83$ datasets. The meanings of practical wins and losses follow from the explanations in Box \ref{fig:bayesian-test}.

There are two main observations from the results of the Bayesian test. First, warping-based nn-classifiers are practically better than the euc-nn classifier with more than $98 \%$ probability.  Second, the warping-based nn-classifiers are practically equivalent. Table \ref{tab:results-ucr-wins-losses} shows that the euc-nn classifier is still a good and fast alternative for many problems and  that the twi-nn classifier has slight but not statistically validated advantages over all warping-based nn-classifiers.

The results are interesting because they challenge common practice in applying the dtw-distance: On the one hand, we postulate that the dtw-distance can cope with temporal variations. On the other hand, we store and process reducible time series with constant segments as if the dtw-distance is not capable to cope with temporal variation. This  inconsistency in applying the dtw-distance sacrifices space-saving and computational benefits without gaining substantially improved error rates in general. The dtw-nn classifier is better than the twi-nn classifier by a margin of at least $1\%$ only on four datasets with a maximum of $2.5\%$ on \emph{Lighting2}. Thus, there are a few datasets for which the dtw-distance should be preferred over the twi-distance. Consequently, the practical inconsistency of the dtw-distance lacks a general rationale in the particular case of nearest-neighbor classification.

\renewcommand{\figurename}{Box}
\begin{figure}[h]
\centering
\colorbox{gr}{\textcolor{black}{
\begin{minipage}{\textwidth}
\footnotesize
\textbf{Bayesian Hypothesis Testing}
\\[0.5em]
\phantom{\ \ \ } The description of Bayesian hypothesis testing follows \cite{Benavoli2018}.  Consider an experiment that consists of applying two classifiers $c_1$ and $c_2$ on $n$ problems. Suppose that $\mu$ is the mean difference of accuracies between both classifiers over all $n$ datasets. In Bayesian analysis, such an experiment is summarized by the posterior distribution  describing the distribution of $\mu$. By querying the posterior distribution, we can evaluate the probability that  $c_1$ is better/worse than $c_2$. Formally, $P(c_1 > c_2)$ is the posterior probability that $\mu > 0$. In other words, $P(c_1 > c_2)$ is  the probability that $c_1$ is better than $c_2$. Note that such a statement is not possible with null-hypothesis tests. The null-hypothesis test only makes a statement on the probability of the data given that the classifiers are the same and does not say anything about the probability that $c_1$ is better than $c_2$.
\\[0.2em]
\phantom{\ \ \ } Depending on the application problem, we may regard tiny differences in accuracy between $c_1$ and $c_2$ as negligible in practice.  Though negligible in practice,  the smallest differences can be made statistically significant in null-hypothesis testing by collecting enough data. Thus, statistical significance does not imply practical significance. With Bayesian tests, we can decide for a region of \emph{practical equivalence} defined by a problem-dependent \emph{rope} $r \geq 0$. Then the posterior $P(c_1 = c_2)$ is the probability that $\abs{\mu}  \leq r$, that is the probability that both classifiers are practically equivalent. In addition, $P(c_1 \gg c_2)$ is the probability that $c_1$ is practically better than $c_2$ and $P(c_1 \ll c_2)$ is the probability that $c_1$ is practically worse than $c_2$.
\\[0.2em]
\phantom{\ \ \ } Figure \ref{fig:ucr-results} visualizes the pairwise posteriors for the Bayesian sign-rank test with a rope of $0.5 \%$ for every pair of distances. The vertices of a triangle represent the rope (top), classifier $c_1$ (bottom-left), and classifier $c_2$ (bottom-right). Each vertex is associated with a region.  A region represents the case where its associated vertex is more probable than both other vertices together. The fraction of points falling into a region estimates the probability of the associated vertex. Roughly, the points are generated by Monte Carlo sampling from a Dirichlet distribution to obtain the posterior distribution.
\\[-0.7em]
\end{minipage}}}
\vspace{-0.3cm}
\caption{}
\label{fig:bayesian-test}
\end{figure}
\renewcommand{\figurename}{Figure}

\begin{figure}[t]
	\begin{subfigure}{0.24\textwidth} 
		\includegraphics[width=\textwidth,trim= 3.5cm 1.5cm 3cm 2.5cm,clip]{./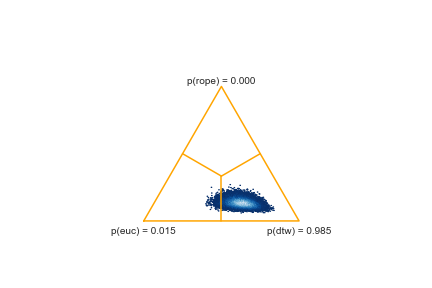}
		\caption{euc vs.~dtw} 
	\end{subfigure}
	\begin{subfigure}{0.24\textwidth} 
		\includegraphics[width=\textwidth,trim= 3.5cm 1.5cm 3cm 2.5cm,clip]{./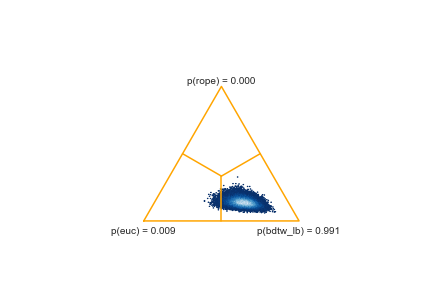}
		\caption{euc vs.~bdtw$_{\text{lb}}$} 
	\end{subfigure}
	\begin{subfigure}{0.24\textwidth} 
		\includegraphics[width=\textwidth,trim= 3.5cm 1.5cm 3cm 2.5cm,clip]{./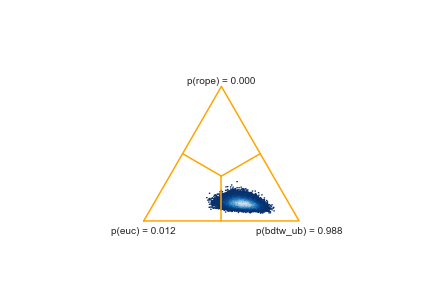}
		\caption{euc vs.~bdtw$_{\text{ub}}$} 
	\end{subfigure}
	\begin{subfigure}{0.24\textwidth} 
		\includegraphics[width=\textwidth,trim= 3.5cm 1.5cm 3cm 2.5cm,clip]{./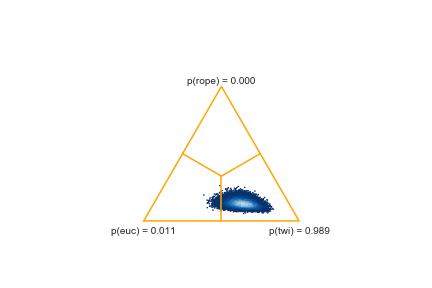}
		\caption{euc vs.~twi} 
	\end{subfigure}
	\begin{subfigure}{0.24\textwidth} 
		\includegraphics[width=\textwidth,trim= 3.5cm 1.5cm 3cm 2.5cm,clip]{./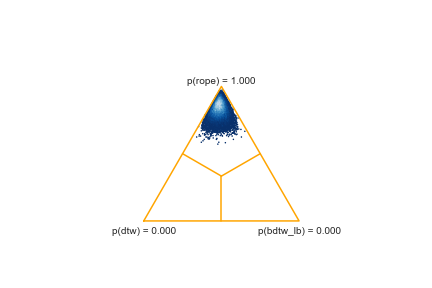}
		\caption{dtw vs.~bdtw$_{\text{lb}}$} 
	\end{subfigure}
	\begin{subfigure}{0.24\textwidth} 
		\includegraphics[width=\textwidth,trim= 3.5cm 1.5cm 3cm 2.5cm,clip]{./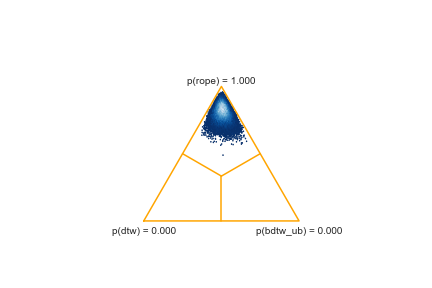}
		\caption{dtw vs.~bdtw$_{\text{ub}}$} 
	\end{subfigure}
	\begin{subfigure}{0.24\textwidth} 
		\includegraphics[width=\textwidth,trim= 3.5cm 1.5cm 3cm 2.5cm,clip]{./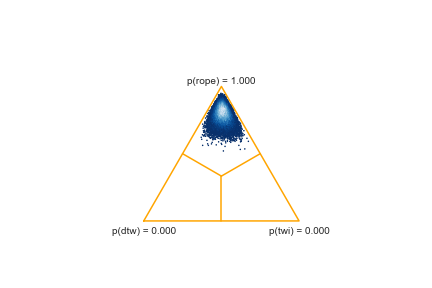}
		\caption{dtw vs.~twi} 
	\end{subfigure}
	\begin{subfigure}{0.24\textwidth} 
		\includegraphics[width=\textwidth,trim= 3.5cm 1.5cm 3cm 2.5cm,clip]{./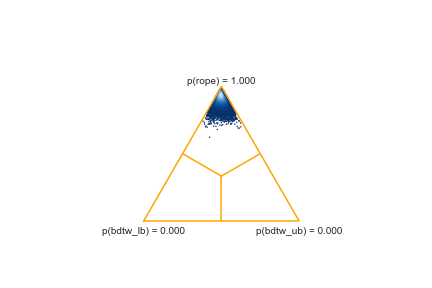}
		\caption{bdtw$_{\text{lb}}$ vs.~bdtw$_{\text{ub}}$} 
	\end{subfigure}
	\begin{subfigure}{0.24\textwidth} 
		\includegraphics[width=\textwidth,trim= 3.5cm 1.5cm 3cm 2.5cm,clip]{./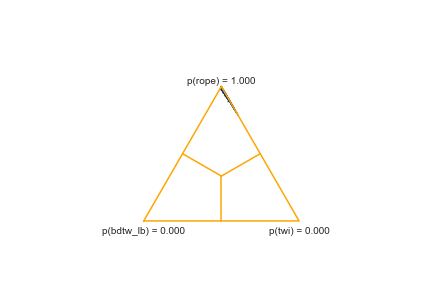}
		\caption{bdtw$_{\text{lb}}$ vs.~twi} 
	\end{subfigure}
	\begin{subfigure}{0.24\textwidth} 
		\includegraphics[width=\textwidth,trim= 3.5cm 1.5cm 3cm 2.5cm,clip]{./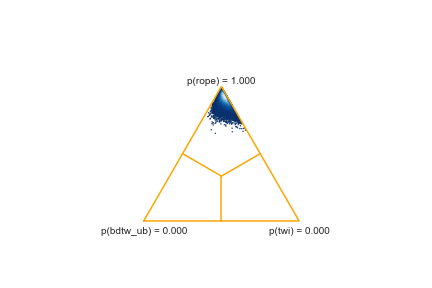}
		\caption{bdtw$_{\text{ub}}$ vs.~twi} 
	\end{subfigure}
	\begin{subfigure}{0.48\textwidth} 
	\end{subfigure}
	\caption{Posterior of first classifier vs.~second classifier for the Bayesian sign-rank test using a rope of $0.5 \%$.} 
	\label{fig:ucr-results}
\end{figure}

\subsection{Power Consumption Classification}

In this experiment, we assess the performance of the twi-nn classifier on  power consumption data consisting of long time series with large constant segments.

\paragraph*{Data.}
The 2nd version of the  Almanac of Minutely Power dataset (AMPds2) provided by \cite{Makonin2016} includes electricity measurements of a house in the Greater Vancouver metropolitan area in British Columbia (Canada).\footnote{URL of AMPds2: \url{https://dataverse.harvard.edu/dataset.xhtml?persistentId=doi:10.7910/DVN/FIE0S4}.} The dataset contains electricity measurements over a period of two years at one minute intervals from $21$ power sub-meters giving a total of $1,051,200$ readings from each sub-meter. In this study, we consider the real power P measured in VAR. 

\begin{table}[ph]
\centering
\begin{tabular}{l@{\qquad}rrrr}
\toprule
&1d&7d&14d&30d\\
\midrule
length of time series & $1,440$ & $10,080$ & $20,160$ & $43,200$\\
train (test) size & 6935 & 997 & 503 & 237\\
space-saving rate & 73.8 & 73.8 & 73.8 & 73.8 \\
\bottomrule
\end{tabular}
\caption{Characteristic properties of the AMPds2 dataset for different numbers $d$ of days.}
\label{tab:ampds2-characteristics}
\end{table}

\paragraph*{Distance Functions.}
We applied the following distance functions to nearest-neighbor classification: euc, dtw, bdtw$_{\text{lb}}$, bdtw$_{\text{ub}}$, twi, opt-dtw, and opt-twi. The opt-distances optimize nearest-neighbor search by applying the following techniques: (i) Sakoe-Chiba band \cite{Sakoe1978} of radius $0.1$, (iii) lower-bounding with LB\_Keogh \cite{Keogh2002} followed by LB\_Lemire \cite{Lemire2009}, and (iii) early abandoning during dtw-computation. To apply the lower-bounding techniques, query and candidate time series should be of the same length. We aligned the candidates to the query either by truncation or expansion.

\paragraph*{Experimental Protocol.}
The task is to assign segments of power readings to their corresponding sub-meters. The sub-meters form $21$ classes. We split the 100K readings of each sub-meter $s$ into sequences lasting $d$ days, where $d \in \cbrace{1, 7, 14, 30}$ giving a total of $N_d(s)$  time series $x_1(s), \ldots, x_{N_d}(s)$.  We partitioned the $N_d$ time series into a training and test set. Time series with odd index went into the training set and all other time series went into the test set. Table \ref{tab:ampds2-characteristics} shows the length of the time series and the sizes of the training and tests. For every $d$ and every distance measure, we applied the nearest-neighbor classifier and recorded the error rate and the total time to classify all test examples.

\paragraph*{Results and Discussion.}

Table \ref{tab:results-ampds2} shows the error rates and computation times of the nearest-neighbor classifiers. Note that the computation times are wall-clock times taken on a multi-user machine and are therefore only rough upper bounds of the actual computation times.

\begin{table}[t]
\centering
\begin{tabular}{l@{\qquad}lrrrrrrrr}
\toprule
& \multicolumn{4}{c}{error rate} & & \multicolumn{4}{c}{time (hh:mm:ss)}\\
&1d&7d&14d&30d& \qquad &1d&7d&14d&30d\\
\midrule
euc&51.68&74.92&75.94&75.53&&0:24:29&0:03:30&0:01:47&0:00:48\\
dtw&20.36&6.52	&7.36&5.91	&&65:04:21&66:54:09&67:58:13&69:10:28\\
bdtw$_{\text{lb}}$&20.53&6.12&6.76&5.91&&11:58:07&11:48:49&11:51:05&11:49:37\\
bdtw$_{\text{ub}}$&20.53&6.42&6.96&5.91&&11:54:24&12:03:06&12:12:36&12:04:41\\
twi&20.69&6.72&7.55&5.91&&6:15:32&6:32:18&6:51:55&6:52:44\\
opt-dtw&20.36&6.52&7.36&5.91&&26:14:50&33:23:09&35:08:48&35:40:14\\
opt-twi&20.69&6.72&7.55&5.91&&2:24:52&2:57:00&3:12:45&3:14:02\\
\bottomrule
\end{tabular}
\caption{Error rates and computation time of seven nearest-neighbor classifiers on AMPds2 data.}
\label{tab:results-ampds2}
\end{table}

The first observation is that the Euclidean distance and the dtw-distance are unsuitable for the given classification problem. The Euclidean distance is fast to compute (a minute up to half an hour) but results in too high error rates ($50\%-75\%$), whereas the dtw-distance results in substantially lower error rates ($6\%-20\%$) but is slow to compute (almost three days). Even the opt-dtw-nn classifier requires up to one and a half day to complete its computations which is about three times slower than the next slowest classifier (bdtw-nn). 

The second observation is that the error rates of the warping-based classifiers are comparable with slight but not statistically validated advantages of more than $0.5\%$ for the bdtw$_{\text{lb}}$-nn classifier on the $14$ days dataset.

The third observation is that the twi-nn and opt-twi-nn classifiers are fastest among the warping-based classifiers. The twi-nn classifier is about twice faster than the  bdtw-nn classifiers, about four to five times faster than the opt-dtw-nn classifier and more than ten times faster than the dtw-classifier. The speed-up factor of the opt-twi-nn classifier over the twi-nn classifier is $3$. The space saving factor of twi over dtw is $73.8\%$ (see Table \ref{tab:ampds2-characteristics}).

These findings essentially confirm the findings on the UCR datasets and suggest that the twi-distance is a fast and space-saving alternative to the dtw-distance and its bdtw-approximations without substantially sacrificing solution quality. In addition, the results indicate that even optimized nearest-neighbor search remains an open issue for warping-based distances. In such cases, we could resort to heuristics such as FastDTW \cite{Salvador2007} that can be also applied to irreducible time series, that is as approximations to the twi-distance.

\subsection{Limitations of the DTW- and TWI-Distance}

In this section, we study basic limitations of the dtw- and twi-distance in nearest-neighbor classification.

\paragraph*{Data.}
We generated a synthetic dataset $\S{D}$ consisting of $200$ time series of length $100$ from two classes $\S{Y} = \cbrace{\pm 1}$ such that each class has $100$ elements. Figure \ref{fig:fail} shows examples of time series from both classes and their corresponding condensed forms. The elements $x_i$ of a time series $x = (x_1, \ldots, x_{100})\in \S{D}$ of class $y \in \S{Y}$ are of the form
\begin{align}\label{eq:synthetic-data}
x_i = \begin{cases}
b_y + \varepsilon_b & i \notin \S{I}(x)\\
c_y + \varepsilon_c & i \in \S{I}(x)
\end{cases},
\end{align}
where $b_y$ is the class-dependent base value of the constant segment, $c_y$ is the class-dependent height of the cylinder, $\S{I}(x) \subset [100]$ is a randomly selected interval of fixed length $10$, and $\varepsilon_{b}, \varepsilon_c \in U(\theta)$ are uniformly distributed noise terms with values from the interval $[0, \theta]$. 

\begin{figure}[t]
\includegraphics[width=0.45\textwidth]{./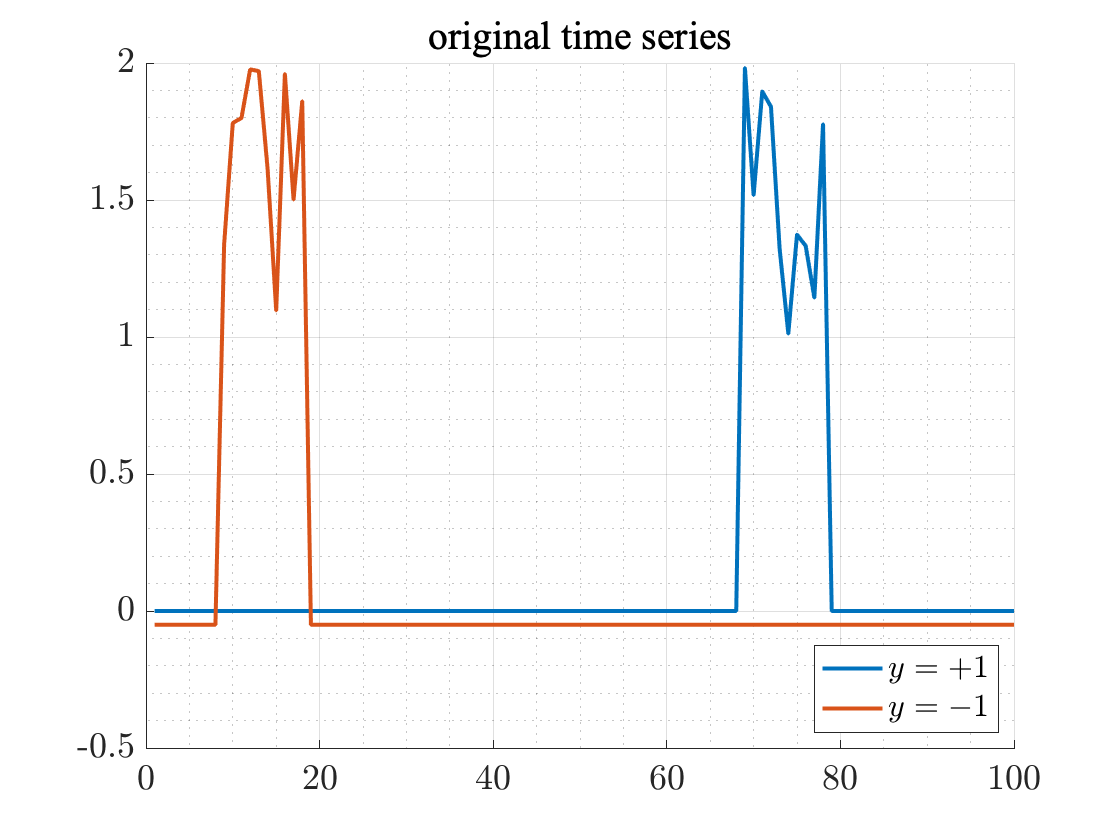}
\hfill
\includegraphics[width=0.45\textwidth]{./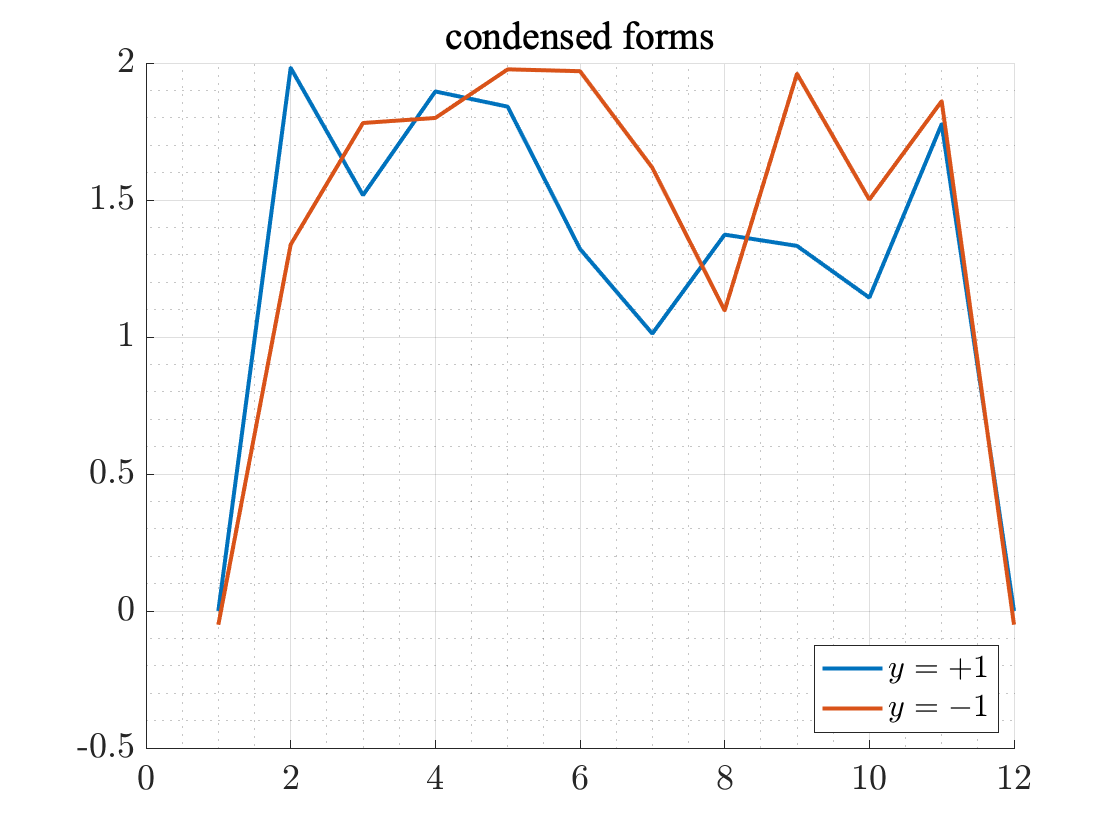}
\caption{Synthetic time series of both classes (left) and their corresponding condensed forms (right).}
\label{fig:fail}
\end{figure}

\paragraph*{Experimental Protocol.}
A single experiment generates a dataset according to Eq.~\eqref{eq:synthetic-data} and then splits the dataset into a train and test set with balanced class distribution. Finally, we applied the nearest-neighbor classifier using the euc-, dtw-, and twi-distance. We repeated this experiment $50$-times and reported the average error rates of each classifier.

\paragraph*{Results and Discussion.}
Table \ref{tab:res-limitations} presents the results for five different configurations of the data generating parameters. The overall observation is that the euc-nn classifier failed because it is sensitive to temporal variations of the cylinder. 

\medskip

\noindent
Experiment E1 \& E2:
The twi-nn classifier failed in experiment E1 with an error rate of $37 \%$, because the base value $b_y$ of the constant segment is the discriminative feature to distinguish between both classes but the noise in the cylinder contributed most to the twi-distance.  In contrast, the dtw-distance perfectly separated both classes because the accumulated differences between the constant segments from different classes contributed most to the distance and outweighed the noisy differences of the cylinders. These findings indicate that the twi-distance can fail when differences in constant segments matter. In light of the failure of the euc-nn classifier, this experiment provides an example for which warping is important but warping-invariance is not desired. 

The noise $\varepsilon_c$ in experiment E1 is high because it can take values up to $100\%$ of the cylinder height $c_y$. In experiment E2, we used the same configuration as in experiment E1 but reduced the maximum contribution of the  noise term $\varepsilon_c$  to $10\%$ of $c_y$. The results show that both warped-nn classifiers perfectly separate both classes. The findings of both experiments indicate that the twi-nn classifier is more sensitive to noise than the dtw-nn classifier given that the base value  $b_y$  substantially contribute as discriminant between both classes.

\medskip

\noindent
Experiment E3 \& E4: In experiment E1 \& E2, the base value $b_y$ was the discriminative feature, whereas the cylinder heights $c_y$ were identical. In experiments E3 \& E3, we reversed the roles of $b_y$ and $c_y$. In experiment E3, the dtw-nn classifier failed with an error rate of $ 31\%$ and the twi-nn classifier perfectly separated both classes due to the opposite reasons as in experiment E1. These results suggest that the dtw-distance can fail when differences in constant segments are irrelevant. This experiment provides an example for which warping alone is insufficient but warping-invariance matters. 

In experiment E4, we reduced the noise $\varepsilon_b$ to $\theta_b = 0.1$. For similar reasons as in experiment E2, we observe that both warped-nn classifiers perfectly separate both classes.  

\medskip

\noindent
Experiment E5:
In time series classification, it is common practice to normalize the data. In experiment E6, we applied a z-transformation on the time series from experiment E1. The euc-nn and twi-nn classifiers failed for the same reasons as in the previous experiment. The dtw-nn classifier failed, because the z-transformation eliminated the differences between constant segments such that both classes became indistinguishable. This experiment is one of the rare examples, where normalization deteriorates discriminative features

\bigskip

\begin{table}
\centering
\begin{tabular}{r@{\quad}cccrlcc@{\quad}rrr}
\toprule
id & $b_{+1}$ & $b_{-1}$ & $\theta_b$ & $c_{+1}$ & $c_{-1}$ & $\theta_c$ & z-transform& euc-nn & dtw-nn & twi-nn\\
\midrule
E1& 0 & $-$0.05 & 0.0 & 1.0 & 1.0 & 1.0 & no & 50 \% & 0 \%& 37 \%\\ 
E2& 0 & $-$0.05 & 0.0 & 1.0 & 1.0 & 0.1 & no & 35 \% & 0 \%& 0 \%\\
\midrule
E3& 0 & 0 & 1.0 & 1.0 & 1.05 & 1.0&  no & 50 \% & 31 \% & 0 \%  \\
E4& 0 & 0 & 0.1 & 1.0 & 1.05 & 1.0&  no & 49 \% & 0 \% & 0 \%  \\
\midrule
E5& 0 & $-$0.05 & 0.0 & 1.0 & 1.0 & 1.0 & yes & 51 \% & 45 \%& 45 \%\\ 
\bottomrule
\end{tabular}
\caption{Error rates of the three nn-classifiers on different variants of the cylinder-peak data. .}
\label{tab:res-limitations}
\end{table}
\bigskip

To summarize, there are cases for which the twi-distance is inappropriate and not competitive with the dtw-distance and vice versa. The twi-distance is inappropriate when constant segments are crucial for discriminating both classes but are outweighed by other factors. Conversely, the dtw-distance is inappropriate when constant segments are irrelevant and there differences are not outweighed by relevant factors. We could not detect either case in the UCR archive. 

\section{Conclusion}

The dtw-distance fails to satisfy the triangle inequality and the identity of indiscernibles. As a consequence, the dtw-distance is not warping-invariant though otherwise stated in some publications. The lack of warping-invariance of the dtw-distance contributes to strange behavior and practical inconsistencies. For example, the strange behavior of the 
nearest-neighbor rule affects other time series mining methods such as k-means clustering. Subject to certain conditions, we can arbitrarily increase the cluster quality of k-means without changing the clusters. The practical inconsistency refers to the discrepancy between the underlying idea of the dtw-distance and how the dtw-distance is actually used in applications. The literature postulates that the dtw-distance can cope with temporal variations but stores and processes time series in a form as if the dtw-distance cannot cope with such variations. In doing so, the dtw nearest-neighbor classifier sacrifices storage-savings and computational efficiency without gaining substantially better error rates. The proposed twi-distance is a warping-invariant semi-metric that resolves both issues, the strange behavior of the dtw-distance and its practical inconsistency. The twi-distance requires less storage and is faster to compute than the dtw-distance without sacrificing for solution quality in nearest-neighbor classification for a broad range of problems. A limitation of the twi-distance are reducible time series whose constant segments are important discriminative features for classification. We could not detect such features among the $83$ datasets from the UCR archive. The proposed contribution challenges past use of the dtw-distance in nearest-neighbor classification and instead suggests the more efficient and theoretically more consistent twi-distance as default measure. Future research aims at studying to what extent the findings of this article carry over to time series mining techniques other than nearest-neighbor classification.

\paragraph*{\textbf{Acknowledgements}.} B.~Jain was funded by the DFG Sachbeihilfe \texttt{JA 2109/4-2}.

\appendix
\small
\section{Semi-Metrification of DTW-Spaces}\label{sec:theory}

This section converts the dtw-space to a semi-metric space and shows that the canonical extension of the proposed semi-metric is warping-invariant. The technical treatment is structured as follows: Section \ref{subsec:words} studies expansions, compressions, condensed forms, and irreducibility in terms of words over arbitrary alphabets. Sections \ref{subsec:warping-walks} and \ref{subsec:properties-of-warping-walks} propose warping walks as a more general and convenient concept than warping paths. In addition, we study some properties of warping walks. Section \ref{subsec:semi-metrification} constructs a semi-metric quotient space induced by the dtw-space as described in Section \ref{sec:results}. Finally, Section \ref{subsec:warping-invariance} proves that the canonical extension of the quotient metric is warping-invariant. 

For the sake of readability, this section is self-contained and restates all definitions and notations mentioned in earlier sections. Throughout this section, we use the following notations: 

\begin{notation}
The set $\R_{\geq 0}$ is the set of non-negative reals, $\N$ is the set of positive integers, and $\N_0$ is the set of non-negative integers. We write $[n] = \cbrace{1, \ldots, n}$ for $n \in \N$.
\end{notation}

\subsection{Prime Factorizations, Expansions, and Condensed Forms of Words}\label{subsec:words}

The key results of this article apply the same auxiliary results on warping paths and on time series. Both, warping paths and time series, can be regarded as words over different alphabets. Therefore, we derive the auxiliary results on the more general structure of words. We first propose the notions of prime words and prime factorization of words. Then we show that every word has a unique prime factorization. Using prime factorizations we define expansions, compressions, irreducible words, and condensed forms. Finally, we study the relationships between these different concepts. 

\medskip

Let $\S{A}$ be a set, called \emph{alphabet}. We impose no restrictions on the set $\S{A}$. Thus, the elements of $\S{A}$ can be symbols, strings, trees, graphs, functions, reals, feature vectors, and so on. A \emph{word} over $\S{A}$ is a finite sequence $x = x_1 \cdots x_n$ with elements $x_i \in \S{A}$ for all $i \in [n]$. The set $\S{A}(x) = \cbrace{x_1, \ldots, x_n}$ is the set of elements contained in $x$. We denote by $\S{A}^*$ the set of all words over $\S{A}$ and by $\varepsilon$ the empty word. We write $\S{A}^+ = \S{A} \setminus \cbrace{\varepsilon}$ to denote the set of non-empty words over $\S{A}$.

Let $x = x_1 \cdots x_m$ and $y = y_1 \cdots y_n$ be two words over $\S{A}$. The \emph{concatenation} $z = xy$ is a word $z = z_1 \cdots z_{m+n}$ with elements
\[
z_i = \begin{cases}
x_i & 1 \leq i \leq m\\
y_{m-i} & m < i \leq m+n
\end{cases}
\]
for all $i \in [m+n]$. The concatenation is an associative operation on $\S{A}^*$. Hence, we can omit parentheses and write $xyz$ instead of $(xy)z$ or $x(yz)$ for all $x, y, z \in \S{A}^*$. For a given $n \in \N_0$, we write $x^n$ to denote the $n$-fold concatenation of a word $x \in \S{A}^*$. We set $x^0 = \varepsilon$ for every $x \in \S{A}^*$ and $\varepsilon^n = \varepsilon$ for all $n \in \N_0$. 

The \emph{length} of a word $x$ over $\S{A}$, denoted by $\abs{x}$, is the number of its elements. A \emph{prime word} is a word of the form $x = a^n$, where $a \in \S{A}$ and $n \in \N$ is a positive integer. From the definition follows that $\S{A}(a^n) = \cbrace{a}$ and that the empty word $\varepsilon$ is not a prime word. 

A non-empty word $v \in \S{A}^+$ is a \emph{factor} of a word $x \in \S{A}^*$ if there are words $u, w \in \S{A}^*$ such that $x = uvw$. Every non-empty word $x$ is a factor of itself, because $x = \varepsilon x \varepsilon$. In addition, the empty word $\varepsilon$ has no factors. 

Let $\S{A}^{**} = (\S{A}^*)^*$ be the set of all finite words over the alphabet $\S{A}^*$. A \emph{prime factorization} of a word $x \in \S{A}^*$ is a word $\Pi(x) = p_1 \cdots p_k \in \S{A}^{**}$ of prime factors $p_t \in \S{A}^*$ of $x$ for all $t \in [k]$ such that 
\begin{enumerate}
\itemsep0em
\item $p_1 \cdots p_k = x$ \hfill (\emph{partition-condition})
\item $\S{A}(p_t) \neq \S{A}(p_{t+1})$ for all $t \in [k-1]$ \hfill (\emph{maximality-condition})
\end{enumerate}
The partition-condition demands that the concatenation of all prime factors yields $x$. The maximality-condition demands that a prime factorization of $x$ consists of maximal prime factors. By definition, we have $\Pi(\varepsilon) = \varepsilon$.

\begin{proposition}
Every word over $\S{A}$ has a unique prime factorization.
\end{proposition}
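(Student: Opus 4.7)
The plan is to prove existence and uniqueness separately, each by induction on the length $\abs{x}$, with the essential observation being that the maximality-condition forces the first prime factor of $x$ to be exactly the maximal run of its first letter.

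For existence, I would proceed by strong induction on $\abs{x}$. The base case $x = \varepsilon$ is handled by the convention $\Pi(\varepsilon) = \varepsilon$. For $\abs{x} \geq 1$, write $x = x_1 \cdots x_n$ and let $r \in [n]$ be the largest index such that $x_1 = x_2 = \cdots = x_r$. Then $p_1 = x_1^{r}$ is a prime factor of $x$, and there exists (possibly empty) $w \in \S{A}^*$ with $x = p_1 w$ and, if $w \neq \varepsilon$, the first letter of $w$ differs from $x_1$. By the inductive hypothesis, $w$ has a prime factorization $\Pi(w) = q_1 \cdots q_m$. Concatenating yields $p_1 q_1 \cdots q_m$, which satisfies the partition-condition by construction; the maximality-condition holds because $\S{A}(p_1) = \cbrace{x_1}$ while $\S{A}(q_1)$ consists of the first letter of $w$, which differs from $x_1$, and all subsequent adjacent inequalities come from the inductive prime factorization of $w$.

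For uniqueness, I would again induct on $\abs{x}$. Suppose $\Pi(x) = p_1 \cdots p_k$ and $\Pi'(x) = q_1 \cdots q_\ell$ are two prime factorizations. The base $x = \varepsilon$ forces both to equal $\varepsilon$. For the inductive step, both $p_1$ and $q_1$ are nonempty prefixes of $x$, so their first letters coincide with $x_1$, giving $p_1 = x_1^{r}$ and $q_1 = x_1^{s}$ for some $r, s \geq 1$. The key claim is that $r$ equals the length $R$ of the maximal run of $x_1$ at the start of $x$. If $r < R$, then position $r+1$ of $x$ is still $x_1$, and this position must be the first letter of $p_2$; since $p_2$ is prime, $\S{A}(p_2) = \cbrace{x_1} = \S{A}(p_1)$, contradicting the maximality-condition. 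Hence $r = R$, and by the same argument $s = R$, so $p_1 = q_1$. Stripping this common prefix and applying the inductive hypothesis to the remaining word $w$ with $x = p_1 w$ yields $p_2 \cdots p_k = q_2 \cdots q_\ell$ as prime factorizations of $w$, hence they coincide.

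The only delicate point is the maximality argument in the uniqueness step: one must be careful that the contradiction is drawn from $\S{A}(p_1) = \S{A}(p_2)$ rather than from $p_1 = p_2$, since prime factors are characterized by their underlying letter, not their length. Everything else is essentially bookkeeping, so I expect the write-up to be short.
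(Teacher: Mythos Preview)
Your proof is correct. Both halves rest on the single observation that the first prime factor of $x$ must be exactly the maximal initial run of $x_1$, which you strip off and then recurse on the remainder via strong induction.

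The paper takes a slightly different route. For existence it uses ordinary induction from length $n$ to $n+1$, appending the new element $x_{n+1}$ on the \emph{right}: either it matches the letter of the last factor $p_k$ (extend $p_k$) or it does not (start a new factor $p_{k+1}$). For uniqueness the paper does not induct but argues directly: given two factorizations, it finds the smallest index $i$ at which the factors differ, observes that the underlying letters must agree (both $p_i$ and $q_i$ start at the same position of $x$), so only the exponents $m_i, n_i$ can differ, and then shows that whichever factorization has the shorter $i$-th factor violates maximality at position $i$. Your approach is more uniform, since the same ``maximal initial run'' idea drives both existence and uniqueness, and the recursion makes the write-up compact; the paper's right-append existence argument is one-letter-at-a-time rather than one-run-at-a-time, and its first-difference uniqueness argument avoids a second induction at the cost of a small case analysis. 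Neither has a substantive advantage.
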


\begin{proof}
Since $\Pi(\varepsilon) = \varepsilon$, it is sufficient to consider non-empty words over $\S{A}$. We first show that every $x \in \S{A}^+$ has a prime factorization by induction over the length $\abs{x} = n$. 

\medskip

\noindent
\emph{Base case}: Suppose that $n = 1$. Let $x$ be a word over $\S{A}$ of length one. Then $x$ is prime and a factor of itself. Hence, $\Pi(x) = x$ is a prime factorization of $x$.

\medskip

\noindent
\emph{Inductive step}: Suppose that every word over $\S{A}$ of length $n$ has a prime factorization. Let $x = x_1 \cdots x_{n+1}$ be a word of length $\abs{x} = n+1$. By induction hypothesis, the word $x' = x_1 \cdots x_n$ has a prime factorization $\Pi(x') = p_1 \cdots p_k$ for some $k \in [n]$. Suppose that $p_k = a^{n_k}$ for some $n_k \in \N$. Then $\S{A}(p_k) = a$. We distinguish between two cases: 
\begin{enumerate}
\itemsep0em
\item 
Case $a = x_{n+1}$: The concatenation $\tilde{p}_k = p_k a = a^{n_k + 1}$ is a prime word with $\S{A}(\tilde{p}_k) = \cbrace{a}$. Let $q = p_1\cdots p_{k-1}$. We have 
\[
x = x'a = p_1 \cdots p_k a \stackrel{(1)}{=} p_1 \cdots p_{k-1}\tilde{p}_k \stackrel{(2)}{=} q \tilde{p}_k \varepsilon.
\] 
Equation (1) shows the first property of a prime factorization and equation (2) shows that $\tilde{p}_k$ is a factor of $x$. From the induction hypothesis follows that $\S{A}(p_{k-1}) \neq \S{A}(p_k) = \S{A}(\tilde{p}_k)$. This shows that $\Pi(x) = p_1 \cdots p_{k-1} \tilde{p}_k$ is a prime factorization of $x$. 
\item 
Case $a \neq x_{n+1}$: The word $p_{k+1} = x_{n+1}$ is a prime factor of $x$ satisfying $p_1 \cdots p_k p_{k+1} = x$ and 
$\S{A}(p_k) = a \neq x_{n+1} = \S{A}(p_{k+1})$. Then from the induction hypothesis follows that $\Pi(x) = p_1 \cdots p_{k+1}$ is a prime factorization of $x$.
\end{enumerate}

\medskip

It remains to show that a prime factorization is unique. Let $\Pi(x) = p_1 \cdots p_{k}$ and $\Pi'(x) = q_1 \cdots q_l$ be different prime factorizations of $x$. Suppose that the prime factors are of the form $p_r = a_r^{m_r}$ and $q_s = b_s^{n_s}$ for all $r \in [k]$ and $s \in [l]$, where $a_r, b_s \in \S{A}$ and $m_r, n_s \in \N$. The partition-condition of a prime factorization yields
\[
x = a_1^{m_1} \cdots a_k^{m_k} = b_1^{n_1} \cdots b_l^{n_l}.
\]
Since $\Pi(x)$ and $\Pi'(x)$ are different, we can find a smallest index $i \in [k] \cap [l]$ such that $p_i \neq q_i$. 
From $p_i \neq q_i$ follows that $a_i^{m_i} \neq b_i^{n_i}$. Since $i$ is the smallest index for which the prime factors of both prime factorizations differ, we have
\[
u = a_1^{m_1} \cdots a_{i-1}^{m_{i-1}} = b_1^{n_1} \cdots b_{i-1}^{n_{i-1}}.
\]
There are words $v, w \in \S{A}^*$ such that $x = u a_i v = u b_i w = x$. This shows that $a_i = b_i$. Hence, from $a_i^{m_i} \neq b_i^{n_i}$ and $a_i = b_i$ follows that $m_i \neq n_i$. We distinguish between two cases: (i) $m_i < n_i$ and (ii) $m_i > n_i$. We assume that $m_i < n_i$. Suppose that
\[
u = a_1^{m_1} \cdots a_{i-1}^{m_{i-1}}a_i^{m_i} = b_1^{n_1} \cdots b_{i-1}^{n_{i-1}}b_i^{m_i}.
\]
From $m_i < n_i$ follows $\abs{u} < \abs{x}$. This implies that $i < k$, that is $p_{i+1}$ is an element of $\Pi(x)$. Thus, there are words $v, w \in \S{A}^*$ such that $x = u a_{i+1} v = u b_i w$. We obtain 
\[
\S{A}(p_{i+1}) = a_{i+1} = b_i = a_i = \S{A}(p_i). 
\]
This violates the maximality-condition of a prime factorization and therefore contradicts our assumption that $m_i < n_i$. In a similar way, we can derive a contradiction for the second case $m_i > n_i$. Combining the results of both cases gives $m_i = n_i$, which contradicts our assumption that both prime factorizations are different. Thus a prime factorization is uniquely determined. This completes the proof. 
\end{proof}

\medskip

Let $x$ and $x'$ be words over $\S{A}$ whose prime factorizations $\Pi(x) = p_1 \cdots p_k$ and $\Pi(y) = p'_1 \cdots p'_k$, resp., have the same length $k$. We say $x$ is an \emph{expansion} of $x'$, written as $x \succ x'$, if $\S{A}(p_t) = \S{A}(p'_t)$ and $\abs{p_t} \geq \abs{p'_t}$ for all $t \in [k]$. If $x$ is an expansion of $x'$, then $x'$ is called a \emph{compression} of $x$, denoted by $x' \prec x$. By definition, every word is an expansion (compression) of itself. The set 
\[
\S{C}(x) = \cbrace{x' \in \S{A}^* \,:\, x' \prec x}
\]
is the set of all compressions of $x$. Suppose that $x_0, x_1, \ldots, x_k$ are $k+1$ time series. Occasionally, we write $x_0 \succ x_1, \ldots, x_k$ for $x_0 \succ x_1, \ldots, x_0 \succ x_k$ and similarly $x_0 \prec x_1, \ldots, x_k$ for $x_0 \prec x_1, \ldots, x_0 \prec x_k$.

An \emph{irreducible word} is a word $x = x_1 \cdots x_n$ over $\S{A}$ with prime factorization $\Pi(x) = x_1 \cdots x_n$. From $\Pi(\varepsilon) = \varepsilon$ follows that the empty word $\varepsilon$ is irreducible. A \emph{condensed form} of $x$ a word over $\S{A}$ is an irreducible word $x^*$ such that $x \succ x^*$. The condensed form of $\varepsilon$ is $\varepsilon$. We show that every word has a uniquely determined condensed form.

\begin{proposition}\label{prop:existence-of-condensed-form}
Every word has a unique condensed form. 
\end{proposition}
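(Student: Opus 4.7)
The plan is to build the condensed form explicitly from the prime factorization and then argue that any other condensed form must agree letter-by-letter with it. Write the unique prime factorization as $\Pi(x) = p_1 \cdots p_k$ with $p_t = a_t^{m_t}$, where $a_t \in \S{A}$ and $m_t \in \N$. The natural candidate is
\[
x^* = a_1 a_2 \cdots a_k,
\]
obtained by collapsing each prime block to a single letter.

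For existence, I would first check that $x^*$ is irreducible. Its prime factorization must be $\Pi(x^*) = a_1 \cdots a_k$ with each $a_t$ treated as a length-one prime factor; the maximality condition for this factorization reduces to $a_t \neq a_{t+1}$ for all $t \in [k-1]$, which is exactly the maximality condition already satisfied by $\Pi(x)$ because $\S{A}(p_t) = \cbrace{a_t} \neq \cbrace{a_{t+1}} = \S{A}(p_{t+1})$. Hence every prime factor of $x^*$ has length $1$, so $x^*$ is irreducible. To see $x \succ x^*$, observe that $\Pi(x)$ and $\Pi(x^*)$ have the same length $k$, and $\S{A}(p_t) = \cbrace{a_t} = \S{A}(a_t)$ and $\abs{p_t} = m_t \geq 1 = \abs{a_t}$ for all $t \in [k]$.

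For uniqueness, let $y^*$ be any condensed form of $x$, with prime factorization $\Pi(y^*) = q_1 \cdots q_l$. Since $x \succ y^*$, the defining conditions of expansion force $l = k$, $\S{A}(q_t) = \S{A}(p_t) = \cbrace{a_t}$, and $\abs{p_t} \geq \abs{q_t}$ for all $t \in [k]$. Irreducibility of $y^*$ makes every $q_t$ a single letter, hence $q_t = a_t$. Concatenating over $t$ yields $y^* = a_1 \cdots a_k = x^*$.

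The only subtle point is the first step: checking that the candidate word $a_1 \cdots a_k$ really is irreducible, which requires translating the maximality condition on $\Pi(x)$ into the distinctness of consecutive letters of $x^*$. Once that observation is in hand, both existence and uniqueness follow almost immediately from the definitions of expansion and prime factorization, which are already known to be unique from the previous proposition.
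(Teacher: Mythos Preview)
Your proof is correct and follows essentially the same approach as the paper: construct the candidate $x^* = a_1 \cdots a_k$ from the unique prime factorization $\Pi(x) = p_1 \cdots p_k$, verify irreducibility via the inherited maximality condition, and argue uniqueness by comparing the prime factorizations of $x$ and any other condensed form. Your exposition is slightly more detailed (you spell out the check that $x \succ x^*$), but the argument is the same.
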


\begin{proof}
It is sufficient to show existence and uniqueness of a condensed form of non-empty words. Let $x$ be a word over $\S{A}$ with prime factorization $\Pi(x) = p_1 \cdots p_k$. Let $\S{A}(p_t) = \cbrace{a_t}$ for all $t \in [k]$. We define the word $x^* = a_1 \cdots a_k$. Then $\Pi(x^*) = a_1 \cdots a_k$ is the prime factorization of $x^*$ that obviously satisfies the partition-condition and whose maximality-condition is inherited by the maximality-condition of the prime factorization $\Pi(x)$. This shows that $x^*$ is irreducible and a compression of $x$. Hence, $x^*$ is a condensed form of $x$. 

Suppose that $z = b_1 \cdots b_l$ is an irreducible word such that $z \prec x$. Then we have $k = l$, $\S{A}(p_t) = \S{A}(b_t)$, and $\abs{p_t} \geq \abs{b_t} = 1$ for all $t \in [k]$. Observe that $\cbrace{a_t} = \S{A}(p_t) = \S{A}(b_t) = \cbrace{b_t}$. We obtain $z = a_1 \cdots a_k$ showing the uniqueness of the condensed form of $x$. 
\end{proof}

\medskip

The next result shows that expansions (compressions) are transitive. 
\begin{proposition}\label{prop:expansions-are-transitive}
Let $x, y, z \in \S{A}^*$. From $x \succ y$ and $y \succ z$ follows $x \succ z$. 
\end{proposition}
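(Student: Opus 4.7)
The plan is to unfold the definition of expansion in terms of prime factorizations and then chain the two hypotheses through the (unique) prime factorization of the middle word $y$. Concretely, I would begin by invoking the uniqueness of prime factorizations (the previous proposition) to fix unambiguous notations $\Pi(x) = p_1 \cdots p_k$, $\Pi(y) = q_1 \cdots q_m$, and $\Pi(z) = r_1 \cdots r_n$.

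Next, I would apply the definition of expansion twice. From $x \succ y$ one obtains $k = m$ together with $\S{A}(p_t) = \S{A}(q_t)$ and $\abs{p_t} \geq \abs{q_t}$ for all $t \in [k]$. From $y \succ z$ one obtains $m = n$ together with $\S{A}(q_t) = \S{A}(r_t)$ and $\abs{q_t} \geq \abs{r_t}$ for all $t \in [m]$. Combining these gives $k = n$, and then by transitivity of set equality and of the numerical order relation one has $\S{A}(p_t) = \S{A}(r_t)$ and $\abs{p_t} \geq \abs{r_t}$ for all $t \in [k]$. By definition of expansion, this is exactly the statement $x \succ z$.

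There is essentially no obstacle here: once Prop.~\ref{prop:existence-of-condensed-form} (really, the uniqueness half of the preceding proposition on prime factorizations) guarantees that the factorization of $y$ appearing in the two hypotheses is literally the same word in $\S{A}^{**}$, the componentwise conditions for expansion compose trivially. The only point worth being careful about is to use the same labeling $q_1, \dots, q_m$ in both hypotheses; this is legitimate precisely because a word has only one prime factorization, so no case analysis or re-indexing is required. The proof should therefore be a short three-line argument.
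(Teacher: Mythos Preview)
Your proposal is correct and follows essentially the same approach as the paper's proof: both arguments fix the (unique) prime factorizations of $x$, $y$, $z$, read off from the two hypotheses that all three factorizations have the same number of prime factors with matching singleton alphabets and decreasing lengths, and then conclude by transitivity of equality and of $\geq$. The paper merely spells this out in the explicit form $u_t = a_t^{\alpha_t}$, $v_t = a_t^{\beta_t}$, $w_t = a_t^{\gamma_t}$ with $\alpha_t \geq \beta_t \geq \gamma_t$, which is exactly what your conditions $\S{A}(p_t) = \S{A}(q_t) = \S{A}(r_t)$ and $\abs{p_t} \geq \abs{q_t} \geq \abs{r_t}$ encode.
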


\begin{proof}
Let $\Pi(x) = u_1 \cdots u_k$, $\Pi(y) = v_1 \cdots v_l$, and $\Pi(z) = w_1 \cdots w_m$ be the prime factorizations of $x$, $y$, and $z$, respectively. From $x \succ y$ and $y \succ z$ follows that $k = l = m$. Let $t \in [k]$. We have $u_t \succ v_t$ and $v_t \succ w_t$. From $u_t \succ v_t$ follows that there are elements $a_t \in \S{A}$ and $\alpha_t, \beta_t \in \N$ with $\alpha_t \geq \beta_t$ such that $u_t = a_t^{\alpha_t}$ and $v_t = a_t^{\beta_t}$. From $v_t \succ w_t$ follows that there is an element $\gamma_t \in \N$ with $\beta_t \geq \gamma_t$ such that $w_t = a_t^{\gamma_t}$. Since $\alpha_t \geq \beta_t$ and $\beta_t \geq \gamma_t$, we obtain $u_t \succ w_t$. We have chosen $t \in [k]$ arbitrarily. Hence, we find that $x \succ z$. This proves the assertion.
\end{proof}

\medskip

Expansions have been introduced as expansions on the prime factors. Lemma \ref{lemma:expansions} states that expansions are obtained by replicating a subset of elements of a given word.

\begin{lemma}\label{lemma:expansions}
Let $x = x_1 \cdots x_n$ and $y = y_1 \cdots y_m$ be words over $\S{A}$. Then the following statements are equivalent:
\begin{enumerate}
\itemsep0em
\item
$x$ is an expansion of $y$.
\item 
There are $\alpha_1, \ldots, \alpha_m \in \N$ such that $x = y_1^{\alpha_1} \cdots y_m^{\alpha_m}$.
\end{enumerate}
\end{lemma}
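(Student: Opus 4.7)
The plan is to prove the equivalence by unpacking the definition of expansion through prime factorizations and translating it into the elementwise replication statement. The intuition in both directions is that an expansion simply replicates each element within its prime block, while the prime block structure itself is preserved.

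For the direction $(1) \Rightarrow (2)$, I will start with the prime factorizations $\Pi(y) = q_1 \cdots q_k$ with $q_t = a_t^{\beta_t}$ and $\Pi(x) = p_1 \cdots p_k$ with $p_t = a_t^{\alpha_t}$, where $\alpha_t \geq \beta_t$ for every $t \in [k]$ by the definition of $\succ$. The positions of $y$ decompose into $k$ consecutive blocks $B_1, \ldots, B_k$, where $B_t$ has length $\beta_t$ and $y_i = a_t$ for every $i \in B_t$. Inside block $B_t$, I need to split the total exponent $\alpha_t$ as $\alpha_t = \sum_{i \in B_t} \alpha_i$ with each $\alpha_i \geq 1$; this is possible precisely because $\alpha_t \geq \beta_t = |B_t|$. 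Any such choice (e.g., assign $\alpha_t - \beta_t + 1$ to the first index of $B_t$ and $1$ to the others) gives positive integers $\alpha_1, \ldots, \alpha_m$ with $y_1^{\alpha_1} \cdots y_m^{\alpha_m} = a_1^{\alpha_1} \cdots a_k^{\alpha_k}$ (concatenating within each block first), which equals $x$.

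For the direction $(2) \Rightarrow (1)$, I will take the given factorization $x = y_1^{\alpha_1} \cdots y_m^{\alpha_m}$ and the prime factorization $\Pi(y) = q_1 \cdots q_k$ with $q_t = a_t^{\beta_t}$ and corresponding index blocks $B_t$. For each block $B_t$, define $\gamma_t = \sum_{i \in B_t} \alpha_i$; since every $\alpha_i \geq 1$ and $|B_t| = \beta_t$, we have $\gamma_t \geq \beta_t$. Concatenating within blocks yields $x = a_1^{\gamma_1} \cdots a_k^{\gamma_k}$, so the word $r_1 \cdots r_k$ with $r_t = a_t^{\gamma_t}$ consists of prime factors of $x$ and satisfies the partition condition. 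The maximality condition $\mathcal{A}(r_t) \neq \mathcal{A}(r_{t+1})$ is inherited directly from the maximality condition for $\Pi(y)$, because $\mathcal{A}(r_t) = \{a_t\} = \mathcal{A}(q_t)$. Hence $r_1 \cdots r_k$ is a prime factorization of $x$, and by the uniqueness established in the previous proposition it equals $\Pi(x)$. Comparing with $\Pi(y)$, the two prime factorizations have the same length $k$, the same alphabets in each position, and $|r_t| = \gamma_t \geq \beta_t = |q_t|$, so $x \succ y$ by definition.

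The main obstacle is mostly notational rather than conceptual: one must be careful to verify that the concatenations $r_t$ produced in the second direction really do satisfy the maximality condition to qualify as a prime factorization, so that uniqueness of the prime factorization can be invoked. Once that is in place, both directions reduce to elementary bookkeeping on the block decomposition induced by $\Pi(y)$.
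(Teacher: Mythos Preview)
Your proposal is correct and follows essentially the same route as the paper: both directions work through the block decomposition of $y$ given by its prime factorization, assigning exponents within each block in the forward direction and summing them in the backward direction. If anything, you are slightly more explicit than the paper in verifying the maximality condition and invoking uniqueness of the prime factorization in the $(2)\Rightarrow(1)$ step; just be careful with the overloaded symbol $\alpha_t$ (used once for the block exponent in $\Pi(x)$ and once for the per-position exponents), which could confuse a reader.
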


\begin{proof} 
Suppose that $\Pi(x) = p_1 \cdots p_k$ and $\Pi(y) = q_1 \cdots q_l$ are the prime factorization of $x$ and $y$, respectively. 

\medskip

\noindent
$\Rightarrow$: We assume that $x$ is an expansion of $y$. Then from $x \succ y$ follows that $k = l$, $\S{A}(p_t) = \S{A}(q_t)$, and $\abs{p_t} \geq \abs{q_t}$ for all $t \in [k]$. We arbitrarily pick an element $t \in [k]$. There are elements $a_t \in \S{A}$ and $\beta_t, \gamma_t \in \N$ such that $p_t = a_t^{\beta_t}$, $q_t = a_t^{\gamma_t}$ and $\beta_t \geq \gamma_t$. In addition, there is an index $i_t \in [m]$ such that 
\begin{align*}
q_t = y_{i_t} y_{i_t+1}\cdots y_{i_t+\gamma_t-1} = a_t^{\gamma_t}.
\end{align*}
Let $\nu_t = \beta_t - \gamma_t -1$. Then $\nu_t \geq 0$ and we have
\[
p_t = y_{i_t}^{\nu_t} y_{i_t+1}^1 \cdots y_{i_t+\gamma_t-1}^{1} = a_t^{\beta_t}.
\]
We set $\alpha_{i_t} = \nu_t$ and $\alpha_{i_t+1} = \cdots \alpha_{i_t+\gamma_t-1} = 1$. Concatenating all prime factors $p_t$ of $x$ yields the assertion. 

\medskip

\noindent
$\Leftarrow$: We assume that there are integers $\alpha_1, \ldots, \alpha_m \in \N$ such that $x = y_1^{\alpha_1} \cdots y_m^{\alpha_m}$. Suppose that $\S{A}(q_t) = \cbrace{a_t}$ for all $t \in [l]$. Then there are integers $i_0, \ldots, i_l \in \N$ such that $0 = i_0 < i_1 < \cdots < i_l = m$ and 
\[
a_t = y_{i_{t-1}+1} = \cdots = y_{i_t}
\] 
for all $t \in [l]$. We set $\beta_t = i_t - (i_{t-1}+1)$ and $\gamma_t = \alpha_{i_{t-1}+1} + \cdots + \alpha_{i_t}$ for all $t \in [l]$. From $1 \leq \alpha_i$ for all $i \in [m]$ follows $\beta_t \leq \gamma_t$ for all $t \in [l]$. Hence, we have 
\begin{align*}
y = a_1^{\beta_1} \cdots a_l^{\beta_l} \quad \text{ and } \quad 
x = a_1^{\gamma_1} \cdots a_l^{\gamma_l}.
\end{align*}
Hence, $k = l$, $\S{A}(p_t) = \S{A}(q_t)$, and $\abs{p_t} = \gamma_t \geq \beta_t=\abs{q_t}$ for all $t \in [l]$. This shows $x \succ y$. 
\end{proof}

\medskip

The next result shows that the set of compressions of an irreducible word is a singleton. 

\begin{proposition}
Let $x \in \S{A}^*$ be irreducible. Then $\S{C}(x) = \cbrace{x}$. 
\end{proposition}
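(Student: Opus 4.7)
The plan is to unpack the definitions of expansion, irreducibility, and prime factorization, and then exploit the rigidity that irreducibility imposes on prime factors.

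First I would observe that every word is trivially a compression of itself, so $x \in \S{C}(x)$ is immediate. The real content is the reverse inclusion: any $y \prec x$ must satisfy $y = x$.

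To show this, I would take an arbitrary $y \in \S{C}(x)$, so that $x \succ y$, and compare prime factorizations. Since $x = x_1 \cdots x_n$ is irreducible, its prime factorization is $\Pi(x) = x_1 \cdots x_n$, where each prime factor $x_i$ has length exactly one. Let $\Pi(y) = q_1 \cdots q_k$. By the definition of $x \succ y$, the two prime factorizations must have the same length, so $k = n$, and for each $t \in [n]$ we must have $\S{A}(x_t) = \S{A}(q_t)$ together with $\abs{x_t} \geq \abs{q_t}$. Since $\abs{x_t} = 1$ and each prime factor $q_t$ satisfies $\abs{q_t} \geq 1$ (prime words are non-empty by definition), we conclude $\abs{q_t} = 1$. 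Combined with $\S{A}(x_t) = \S{A}(q_t)$, this forces $q_t = x_t$ for every $t$, hence $y = q_1 \cdots q_n = x_1 \cdots x_n = x$.

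There is no real obstacle here; the statement is essentially an unpacking of the definition of irreducibility once expansion has been rephrased via prime factorizations as in the setup. The one small point worth stating cleanly is that in the definition of expansion, the $\abs{p_t} \geq \abs{q_t}$ inequality together with the fact that each $\abs{p_t} = 1$ in the irreducible case pins down equality on every prime block, which then collapses the whole compression back to $x$.
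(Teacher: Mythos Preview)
Your proof is correct and is in fact more direct than the paper's. You work straight from the definition of expansion via prime factorizations: since $x$ is irreducible, each prime factor of $\Pi(x)$ has length one, and the inequality $1 = \abs{x_t} \geq \abs{q_t} \geq 1$ immediately pins down $q_t = x_t$.

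The paper takes a slightly longer detour: given $y \prec x$ it passes to the condensed form $y^*$ of $y$, uses transitivity of compressions to get $y^* \prec x$, invokes the lemma that rewrites expansions as $x = a_1^{\alpha_1}\cdots a_k^{\alpha_k}$, and then uses irreducibility of $x$ to force all $\alpha_t = 1$, yielding $x = y^*$; finally it appeals to antisymmetry of $\succ$ (from $x \succ y$ and $y \succ y^* = x$) to conclude $x = y$. Your argument avoids the auxiliary lemmas on transitivity and the alternative characterization of expansions, at the cost of nothing---the direct comparison of prime factorizations already contains all the information needed. Both approaches are valid; yours is the cleaner one here.
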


\begin{proof}
By definition, we have $x \in \S{C}(x)$. Suppose there is a word $y \in \S{C}(x)$ with prime factorization $\Pi(y) = p_1 \cdots p_k$. Let $y^* = a_1 \cdots a_k$ be the condensed form of $y$, where $a_t \in \S{A}(p_t)$ for all $t \in [k]$. From $y \prec x$ and $y^* \prec y$ follows $y^* \prec x$ by Prop.~\ref{prop:expansions-are-transitive}. According to Lemma \ref{lemma:expansions} there are positive integers $\alpha_1, \ldots, \alpha_k \in \N$ such that $x = a_1^{\alpha_1} \cdots a_k^{\alpha_k}$. Since $x$ is irreducible all $\alpha_t$ have value one giving $x = a_1 \cdots a_k$. Hence, we have $x = y^*$. In addition, from $x \succ y$ and $y \succ x$ follows $x = y$. This shows the assertion.
\end{proof}

\medskip

Proposition \ref{prop:compression-is-expansion-of-condensed-form} states that every compression of a word is an expansion of its condensed form. 

\begin{proposition}\label{prop:compression-is-expansion-of-condensed-form}
Let $x$ be a word over $\S{A}$ with condensed form $x^*$. Suppose that $y\in \S{A}^*$ such that $y \prec x$. Then $x^* \prec y$. 
\end{proposition}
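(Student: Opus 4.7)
The plan is to work directly with prime factorizations and leverage the uniqueness of the condensed form established in Proposition~\ref{prop:existence-of-condensed-form}. First I would unpack what $y \prec x$ means in terms of prime factorizations: writing $\Pi(x) = p_1 \cdots p_k$ with $\S{A}(p_t) = \cbrace{a_t}$ and $p_t = a_t^{m_t}$, the definition of compression forces $\Pi(y)$ to have exactly the same length $k$, the same single-letter alphabets $\S{A}(p_t) = \S{A}(q_t) = \cbrace{a_t}$, and $\abs{q_t} \leq \abs{p_t}$. Hence $\Pi(y) = a_1^{n_1} \cdots a_k^{n_k}$ for positive integers $n_t \leq m_t$.

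Next I would apply the construction in the proof of Proposition~\ref{prop:existence-of-condensed-form} to $y$: since the prime factorization of $y$ is $a_1^{n_1} \cdots a_k^{n_k}$, its condensed form must be $a_1 \cdots a_k$. But this is precisely $x^*$, the condensed form of $x$. By uniqueness of the condensed form, $y^* = x^*$.

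Finally, since the condensed form is by definition a compression of the original word, we have $x^* = y^* \prec y$, which is the desired conclusion. The empty-word edge case is handled trivially: if $x = \varepsilon$, then $y \prec \varepsilon$ forces $y = \varepsilon$ and $x^* = \varepsilon \prec \varepsilon$.

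The only subtle point is making sure the definition of $\prec$ really does pin down the length and the per-factor alphabets of $\Pi(y)$; once that is observed, the rest follows from uniqueness of the prime factorization and of the condensed form. So the argument is short and the main obstacle is purely bookkeeping rather than any genuine combinatorial difficulty.
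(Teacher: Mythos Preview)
Your proposal is correct and follows essentially the same route as the paper's own proof: both use that $y \prec x$ forces $\Pi(y)$ to have the same number of prime factors with the same single-letter alphabets as $\Pi(x)$, conclude that $x$ and $y$ share the condensed form $x^*$, and then invoke $x^* \prec y$ by definition of the condensed form. Your write-up is simply a bit more explicit about the bookkeeping than the paper's terse version.
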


\begin{proof}
Let $x \in \S{A}^*$ be a word with prime factorization $\Pi(x) = p_1 \cdots p_k$. Suppose that $y \in \S{C}(x)$ with condensed form 
$\Pi(y) = q_1 \cdots q_l$. From $y \prec x$ follows that $k = l$ and $q_t \prec p_t$ for all $t \in [k]$. This implies that $x$ and $y$ have the same condensed form $x^*$. Hence, we have $x^* \prec y$, which completes the proof.
\end{proof}

\medskip

Suppose that $\S{C}(x)$ is the set of compressions of a word $x$. We show that the shortest word in $\S{C}(x)$ is the condensed form of $x$. 

\begin{proposition}\label{prop:minimum-length-of-condensed-form}
Let $x$ be a word over $\S{A}$ with condensed form $x^*$. Then $\abs{x^*} < \abs{y}$ for all $y \in \S{C}(x) \setminus \cbrace{x^*}$.
\end{proposition}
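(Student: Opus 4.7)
The plan is to combine Prop.~\ref{prop:compression-is-expansion-of-condensed-form} with the characterization of expansions given by Lemma~\ref{lemma:expansions}. By Prop.~\ref{prop:compression-is-expansion-of-condensed-form}, every $y \in \mathcal{C}(x)$ satisfies $x^* \prec y$, i.e.\ $y$ is an expansion of $x^*$. So the whole question reduces to showing that among all expansions of an irreducible word, the shortest one is the irreducible word itself, and it is the unique shortest expansion.

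First I would write $x^* = a_1 \cdots a_k$ with $k = |x^*|$. Since $y \succ x^*$, Lemma~\ref{lemma:expansions} gives positive integers $\alpha_1, \ldots, \alpha_k \in \mathbb{N}$ such that $y = a_1^{\alpha_1} \cdots a_k^{\alpha_k}$. Then simply computing the length yields
\[
|y| = \sum_{t=1}^{k} \alpha_t \;\geq\; \sum_{t=1}^{k} 1 \;=\; k \;=\; |x^*|,
\]
where the inequality uses $\alpha_t \geq 1$ for every $t \in [k]$.

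Finally I would argue for strict inequality when $y \neq x^*$: equality in the displayed chain forces $\alpha_t = 1$ for all $t \in [k]$, which means $y = a_1 \cdots a_k = x^*$. Contrapositively, $y \neq x^*$ implies $|y| > |x^*|$, which is the claim. I do not expect any genuine obstacle here; the only thing to watch is that the case $x = \varepsilon$ is handled trivially (then $x^* = \varepsilon$ and $\mathcal{C}(x) = \{\varepsilon\}$, so the quantification is vacuous) and that Lemma~\ref{lemma:expansions} is applied in the correct direction (expansion of $x^*$, not of $x$), which is exactly what Prop.~\ref{prop:compression-is-expansion-of-condensed-form} provides.
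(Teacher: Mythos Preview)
Your proposal is correct and follows essentially the same approach as the paper's own proof: invoke Prop.~\ref{prop:compression-is-expansion-of-condensed-form} to obtain $x^* \prec y$, apply Lemma~\ref{lemma:expansions} to write $y = a_1^{\alpha_1}\cdots a_k^{\alpha_k}$, sum the exponents to get $|y| \geq k = |x^*|$, and observe that equality forces all $\alpha_t = 1$, hence $y = x^*$. The only additions you make (the remark on the empty word and on applying Lemma~\ref{lemma:expansions} in the right direction) are harmless clarifications.
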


\begin{proof}
Let $x^* = a_1 \cdots a_k$ and let $y \in \S{C}(x)$. From Prop.~\ref{prop:compression-is-expansion-of-condensed-form} follows that $x^* \prec y$. Lemma \ref{lemma:expansions} gives positive integers $\alpha_1, \ldots \alpha_k \in \N$ such that $y = a_1^{\alpha_1} \cdots a_k^{\alpha_k}$. This shows that $\abs{y} = \alpha_1 + \cdots + \alpha_k \geq k = \abs{x^*}$. Suppose that $\abs{y} = k$. In this case, we have $\alpha_1 = \cdots = \alpha_k = 1$ and therefore $y = x^*$. This shows that $\abs{x^*} < \abs{y}$ for all $y \in \S{C}(x)\setminus \cbrace{x^*}$.
\end{proof}

Suppose that $x, y, z$ are words over $\S{A}^*$. We say, $z$ is a \emph{common compression} of $x$ and $y$ if $z \prec x, y$. A \emph{compression-expansion} (co-ex) function is a function $f: \S{A}^* \rightarrow \S{A}^*$ such that there is a common compression of $x$ and $f(x)$. Proposition \ref{prop:composition-of-co-ex-functions} states that co-ex functions are closed under compositions. 

\begin{proposition}\label{prop:composition-of-co-ex-functions}
The composition of two co-ex functions is a co-ex function.
\end{proposition}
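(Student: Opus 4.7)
The plan is to exhibit an explicit common compression of $x$ and $(g \circ f)(x)$ for every word $x \in \S{A}^*$, given that $f$ and $g$ are both co-ex. The natural candidates arising from the definitions are a common compression $z$ of $x$ and $f(x)$, and a common compression $z'$ of $f(x)$ and $g(f(x))$. Neither of these is by itself obviously a compression of both $x$ and $g(f(x))$, so the key idea is to descend further to an object that sits below both $z$ and $z'$ simultaneously, namely the condensed form $f(x)^*$ of the intermediate word $f(x)$.

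Concretely, I would proceed as follows. First, fix $x \in \S{A}^*$ and invoke the co-ex property of $f$ to obtain a common compression $z \prec x$ and $z \prec f(x)$. Second, invoke the co-ex property of $g$ applied to $f(x)$ to obtain a common compression $z' \prec f(x)$ and $z' \prec g(f(x))$. Third, since $z$ and $z'$ are both compressions of $f(x)$, Proposition~\ref{prop:compression-is-expansion-of-condensed-form} yields $f(x)^* \prec z$ and $f(x)^* \prec z'$. Fourth, chain these with the two original compression relations using the transitivity of the compression order (Proposition~\ref{prop:expansions-are-transitive}): from $f(x)^* \prec z \prec x$ we obtain $f(x)^* \prec x$, and from $f(x)^* \prec z' \prec g(f(x))$ we obtain $f(x)^* \prec g(f(x))$. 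Therefore $f(x)^*$ is a common compression of $x$ and $(g\circ f)(x)$, showing that $g \circ f$ is co-ex.

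The main obstacle, such as it is, lies in recognizing that the immediate candidates $z$ or $z'$ do not suffice---they relate each to $f(x)$ but not to each other through a compression---and that the condensed form of the intermediate word $f(x)$ is the canonical object below every compression of $f(x)$. Once this is observed, the argument reduces to two applications of transitivity, which is why I expect the proof to be short. No additional machinery beyond Propositions~\ref{prop:expansions-are-transitive} and \ref{prop:compression-is-expansion-of-condensed-form} should be required.
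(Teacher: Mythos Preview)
Your proposal is correct and follows essentially the same route as the paper: both arguments identify the condensed form of the intermediate word (your $f(x)^*$, the paper's $h(x)^*$) as the desired common compression of $x$ and $(g\circ f)(x)$, and both conclude via two applications of transitivity (Proposition~\ref{prop:expansions-are-transitive}). The only cosmetic difference is that the paper first passes to the condensed forms $z^*$ and ${z'}^*$ of the two intermediate compressions and then invokes uniqueness of condensed forms (Proposition~\ref{prop:existence-of-condensed-form}) to argue $z^* = {z'}^*$, whereas you cut this detour by applying Proposition~\ref{prop:compression-is-expansion-of-condensed-form} directly to $z \prec f(x)$ and $z' \prec f(x)$ to obtain $f(x)^* \prec z$ and $f(x)^* \prec z'$; your version is marginally more streamlined but the underlying idea is identical.
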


\begin{proof}
To prove the assertion, we repeatedly apply transitivity of expansions (Prop.~\ref{prop:expansions-are-transitive}). 
Let $x \in \S{A}^*$ and let $f = g \circ h$ be the composition of two co-ex functions $g, h: \S{A}^* \rightarrow \S{A}^*$. Then there are words $z_h$ and $z_g$ such that $z_h \prec x, h(x)$ and $z_g \prec h(x), g(h(x))$. Suppose that $z_h^*$ and $z_g^*$ are the condensed forms of $z_h$ and $z_g$, respectively. From 
\begin{align*}
z_h^* \prec z_h \prec h(x)
\quad \text{ and } \quad
z_g^* \prec z_g \prec h(x)
\end{align*}
follows $z_h^* \prec h(x)$ and $z_g^* \prec h(x)$ by the transitivity of expansions. According to Prop.~\ref{prop:existence-of-condensed-form}, the condensed form of a word is unique. Hence, we have $z_h^* = z_g^*$. We set $x^* = z_h^* = z_g^*$. Then from 
\begin{align*}
x^* = z_h^* \prec z_h \prec x
\quad \text{ and } \quad
x^* = z_g^*\prec z_g \prec g(h(x)) = f(x)
\end{align*}
follows $x^* \prec x, f(x)$ by the transitivity of expansions. This shows that $x^*$ is a common compression of $x$ and $f(x)$. Since $x$ was chosen arbitrarily, the assertion follows. 
\end{proof}

\subsection{Warping Walks}\label{subsec:warping-walks}

The standard definition of the dtw-distance is inconvenient for our purposes. The recursive definition of warping paths is easy to understand and well-suited for deriving algorithmic solutions, but often less suited for a theoretical analysis. In addition, warping paths are not closed under compositions. As a more convenient definition, we introduce warping walks. Warping walks generalize warping paths by slightly relaxing the step condition. Using warping functions and matrices, this section shows that warping walks do not affect the dtw-distance. The next section shows that warping walks are closed under compositions.

\medskip

\commentout{
\begin{notation}
Let $\B = \cbrace{0,1}$ and let $I_n \in \R^{n \times n}$ be the identity matrix. 
Let $A = (a_{ij}) \in \R^{m \times n}$ be a matrix. Then we write $A_i = (a_{i1}, \ldots, a_{in})$ for the $i$-th row and $A^j = (a_{1j}, \ldots, a_{mj})$ for the $j$-th column of $A$.
\QED
\end{notation}
}

\begin{notation}
Let $\B = \cbrace{0,1}$ and let $I_n \in \R^{n \times n}$ be the identity matrix. 
\QED
\end{notation}

\medskip

Let $\ell, n \in \N$. A function $\phi:[\ell] \rightarrow [n]$ is a \emph{warping function} if it is surjective and monotonically increasing. Thus, for a warping function we always have $\ell \geq n$. The \emph{warping matrix} associated with warping function $\phi$ is a matrix of the form
\[
\Phi = \begin{pmatrix}
e_{\phi(1)}\\
\vdots\\
e_{\phi(\ell)}
\end{pmatrix} \in \B^{\ell \times n},
\]
where $e_i$ is the $i$-th standard basis vector of $\R^n$, denoted as a row vector, with $1$ in the $i$-th position and $0$ in every other position. The next result shows the effect of multiplying a time series with a warping matrix. 

\begin{lemma}\label{lemma:expansion-by-phi}
Let $\phi:[\ell] \rightarrow [n]$ be a warping function with associated warping matrix $\Phi$. Suppose that $x = (x_1, \ldots, x_n)s \in \S{T}$ is a time series of length $\abs{x} = n$. Then there are elements $\alpha_1, \ldots, \alpha_n \in \N$ such that 
\[
\Phi x = (\underbrace{x_1, \ldots, x_1}_{\alpha_1-\text{times}}, \underbrace{x_2, \ldots, x_2}_{\alpha_2-\text{times}},\ldots \underbrace{x_n, \ldots, x_n}_{\alpha_n-\text{times}})^\intercal.
\] 
\end{lemma}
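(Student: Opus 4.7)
The plan is to read off $\Phi x$ componentwise and then use the two defining properties of a warping function, surjectivity and (weak) monotonicity, to group the entries into consecutive runs.

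First I would unpack the matrix product. By definition, the $k$-th row of $\Phi$ is the standard basis vector $e_{\phi(k)} \in \B^{1\times n}$, so for the column vector $x = (x_1,\ldots,x_n)^\intercal$ the $k$-th entry of $\Phi x$ is simply $e_{\phi(k)}\, x = x_{\phi(k)}$. Hence
\[
\Phi x = \bigl(x_{\phi(1)},\, x_{\phi(2)},\, \ldots,\, x_{\phi(\ell)}\bigr)^\intercal.
\]
So the claim reduces to showing that the sequence $\phi(1), \phi(2), \ldots, \phi(\ell)$ is of the form $1,\ldots,1,2,\ldots,2,\ldots,n,\ldots,n$, where each $i \in [n]$ appears a positive number of times.

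Next, for each $i \in [n]$ I would define $\alpha_i = \abs{\phi^{-1}(i)}$, the number of preimages of $i$ under $\phi$. Surjectivity of $\phi$ immediately gives $\alpha_i \geq 1$ for all $i \in [n]$, so each $\alpha_i$ lies in $\N$ as required. Since $\phi$ is monotonically increasing (in the weak sense; strict monotonicity together with surjectivity would force $\ell = n$ and trivialize the warping function), whenever $k < k'$ we have $\phi(k) \leq \phi(k')$. Therefore all indices $k$ with $\phi(k) = i$ form a consecutive block in $[\ell]$, and these blocks appear in the order $i = 1, 2, \ldots, n$.

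Putting these pieces together, the sequence $(x_{\phi(1)}, \ldots, x_{\phi(\ell)})$ consists of $\alpha_1$ copies of $x_1$, followed by $\alpha_2$ copies of $x_2$, and so on, ending with $\alpha_n$ copies of $x_n$. This is exactly the claimed form of $\Phi x$. There is no real obstacle here: the only subtle point is clarifying the convention for \emph{monotonically increasing} (weak versus strict), since a strict interpretation is incompatible with surjectivity whenever $\ell > n$; apart from that, the proof is a direct computation once the rows of $\Phi$ are identified with standard basis vectors.
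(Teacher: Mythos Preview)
Your proof is correct and follows essentially the same approach as the paper: both use surjectivity to ensure each $\alpha_i \geq 1$ and monotonicity to force the block structure $1,\ldots,1,2,\ldots,2,\ldots,n,\ldots,n$ on the values of $\phi$. Your version is a bit more explicit in first computing $(\Phi x)_k = x_{\phi(k)}$ and defining $\alpha_i = \abs{\phi^{-1}(i)}$, whereas the paper goes directly to the block description of $\phi$ and the row structure of $\Phi$, but the underlying argument is the same.
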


\begin{proof}
Since $\phi$ is surjective and monotonic, we can find integers $\alpha_1, \ldots, \alpha_n \in \N$ such that 
\[
\phi(i) = \begin{cases}
1 & i \leq \alpha_1\\
2 & \alpha_1 < i \leq \alpha_1 + \alpha_2\\
\cdots & \cdots \\
n & \alpha_1 + \cdots + \alpha_{n-1}< i 
\end{cases}
\]
for all $i \in [\ell]$. Let $\Phi \in \B^{\ell \times n}$ be the warping matrix associated with $\phi$. Then the $n$ rows $\Phi_i$ of $\Phi$ are of the form
\begin{align*}
\Phi_1 &= \cdots = \Phi_{\alpha_1} = e_1\\
\Phi_{\alpha_1+1} &= \cdots = \Phi_{\alpha_2} = e_2\\
&\;\;\vdots \\
\Phi_{\alpha_{m-1}+1} &= \cdots = \Phi_{\alpha_m} = e_m,
\end{align*}
Obviously, the warping matrix $\Phi$ satisfies 
\[
\Phi x = (\underbrace{x_1, \ldots, x_1}_{\alpha_1-\text{times}}, \underbrace{x_2, \ldots, x_2}_{\alpha_2-\text{times}},\ldots \underbrace{x_n, \ldots, x_n}_{\alpha_n-\text{times}})^\intercal.
\] 
\end{proof}

\medskip

A \emph{warping walk} is a pair $w = (\phi, \psi)$ consisting of warping functions $\phi:[\ell] \rightarrow [m]$ and $\psi:[\ell] \rightarrow [n]$ of the same domain $[l]$. The warping walk $w$ has \emph{order} $m \times n$ and \emph{length} $\ell$. By $\S{W}_{m,n}$ we denote the set of all warping walks of order $m \times n$ and of finite length.

In the classical terminology of dynamic time warping, a warping walk can be equivalently expressed by a sequence $w = (w_1, \ldots, w_{\ell})$ of $\ell$ points $w_l = (\phi(l), \psi(l)) \in [m] \times [n]$ such that the following conditions are satisfied:
\begin{enumerate}
\item $w_1 = (1,1)$ and $w_\ell = (m,n)$ \hfill (\emph{boundary condition})
\item $w_{l+1} - w_{l} \in \B \times \B$ for all $l \in [\ell-1]$ \hfill(\emph{weak step condition})
\end{enumerate}
The weak step condition relaxes the standard step condition of warping paths by additionally allowing zero-steps of the form $w_l - w_{l+1} = (0,0)$. Zero-steps duplicate points $w_l$ and thereby admit multiple correspondences between the same elements of the underlying time series. 

\begin{notation}
We identify warping walks $(\phi, \psi)$ with their associated warping matrices $(\Phi, \Psi)$ and interchangeably write $(\phi, \psi) \in \S{W}_{m,n}$ and $(\Phi, \Psi) \in \S{W}_{m,n}$.
\end{notation}

\medskip

A warping walk $p = (p_1, \ldots, p_{\ell})$ is a \emph{warping path} if $p_{l+1} \neq p_l$ for all $l \in [\ell-1]$. By $\S{P}_{m,n}$ we denote the subset of all warping paths of order $m \times n$. Thus, a warping path is a warping walk without consecutive duplicates. Equivalently, a warping path satisfies the boundary condition and the strict step condition
\begin{enumerate}
\item[$2'.$] $w_{l+1} - w_{l} \in \B \times \B \setminus \cbrace{0,0}$ for all $l \in [\ell-1]$ \hfill(\emph{strict step condition})
\end{enumerate}

Warping walks are words over the alphabet $\S{A} = \N \times \N$ and warping paths are irreducible words over $\S{A}$. For the sake of convenience, we regard $\S{W}_{m,n}$ and $\S{P}_{m,n}$ as subsets of $\S{A}^*$. The \emph{condensation map}
\[
c: \S{A}^* \rightarrow \S{A}^*, \quad w \mapsto w^*
\]
sends a word $w$ over $\S{A}$ to its condensed form $w^*$. 
\begin{lemma}\label{lemma:c(W)=P}
Let $\S{A} = \N \times \N$ and let $c: \S{A}^* \rightarrow \S{A}^*$ be the condensation map. Then $c(\S{W}_{m,n}) = \S{P}_{m,n}$ for all $m,n \in \N$. 
\end{lemma}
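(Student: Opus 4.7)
The plan is to prove the set equality by establishing both inclusions separately, exploiting the fact that warping paths are precisely the irreducible warping walks.

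For the easy inclusion $\S{P}_{m,n} \subseteq c(\S{W}_{m,n})$, I would argue that any warping path $p = (p_1, \ldots, p_\ell) \in \S{P}_{m,n}$ is automatically a warping walk: the strict step condition $p_{l+1}-p_l \in \B\times\B\setminus\{(0,0)\}$ implies the weak step condition $p_{l+1}-p_l \in \B\times\B$, and the boundary conditions are identical. Since $p$ has no consecutive duplicates by the strict step condition, its prime factorization is $\Pi(p) = p_1 \cdots p_\ell$ (each $p_l$ is its own prime factor), so $p$ is irreducible and $c(p) = p$. Thus $p \in c(\S{W}_{m,n})$.

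For the reverse inclusion $c(\S{W}_{m,n}) \subseteq \S{P}_{m,n}$, I would start with an arbitrary warping walk $w \in \S{W}_{m,n}$ of length $\ell$ and its prime factorization $\Pi(w) = q_1 \cdots q_k$ where $q_t = a_t^{n_t}$ for alphabet elements $a_t \in \N \times \N$ with $a_t \neq a_{t+1}$. By the construction in the proof of Proposition~\ref{prop:existence-of-condensed-form}, the condensed form is $w^* = a_1 \cdots a_k$. I then need to verify that $w^*$ satisfies the three defining conditions of a warping path of order $m \times n$. The boundary conditions follow because $a_1$ is the first entry of the first block $q_1$ which equals $w_1 = (1,1)$, and $a_k$ is the last entry of the last block $q_k$ which equals $w_\ell = (m,n)$. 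The strict step condition is the key step: for each $t \in [k-1]$, there is an index $l_t$ in $[\ell-1]$ at which $w$ transitions from block $q_t$ to block $q_{t+1}$, so $w_{l_t} = a_t$ and $w_{l_t+1} = a_{t+1}$; the weak step condition on $w$ gives $a_{t+1} - a_t \in \B \times \B$, and the maximality condition of the prime factorization forces $a_{t+1} \neq a_t$, hence $a_{t+1} - a_t \in \B \times \B \setminus \{(0,0)\}$.

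The main obstacle, though entirely routine, is the bookkeeping that translates prime factorization vocabulary back into the dynamic time warping language of consecutive steps. The essential content is simply that collapsing maximal runs of duplicate lattice points in a walk preserves the endpoints and turns each between-block transition into a genuine path step. A small edge case worth noting is $k=1$, which forces $m = n = 1$ and $w^* = (1,1) \in \S{P}_{1,1}$, but this causes no trouble. Together the two inclusions yield $c(\S{W}_{m,n}) = \S{P}_{m,n}$.
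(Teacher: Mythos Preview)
Your proof is correct and follows the same double-inclusion strategy as the paper. The only noteworthy difference is in the direction $\S{P}_{m,n} \subseteq c(\S{W}_{m,n})$: you observe directly that every warping path $p$ is already a warping walk (strict step condition implies weak step condition) and is its own condensed form, so $c(p)=p$; the paper instead builds an auxiliary walk $w=(p_1,\ldots,p_\ell,p_\ell)$ by duplicating the last point and then argues that $p$ is the condensed form of $w$. Your route is slightly more economical, since $\S{P}_{m,n}\subseteq\S{W}_{m,n}$ by definition, while the paper's detour through $w$ is unnecessary but still correct. For the reverse inclusion your argument is the same as the paper's, just spelled out in more detail (the paper compresses your verification of the boundary and strict step conditions into a single sentence).
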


\begin{proof}
Let $m,n \in \N$. 

\medskip

\noindent
$\S{P}_{m,n}\subseteq c(\S{W}_{m,n})$: Let $p = (p_1, \ldots, p_\ell) \in \S{P}_{m,n}$ be a warping path. From the strict step condition follows that $p$ is irreducible. Consider the word $w = (p_1, \ldots, p_\ell, p_{\ell+1})$, where $p_\ell = p_{\ell+1}$. The word $w$ satisfies the boundary and weak step condition. Hence, $w$ is a warping walk with unique prime factorization $\Pi(w) = (p_1 \cdots p_\ell)$. This shows that $p$ is the unique condensed form of $w$. Hence, we have $\S{P}_{m,n}\subseteq c(\S{W}_{m,n})$.

\medskip

\noindent
$c(\S{W}_{m,n})\subseteq \S{P}_{m,n}$: A warping walk $w \in \S{W}_{m,n}$ satisfies the boundary and the weak step condition. As an irreducible word, the condensed form $w^* = c(w)$ satisfies the boundary and the strict step condition. Hence, $w^*$ is a warping path. This proves $c(\S{W}_{m,n})\subseteq \S{P}_{m,n}$.
\end{proof}

\medskip

The \emph{dtw-distance} is a distance function on $\S{T}$ of the form
\[
\dtw: \S{T} \times \S{T} \rightarrow \R_{\geq 0}, \quad (x, y) \mapsto \min \cbrace{\norm{\Phi x - \Psi y} \,:\, (\Phi, \Psi) \in \S{P}_{\abs{x},\abs{y}}}
\]
From \cite{Schultz2018}, Prop.~A.2 follows that the dtw-distance coincides with the standard definition of the dtw-distance as presented in Section \ref{sec:results}. The next result expresses the dtw-distance in terms of warping walks.

\begin{proposition}\label{prop:dtw}
Let $x, y \in \S{T}$ be two time series. Then we have
\begin{align*}
\dtw(x, y) = \min \cbrace{\norm{\Phi x - \Psi y} \,:\, (\Phi, \Psi) \in \S{W}_{\abs{x},\abs{y}}}.
\end{align*}
\end{proposition}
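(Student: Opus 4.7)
The plan is to show the two inequalities separately. Since $\S{P}_{m,n} \subseteq \S{W}_{m,n}$ (every warping path trivially satisfies the weak step condition), the minimum over the larger set $\S{W}_{\abs{x},\abs{y}}$ is at most the minimum over $\S{P}_{\abs{x},\abs{y}}$, giving one direction immediately. The nontrivial direction is to show that allowing zero-steps cannot produce a strictly smaller cost than the best warping path.

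For the reverse inequality, I would take an arbitrary warping walk $w = (\Phi, \Psi) \in \S{W}_{\abs{x},\abs{y}}$ and show that the condensation $w^* = c(w)$, which is a warping path by Lemma \ref{lemma:c(W)=P}, satisfies $\norm{\Phi^* x - \Psi^* y} \leq \norm{\Phi x - \Psi y}$, where $(\Phi^*, \Psi^*)$ is the pair of warping matrices associated with $w^*$. Unfolding $\norm{\Phi x - \Psi y}^2$ as a sum over the points $w_l = (\phi(l), \psi(l))$ of the squared terms $(x_{\phi(l)} - y_{\psi(l)})^2$, each zero-step $w_{l+1} = w_l$ contributes an additional copy of the same squared difference as $w_l$. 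Passing from $w$ to $w^*$ simply removes these duplicate copies. Since all summands are non-negative, the cost can only decrease (or stay the same), so $\norm{\Phi^* x - \Psi^* y} \leq \norm{\Phi x - \Psi y}$.

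Combining these observations yields
\[
\min_{(\Phi,\Psi) \in \S{W}_{\abs{x},\abs{y}}} \norm{\Phi x - \Psi y} \;\geq\; \min_{(\Phi^*,\Psi^*) \in \S{P}_{\abs{x},\abs{y}}} \norm{\Phi^* x - \Psi^* y} \;=\; \dtw(x,y),
\]
which together with the trivial direction proves the claim. The only subtle point — and what I consider the main step to justify carefully — is the bookkeeping that passing from $w$ to $w^*$ exactly deletes rows of $\Phi$ and $\Psi$ at indices where consecutive points coincide, so that the vector $\Phi^* x - \Psi^* y$ is obtained from $\Phi x - \Psi y$ by deleting coordinates. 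Once that is laid out, the norm inequality is immediate from non-negativity of squared real entries.
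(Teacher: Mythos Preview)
Your proposal is correct and follows essentially the same approach as the paper. The paper's proof also applies Lemma~\ref{lemma:c(W)=P} to condense an arbitrary warping walk $w$ to a warping path $p=c(w)$, then uses that $\abs{p}\leq\abs{w}$ (via Prop.~\ref{prop:minimum-length-of-condensed-form}) together with non-negativity of the summands to conclude $C_p(x,y)\leq C_w(x,y)$; your version is slightly more explicit about the trivial direction from $\S{P}_{m,n}\subseteq\S{W}_{m,n}$ and about the row-deletion bookkeeping, but the substance is identical.
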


\begin{proof}
Let $w \in \S{W}_{m,n}$ be a warping walk. Then $p=c(w)$ is a warping path and a condensed form of $w$ by Lemma \ref{lemma:c(W)=P}. From Prop.~\ref{prop:minimum-length-of-condensed-form} follows that $\abs{p} \leq \abs{w}$. Thus, we obtain
\[
C_p(x, y) = \sum_{(i,j) \in p} (x_i-y_j)^2 \leq \sum_{(i,j) \in w} (x_i-y_j)^2 = C_w(x,y).
\] 
This implies
\[
\dtw(x, y) \leq \min \cbrace{\norm{\Phi x - \Psi y} \,:\, (\Phi, \Psi) \in \S{W}_{m,n} \setminus \S{P}_{m,n}}
\]
and proves the assertion.
\end{proof}

\medskip

We call a warping walk $(\Phi, \Psi)$ \emph{optimal} if $\norm{\Phi x - \Psi y} = \dtw(x, y)$. From Prop.~\ref{prop:dtw} follows that transition from warping paths to warping walks leaves the dtw-distance unaltered and that we can condense every optimal warping walk to an optimal warping path by removing consecutive duplicates.

\subsection{Properties of Warping Functions}\label{subsec:properties-of-warping-walks}

In this section, we compile results on compositions of warping functions and warping walks. We begin with showing that warping functions are closed under compositions. 
\begin{lemma}\label{lemma:composition-01}
Let $\phi:[\ell] \rightarrow [m]$ and $\psi:[m] \rightarrow [n]$ be warping functions. Then the composition
\[
\theta: [\ell] \rightarrow [n], \quad l \mapsto \psi(\phi(l))
\]
is also a warping function. 
\end{lemma}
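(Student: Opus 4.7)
The plan is to verify directly that $\theta = \psi \circ \phi$ satisfies the two defining properties of a warping function, namely surjectivity and monotonic increase. Both properties follow immediately from the corresponding properties of $\phi$ and $\psi$, so this is a routine composition argument with no real obstacle; the only thing to keep track of is that the domain/codomain match up, and they do by construction since $\phi$ maps into $[m]$ which is the domain of $\psi$.

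First, I would handle monotonicity. Pick any $l_1, l_2 \in [\ell]$ with $l_1 \leq l_2$. Since $\phi$ is monotonically increasing, $\phi(l_1) \leq \phi(l_2)$ in $[m]$. Applying $\psi$, which is also monotonically increasing, yields $\psi(\phi(l_1)) \leq \psi(\phi(l_2))$, i.e.\ $\theta(l_1) \leq \theta(l_2)$.

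Next, I would verify surjectivity. Fix $k \in [n]$. Since $\psi:[m] \rightarrow [n]$ is surjective, there is some $j \in [m]$ with $\psi(j) = k$. Since $\phi:[\ell] \rightarrow [m]$ is surjective, there is some $l \in [\ell]$ with $\phi(l) = j$. Then $\theta(l) = \psi(\phi(l)) = \psi(j) = k$, so $k$ lies in the image of $\theta$. Combining both facts, $\theta$ is a surjective monotonically increasing function from $[\ell]$ to $[n]$, hence a warping function. Note that the length relation $\ell \geq m \geq n$ is automatic from the existence of the warping functions $\phi$ and $\psi$.
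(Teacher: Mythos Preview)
Your proof is correct and follows exactly the same approach as the paper, which simply notes that the composition of surjective (respectively monotonic) functions is surjective (respectively monotonic). You have merely spelled out these two routine facts in detail, which is fine.
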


\begin{proof}
The assertion follows, because the composition of surjective (monotonic) functions is surjective (monotonic). 
\end{proof}

\medskip

The composition of warping functions is contravariant to the composition of their associated warping matrices. Suppose that $\Phi \in \B^{\ell \times m}$ and $\Psi \in \B^{m \times n}$ are the warping matrices of the warping functions $\phi$ and $\psi$ from Lemma \ref{lemma:composition-01}, respectively. Then the warping matrix of the composition $\theta = \psi \circ \phi$ is the matrix product $\Phi\Psi \in \B^{\ell \times n}$. The next result shows that warping walks are closed for a special form of compositions. 

\begin{lemma}\label{lemma:composition-02}
Let $(\phi, \psi) \in \S{W}_{m,n}$ be a warping walk and let $\theta: [m] \rightarrow [r]$ be a warping function. Then $(\theta \circ \phi, \psi)$ is a warping walk in $\S{W}_{r,n}$. 
\end{lemma}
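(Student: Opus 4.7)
My plan is to simply unfold the definition of a warping walk and verify each required condition for the pair $(\theta \circ \phi, \psi)$. Recall that a warping walk of order $r \times n$ is, by definition, a pair of warping functions with a common domain whose codomains are $[r]$ and $[n]$, respectively. So the task reduces to checking three things: both components are warping functions, they share a domain, and their codomains are $[r]$ and $[n]$.

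First I would observe that $\phi : [\ell] \to [m]$ is a warping function (since $(\phi,\psi) \in \S{W}_{m,n}$) and $\theta : [m] \to [r]$ is a warping function by hypothesis. Lemma \ref{lemma:composition-01} then directly applies to the composition, yielding that $\theta \circ \phi : [\ell] \to [r]$ is a warping function into $[r]$. Next, $\psi : [\ell] \to [n]$ is a warping function by the assumption that $(\phi,\psi)$ is a warping walk, and its domain is already $[\ell]$, matching the domain of $\theta \circ \phi$.

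Putting these pieces together, the pair $(\theta \circ \phi, \psi)$ is a pair of warping functions with common domain $[\ell]$ and codomains $[r]$ and $[n]$, which is exactly the definition of an element of $\S{W}_{r,n}$. I do not anticipate any real obstacle: this lemma is an immediate corollary of Lemma \ref{lemma:composition-01}, with the only extra observation being that post-composing on the first component does not disturb the matching-domain condition because $\psi$ is left untouched. No use of warping matrices or of the weak step condition is needed beyond what is already encoded in the definition of warping function.
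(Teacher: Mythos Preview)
Your proposal is correct and matches the paper's own proof, which is a one-line appeal to Lemma~\ref{lemma:composition-01} together with the definition of a warping walk. You have simply written out explicitly what the paper leaves implicit.
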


\begin{proof}
Follows from Lemma \ref{lemma:composition-01} and by definition of a warping walk.
\end{proof}

\medskip

Let $\phi:[m] \rightarrow [n]$ and $\phi':[m'] \rightarrow [n]$ be warping functions. The \emph{pullback} of $\phi$ and $\phi'$ is the set of the form
\[
\phi \otimes \phi' = \cbrace{(u,u') \in [m] \times [m'] \,:\, \phi(u) = \phi'(u')}.
\]
By $\pi: \phi \otimes \phi' \rightarrow [m]$ and $\pi': \phi \otimes \phi' \rightarrow [m']$ we denote the canonical projections. Let 
$\psi = \phi \circ \pi$ and $\psi' = \phi' \circ \pi'$ be the compositions that send elements from the pullback $\phi \otimes \phi'$ to the set $[n]$. The \emph{fiber} of $i \in [n]$ under the map $\psi$ is the set $\S{F}(i) = \cbrace{(u,u') \in \phi \otimes \phi' \,:\, \psi(u,u') = i}$. In a similar way, we can define the fiber of $i$ under the map $\psi'$. The next results show some properties of pullbacks and their fibers. 

\begin{lemma}\label{lemma:psi=psi'}
Let $\phi \otimes \phi'$ be a pullback of warping functions $\phi$ and $\phi'$. Then the compositions $\psi = \phi \circ \pi$ and $\psi' = \phi' \circ \pi'$ are surjective and satisfy $\psi = \psi'$.
\end{lemma}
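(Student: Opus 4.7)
\medskip

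The plan is to unpack the definitions directly; no substantive machinery is needed here, since this is essentially the universal property of the pullback made explicit.

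For the equality $\psi = \psi'$, I would start from an arbitrary element $(u,u') \in \phi \otimes \phi'$. By the very definition of the pullback, such a pair satisfies $\phi(u) = \phi'(u')$. Substituting the definitions of the canonical projections, one has
\[
\psi(u,u') = \phi(\pi(u,u')) = \phi(u) = \phi'(u') = \phi'(\pi'(u,u')) = \psi'(u,u'),
\]
so the two compositions agree pointwise on the pullback. Since $(u,u')$ was arbitrary, $\psi = \psi'$.

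For the surjectivity of $\psi$ (and hence of $\psi'$, via the equality just proved), I would fix an arbitrary $i \in [n]$. Because $\phi$ is a warping function it is surjective, so there exists $u \in [m]$ with $\phi(u) = i$. Likewise, surjectivity of $\phi'$ supplies some $u' \in [m']$ with $\phi'(u') = i$. Then $\phi(u) = \phi'(u')$, so $(u,u') \in \phi \otimes \phi'$, and $\psi(u,u') = \phi(u) = i$. Hence every $i \in [n]$ lies in the image of $\psi$.

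There is no real obstacle: the lemma is a one-liner once the pullback definition is written out, and the only ingredient besides the universal property is the surjectivity built into the notion of a warping function. I would keep the write-up short and move on, since this lemma is just a setup step preparing for composition results about warping walks that follow.
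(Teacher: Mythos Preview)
Your proof is correct and essentially identical to the paper's: the chain of equalities for $\psi = \psi'$ is verbatim the same, and the surjectivity argument differs only cosmetically---the paper invokes surjectivity of the canonical projections and then cites that compositions of surjections are surjective, whereas you directly exhibit a preimage $(u,u')$ of an arbitrary $i \in [n]$. Both arguments rest on the same ingredient (surjectivity of $\phi$ and $\phi'$), so there is nothing substantive to compare.
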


\begin{proof}
Warping functions and the natural projections are surjective. As a composition of surjective functions, the functions $\psi$ and $\psi'$ are surjective. For every $(u,u') \in \phi \otimes \phi'$ we have
\[
\psi(u,u') = \phi(\pi(u,u')) = \phi(u) = \phi'(u') = \phi'(\pi'(u,u')) = \psi'(u,u').
\]
This proves the assertion $\psi = \psi'$. 
\end{proof}

\medskip

Lemma \ref{lemma:psi=psi'} has the following implications: First, from $\psi = \psi'$ follows that the fiber of $i$ under the map $\psi$ coincides with the fiber of $i$ under the map $\psi'$. Second, since $\psi$ is surjective, the fibers $\S{F}(i)$ are non-empty for every $i \in [n]$. Third, the fibers $\S{F}(i)$ form a partition of the pullback $\phi \otimes \phi'$.

\begin{lemma}\label{lemma:fibers-are-order-preserving}
Let $\phi \otimes \phi'$ be a pullback of warping functions $\phi:[m] \rightarrow [n]$ and $\phi':[m'] \rightarrow [n]$. Suppose that $i, j \in [n]$ with $i < j$. From $(u_i, u'_i) \in \S{F}(i)$ and $(u_j, u'_j) \in \S{F}(j)$ follows $u_i \leq u_j$ and $u'_i \leq u'_j$.
\end{lemma}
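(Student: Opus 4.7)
The plan is to peel back the definitions of the pullback and the fibers, convert membership in $\S{F}(i)$ and $\S{F}(j)$ into plain equalities $\phi(u_i)=i$ and $\phi(u_j)=j$ (and similarly for $\phi'$), and then argue by contraposition using the monotonicity of the warping functions. The argument for the second inequality is entirely symmetric to the argument for the first.

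More concretely, by definition of the canonical projections we have $\psi(u,u')=\phi(\pi(u,u'))=\phi(u)$ and $\psi'(u,u')=\phi'(\pi'(u,u'))=\phi'(u')$ for every $(u,u')\in\phi\otimes\phi'$. Lemma \ref{lemma:psi=psi'} tells us that $\psi=\psi'$, so the fiber $\S{F}(i)$ is characterized by
\[
(u,u')\in\S{F}(i)\ \Longleftrightarrow\ \phi(u)=\phi'(u')=i.
\]
Applied to the given pairs, this yields $\phi(u_i)=\phi'(u'_i)=i$ and $\phi(u_j)=\phi'(u'_j)=j$.

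Now suppose for contradiction that $u_i>u_j$. Since $\phi$ is a warping function, it is monotonically (non-strictly) increasing, and thus $\phi(u_i)\geq\phi(u_j)$, i.e.\ $i\geq j$, contradicting the hypothesis $i<j$. Hence $u_i\leq u_j$. The same reasoning applied to $\phi'$ (using $\phi'(u'_i)=i$ and $\phi'(u'_j)=j$) gives $u'_i\leq u'_j$.

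I do not expect any genuine obstacle here; the only point requiring mild care is that \enquote{monotonically increasing} has to be read as \emph{non-decreasing}, which is forced by the surjectivity condition $[\ell]\twoheadrightarrow[n]$ with $\ell\ge n$ and is consistent with the element-replication behaviour of warping matrices established in Lemma \ref{lemma:expansion-by-phi}. Once this is fixed, the lemma reduces to one contrapositive application of monotonicity per coordinate.
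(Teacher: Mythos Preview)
Your proposal is correct and follows essentially the same route as the paper: both unpack fiber membership to obtain $\phi(u_i)=i$ and $\phi(u_j)=j$, then argue by contradiction using the monotonicity of $\phi$ (and symmetrically for $\phi'$). The only difference is cosmetic---you spell out the role of Lemma~\ref{lemma:psi=psi'} and the $\psi=\phi\circ\pi$ identity a bit more explicitly, whereas the paper simply states that $(u_i,u'_i)\in\S{F}(i)$ gives $\phi(u_i)=i$ directly.
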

\begin{proof}
We show the first assertion $u_i \leq u_j$. The proof for the second assertion $u'_i \leq u'_j$ is analogous. Suppose that $u_i > u_j$. From $(u_i, u'_i) \in \S{F}(i)$ follows $\phi(u_i) = i$ and from $(u_j, u'_j) \in \S{F}(j)$ follows $\phi(u_j) = j$. Since $\phi$ is monotonic, we have $i = \phi(u_i) \geq \phi(u_j) = j$, which contradicts the assumption that $i < j$. This shows $u_i \leq u_j$. 
\end{proof}

\begin{lemma}\label{lemma:representation-of-fibers}
Let $\phi \otimes \phi'$ be a pullback of warping functions $\phi:[m] \rightarrow [n]$ and $\phi':[m'] \rightarrow [n]$. For every $i \in [n]$ there are elements $u_i \in [m]$, $u'_i \in [m']$ and $k_i, l_i \in \N$ such that 
\begin{align*}
\pi(\S{F}(i)) = \cbrace{u_i, u_i + 1, \ldots, u_i + k_i} \quad \text{ and } \quad
\pi'(\S{F}(i)) = \cbrace{u'_i, u'_i + 1, \ldots, u'_i + l_i}.
\end{align*}
\end{lemma}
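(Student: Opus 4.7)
The plan is to reduce the statement to the observation that each projection of the fiber $\S{F}(i)$ equals the preimage of $i$ under the corresponding warping function, and then to use surjectivity and monotonicity to identify that preimage with a block of consecutive integers. By the symmetry between $\phi$ and $\phi'$ in the definition of the pullback, it suffices to prove the first assertion; the second follows by the same argument with the roles interchanged.

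First I would establish the equality $\pi(\S{F}(i)) = \phi^{-1}(i)$. The inclusion $\subseteq$ is immediate from the definitions of the fiber and the canonical projection. For $\supseteq$, take any $u \in \phi^{-1}(i)$; since $\phi'$ is a warping function and hence surjective, there exists $u' \in [m']$ with $\phi'(u') = i = \phi(u)$, so $(u, u') \in \phi \otimes \phi'$ and in fact $(u, u') \in \S{F}(i)$, giving $u = \pi(u, u') \in \pi(\S{F}(i))$.

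Next I would show that $\phi^{-1}(i)$ is a non-empty set of consecutive integers in $[m]$. Non-emptiness follows from the surjectivity of $\phi$. For consecutiveness, suppose $a, c \in \phi^{-1}(i)$ with $a \leq b \leq c$ for some $b \in [m]$; since $\phi$ is monotonically increasing, $i = \phi(a) \leq \phi(b) \leq \phi(c) = i$, so $\phi(b) = i$ and $b \in \phi^{-1}(i)$. Setting $u_i = \min \phi^{-1}(i)$ and $k_i = \max \phi^{-1}(i) - u_i$ then yields $\pi(\S{F}(i)) = \cbrace{u_i, u_i + 1, \ldots, u_i + k_i}$, as required.

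Finally, I would invoke the exact same reasoning with $\phi$ and $\phi'$ swapped (using surjectivity of $\phi$ to lift preimages of $i$ under $\phi'$ into the pullback) to conclude $\pi'(\S{F}(i)) = \cbrace{u'_i, u'_i + 1, \ldots, u'_i + l_i}$ for suitable $u'_i \in [m']$ and $l_i \in \N_0$. There is no real obstacle; the only subtle point is recognizing that surjectivity of the \emph{other} warping function is exactly what guarantees that every element of $\phi^{-1}(i)$ actually occurs as a first coordinate in the pullback, so that the projection does not shrink the preimage.
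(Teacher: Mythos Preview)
Your argument is correct and in fact cleaner than the paper's. You first identify $\pi(\S{F}(i))$ with the preimage $\phi^{-1}(i)$, using surjectivity of the \emph{other} warping function $\phi'$ to lift elements into the pullback; then a one-line monotonicity argument shows that the preimage of any point under a monotone surjection is an interval of consecutive integers. The paper takes a different route: it writes $\pi(\S{F}(i)) = \{u_i < u_{i+1} < \cdots < u_{i+k_i}\}$ as an abstract finite set, assumes for contradiction that there is a gap $u_i + r \notin \pi(\S{F}(i))$, observes that $u_i + r$ must land in some other projected fiber $\pi(\S{F}(j))$ (this step tacitly uses the same surjectivity fact you isolate explicitly), and then invokes the previous order-preserving lemma on fibers to derive a contradiction in each of the cases $i<j$ and $i>j$. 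Your approach avoids the case split and the dependence on the order-preserving lemma entirely, at the cost of stepping outside the fiber language; the paper's approach stays within the pullback machinery it has set up. One small remark: you correctly note $k_i \in \N_0$ at the end, which is more careful than the lemma's stated $k_i \in \N$, since the preimage may well be a singleton.
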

\begin{proof}
We show the assertion for $\pi(\S{F}(i))$. The proof of the assertion for $\pi'(\S{F}(i))$ is analogous. Let $i \in [n]$ and let $\S{G}(i) = \pi(\S{F}(i))$. Since fibers are non-empty and finite, we can find elements $u_i \in [m]$ and $k_i \in \N$ such that 
$\S{G}(i) = \cbrace{u_i, u_{i+1}, \ldots, u_{i + k_i}}$ with $u_i < u_{i+1} < \cdots < u_{i + k_i}$. It remains to show that $u_{i+r} = u_i + r$ for all $r \in [k_i]$.

We assume that there is a smallest number $r \in [k_i]$ such that $u_{i+r} \neq u_i + r$. Then $r \geq 1$ and therefore $i+r-1 \geq i$. This shows that $u_{i+r-1} \in \S{G}(i)$. Observe that $u_{i+r-1} = u_i + r-1$, because $r$ is the smallest number violating $u_{i+r} = u_i + r$. From $u_{i+r-1} < u_{i+r}$ together with $u_i + r \notin \S{G}(i)$ follows
\[
u_{i+r-1} = u_i + r-1 < u_i + r < u_{i+r}.
\]
Recall that the fibers form a partition of the pullback $\phi \otimes \phi'$. Then there is a $j \in [n] \setminus \cbrace{i}$ such that $u_i+r \in \S{G}(j)$. We distinguish between two cases:\footnote{The case $i = j$ can not occur by assumption.} 

\medskip

\noindent
Case $i < j$: From Lemma \ref{lemma:fibers-are-order-preserving} follows that $u_{i+r} \leq u_i+r$, which contradicts the previously derived inequality $u_i + r < u_{i+r}$.

\medskip

\noindent
Case $j < i$: From Lemma \ref{lemma:fibers-are-order-preserving} follows that $u_i+r \leq u_i$. Observe that either $u_i = u_{i+r-1}$ or $u_i < u_{i+r-1}$. We obtain the contradiction $u_i \leq u_{i+r-1} < u_i + r \leq u_i$. 

\medskip

From both contradictions follows that $u_{i+r} = u_i + r$ for every $r \in [k_i]$. This completes the proof. 
\end{proof}

\medskip

Lemma \ref{lemma:pullback} uses pullbacks to show that pairs of warping functions with the same co-domain can be equalized by composition with a suitable warping walk. 

\begin{lemma}\label{lemma:pullback}
Let $\phi:[m] \rightarrow [n]$ and $\phi':[m'] \rightarrow [n]$ be warping functions. Then there are warping functions $\theta: [\ell] \rightarrow [m]$ and $\theta': [\ell] \rightarrow [m']$ for some $\ell \geq \max(m, m')$ such that $\phi \circ \theta = \phi' \circ \theta'$. 
\end{lemma}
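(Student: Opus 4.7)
My approach is to use the pullback $\phi \otimes \phi'$ together with the fiber decomposition to explicitly construct a walk whose two coordinate-projections are the desired warping functions. The plan is to produce a sequence of points $(\theta(1), \theta'(1)), \ldots, (\theta(\ell), \theta'(\ell))$ lying inside $\phi \otimes \phi'$ such that both coordinate-sequences are surjective and monotonic. Since each such point $(u, u')$ satisfies $\phi(u) = \phi'(u')$ by definition of the pullback, the equation $\phi \circ \theta = \phi' \circ \theta'$ comes for free.

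First, I would observe that Lemma \ref{lemma:representation-of-fibers} gives, for each $i \in [n]$, a rectangle
\[
\pi(\S{F}(i)) \times \pi'(\S{F}(i)) \;=\; \cbrace{u_i, \ldots, u_i + k_i} \times \cbrace{u'_i, \ldots, u'_i + l_i},
\]
and this rectangle equals $\S{F}(i)$ itself (since $\S{F}(i) = \phi^{-1}(i) \times \phi'^{-1}(i)$). Moreover, surjectivity and monotonicity of $\phi$ and $\phi'$ force $u_1 = u'_1 = 1$, $u_n + k_n = m$, $u'_n + l_n = m'$, and $u_{i+1} = u_i + k_i + 1$, $u'_{i+1} = u'_i + l_i + 1$ for each $i < n$, so the rectangles $\S{F}(1), \ldots, \S{F}(n)$ tile the pullback in a monotone \enquote{staircase} fashion.

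Next, within each rectangle $\S{F}(i)$ I would construct a finite sequence of points starting at $(u_i, u'_i)$ and ending at $(u_i + k_i, u'_i + l_i)$, using only steps of the form $(1,0)$, $(0,1)$, or $(1,1)$. Any such staircase through the rectangle works, for example first incrementing the first coordinate $k_i$ times and then the second coordinate $l_i$ times. Concatenating these sequences over $i = 1, \ldots, n$ produces a single sequence $w = ((\theta(1), \theta'(1)), \ldots, (\theta(\ell), \theta'(\ell)))$ in the pullback. The monotone-tiling observation guarantees that the endpoint of the walk in $\S{F}(i)$ and the starting point of the walk in $\S{F}(i+1)$ differ by $(1,1)$, so the concatenation is legitimate and both coordinate projections are monotonically non-decreasing throughout.

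Finally, I would verify the two properties required of $\theta$ and $\theta'$. Monotonicity follows by construction. Surjectivity holds because within each $\S{F}(i)$ the first coordinate visits every value in $\pi(\S{F}(i)) = \phi^{-1}(i)$ and, across all $i$, these fibers union to $[m]$; the analogous argument applies to $\theta'$ and $[m']$. Consequently $\ell \geq m$ and $\ell \geq m'$, giving $\ell \geq \max(m, m')$, and since $(\theta(s), \theta'(s)) \in \phi \otimes \phi'$ for every $s$, we have $\phi(\theta(s)) = \phi'(\theta'(s))$. The only step that requires a little care is the bookkeeping between consecutive fibers; everything else is routine once the fiber rectangles of Lemma \ref{lemma:representation-of-fibers} are in hand.
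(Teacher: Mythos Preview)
Your proposal is correct and follows essentially the same approach as the paper: decompose the pullback into the fiber rectangles of Lemma~\ref{lemma:representation-of-fibers}, thread a monotone path through each rectangle from its lower-left to its upper-right corner, and concatenate. The paper packages this slightly differently---it forms a specific subset $\S{Z}(i)\subseteq\S{F}(i)$ (a diagonal-then-straight path), takes the union $\S{Z}$, and then uses the lexicographic order on $\S{Z}$ to produce the bijection $f:[\ell]\to\S{Z}$---whereas you construct the walk directly; but the content is the same, and your surjectivity argument is in fact a bit more explicit than the paper's.
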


\begin{proof}
We first construct a suitable set $\S{Z}$ of cardinality $\abs{Z} = \ell$. For this, let $\phi \otimes \phi'$ be the pullback of $\phi$ and $\phi'$ and let $i \in [n]$. From Lemma \ref{lemma:representation-of-fibers} follows that there are elements $u_i \in [m]$, $u'_i \in [m']$ and $k_i, l_i \in \N$ such that 
\begin{align*}
\pi(\S{F}(i)) = \cbrace{u_i, u_{i + 1}, \ldots, u_{i + k_i}} 
\qquad \text{and} \qquad
\pi'(\S{F}(i)) = \cbrace{u'_i, u'_{i + 1}, \ldots, u'_{i + l_i}}.
\end{align*}
where $u_{i+r} = u_i + r$ for all $r \in [k_i]$ and $u'_{i+s} = u'_i + s$ for all $s \in [l_i]$. Without loss of generality we assume that $k_i \leq l_i$. For every $i \in [n]$, we construct the subset
\[
\S{Z}(i) = \cbrace{(u_i, u'_i), (u_{i+1}, u'_{i+1}), \ldots, (u_{i + k_i}, u'_{i+k_i}), (u_{i + k_i}, u'_{i+k_i+1})\ldots, (u_{i + k_i}, u'_{i+l_i})} \subseteq \S{F}(i)
\]
and form their disjoint union
\[
\S{Z} = \bigcup_{i \in [n]} \S{Z}(i) \subseteq \phi \otimes \phi'.
\]
Let $\leq_{\S{Z}}$ be the lexicographical order on $\S{Z}$ defined by
\[
(u_i,u'_i) \leq_{\S{Z}} (u_j,u'_j) \quad \text{ if and only if } \quad (u_i < u_j) \text{ or } (u_i = u_j \text{ and } u'_i \leq u'_j).
\]
for all $(u_i,u'_i), (u_j,u'_j) \in \S{Z}$. We show that the properties of $\S{Z}$ yield a tighter condition on $\leq_{\S{Z}}$. Let $(u_i,u'_i), (u_j,u'_j) \in \S{Z}$ such that $(u_i,u'_i) \leq_{\S{Z}} (u_j,u'_j)$. Then there are $i,j \in [n]$ such that $(u_i,u'_i) \in \S{Z}(i)$ and $(u_j,u'_j) \in \S{Z}(j)$. We distinguish between three cases:
\begin{enumerate}
\itemsep0em
\item $i = j$: By construction of $\S{Z}(i)$, the relationship $u_i \leq u_j$ gives $u'_i \leq u'_j$.
\item $i < j$: From Lemma \ref{lemma:fibers-are-order-preserving} follows that $u_i \leq u_j$ and $u'_i \leq u'_j$. 
\item $i > j$: Lemma \ref{lemma:fibers-are-order-preserving} yields $u_i \geq u_j$ and $u'_i \geq u'_j$. The assumption $(u_i,u'_i) \leq_{\S{Z}} (u_j,u'_j)$ gives $u_i \leq u_j$. Then from $u_i \geq u_j$ and $u_i \leq u_j$ follows $u_i = u_j$. In addition, we have $u'_i \leq u'_j$ by $(u_i,u'_i) \leq_{\S{Z}} (u_j,u'_j)$ and $u_i = u_j$. Hence, from $u'_i \geq u'_j$ and $u'_i \leq u'_j$ follows $u'_i = u'_j$. In summary, we have $u_i = u_j$ and $u'_i = u'_j$.
\end{enumerate}
The case distinction yields 
\[
(u_i,u'_i) \leq_{\S{Z}} (u_j,u'_j) \quad \text{ if and only if } \quad (u_i \leq u_j) \text{ and } (u'_i \leq u'_j).
\]
for all $(u_i,u'_i), (u_j,u'_j) \in \S{Z}$. Suppose that $\ell = \abs{\S{Z}}$. Then there is a bijective function
\[
f: [\ell] \rightarrow \S{Z}, \quad i \mapsto f(i)
\]
such that $i \leq j$ implies $f(i) \leq_{\S{Z}} f(j)$ for all $i,j \in [\ell]$. 

Next, we show that the functions $\theta = \pi \circ f$ and $\theta' = \pi' \circ f$ are warping functions on $[\ell]$. Both functions $\theta$ and $\theta'$ are surjective as compositions of surjective functions. To show that $\theta$ and $\theta'$ are monotonic, we assume that $i, j \in [\ell]$ with $i < j$. Suppose that $f(i) = (u_i,u'_i)$ and $f(j) = (u_j,u'_j)$. From $i < j$ follows $(u_i,u'_i) \leq_{\S{Z}} (u_j,u'_j)$ by construction of $f$. From $(u_i,u'_i) \leq_{\S{Z}} (u_j,u'_j)$ follows $u_i \leq u_j$ and $u'_i \leq u'_j$ as shown in the first part of this proof. Hence, we find that 
\begin{align*}
\theta(i) = \pi(f(i)) = \pi(u_i, u'_i) = u_i &\leq u_j = \pi(u_j, u'_j) = \pi(f(j)) = \theta(j)\\
\theta'(i) = \pi'(f(i)) = \pi'(u_i, u'_i) = u'_i &\leq u'_j = \pi'(u_j, u'_j) = \pi'(f(j)) = \theta'(j).
\end{align*}
Thus, $\theta$ and $\theta'$ are monotonic. This proves that $\theta$ and $\theta'$ are warping functions. 

It remains to show $\phi \circ \theta = \phi' \circ \theta'$. From Lemma \ref{lemma:psi=psi'} follows $\phi \circ \pi = \phi' \circ \pi'$. Then we have 
\[
\phi \circ \theta = (\phi \circ \pi) \circ f = (\phi' \circ \pi') \circ f = \phi' \circ \theta'.
\]
This completes the proof.
\end{proof}

\subsection{Semi-Metrification of DTW-Spaces}\label{subsec:semi-metrification}

In this section, we convert the dtw-distance to a semi-metric. For this, we regard time series as words over the alphabet $\S{A} = \R$. The set of finite time series is denoted by $\S{T} = \S{A}^*$. The next result shows that expansions of words over numbers can be expressed by matrix multiplication. 

\begin{lemma}\label{lemma:x>y=>x=PHIy}
Let $x, y \in \S{T}$ be two time series. Then the following statements are equivalent:
\begin{enumerate}
\item $x$ is an expansion of $y$.
\item There is a warping matrix $\Phi$ such that $x = \Phi y$.
\end{enumerate}
\end{lemma}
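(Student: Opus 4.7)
The plan is to prove this equivalence by combining two results already established in the paper, namely Lemma \ref{lemma:expansions} (which characterizes expansions via replication of entries) and Lemma \ref{lemma:expansion-by-phi} (which describes the action of a warping matrix on a time series). Both lemmas essentially express the same combinatorial operation—replicating each element $y_i$ a positive number of times—so the result reduces to translating between the function/matrix representation and the explicit concatenation form.

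For the direction $(1) \Rightarrow (2)$, suppose $x \succ y$ with $|y| = n$. Lemma \ref{lemma:expansions} supplies positive integers $\alpha_1, \ldots, \alpha_n \in \mathbb{N}$ such that
\[
x = y_1^{\alpha_1} y_2^{\alpha_2} \cdots y_n^{\alpha_n}.
\]
I would then set $\ell = \alpha_1 + \cdots + \alpha_n$ and define $\phi : [\ell] \to [n]$ by $\phi(k) = i$ whenever $\alpha_1 + \cdots + \alpha_{i-1} < k \leq \alpha_1 + \cdots + \alpha_i$. Since each $\alpha_i \geq 1$, the function $\phi$ hits every index in $[n]$ and is non-decreasing, hence a warping function. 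Lemma \ref{lemma:expansion-by-phi} applied to the associated warping matrix $\Phi$ yields $\Phi y = x$.

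For the direction $(2) \Rightarrow (1)$, suppose $x = \Phi y$ for some warping matrix $\Phi \in \mathbb{B}^{\ell \times n}$ associated with a warping function $\phi : [\ell] \to [n]$. Lemma \ref{lemma:expansion-by-phi} directly produces integers $\alpha_1, \ldots, \alpha_n \in \mathbb{N}$ such that $x = y_1^{\alpha_1} \cdots y_n^{\alpha_n}$, and then Lemma \ref{lemma:expansions} concludes that $x \succ y$.

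There is no real obstacle here: the only subtlety is making sure the warping function constructed in the forward direction is well-defined and surjective, which follows from $\alpha_i \geq 1$ for every $i$. The proof is essentially a bookkeeping argument linking the three equivalent representations (expansion of a word, replication pattern $(\alpha_1, \ldots, \alpha_n)$, and warping matrix/function).
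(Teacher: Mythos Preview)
Your proposal is correct and follows essentially the same approach as the paper: both directions invoke Lemma~\ref{lemma:expansions} and Lemma~\ref{lemma:expansion-by-phi} in tandem, with the forward direction explicitly constructing the warping function from the replication counts $\alpha_i$ and verifying surjectivity and monotonicity. The only differences are notational (the paper uses $|x|=n$, $|y|=m$ where you use $|x|=\ell$, $|y|=n$).
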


\begin{proof}
Suppose that $\abs{x} = n$ and $y = (y_1, \ldots, y_m)$. 

\medskip

\noindent
$\Rightarrow$: We assume that $x \succ y$. According to Lemma \ref{lemma:expansions} there are positive integers $\alpha_1, \ldots, \alpha_m \in \N$ such that $n = \alpha_1 + \cdots + \alpha_m$ and $x = y_1^{\alpha_1} \cdots y_m^{\alpha_m}$. Consider the function $\phi:[n] \rightarrow [m]$ defined by
\[
\phi(i) = \begin{cases}
1 & i \leq \alpha_1\\
2 & \alpha_1 < i \leq \alpha_1 + \alpha_2\\
\cdots & \cdots \\
m & \alpha_1 + \cdots \alpha_{m-1}< i 
\end{cases}
\]
for all $i \in [n]$. The function $\phi$ is surjective: Suppose that $j \in [m]$. We set $i = \alpha_1 + \cdots + \alpha_j$. Then $1 \leq i \leq n$ and $\phi(i) = j$ by definition of $\phi$. By construction, the function $\phi$ is also monotonically increasing. Hence, $\phi$ is a warping function. Then from Lemma \ref{lemma:expansion-by-phi} follows the second statement. 

\medskip

\noindent
$\Leftarrow$: Let $\Phi \in \B^{n \times m}$ be a warping matrix such that $x = \Phi y$. Then there is a warping function $\phi: [n] \rightarrow [m]$ associated with $\Phi$. The first statement follows by first applying Lemma \ref{lemma:expansion-by-phi} and then by Lemma \ref{lemma:expansions}.
\end{proof}

\medskip

\emph{Warping identification} is a relation on $\S{T}$ defined by $x \sim y \,\Leftrightarrow\, \dtw(x, y) = 0$ for all $x, y \in \S{T}$. We show that warping identification is an equivalence relation as claimed in Prop.~\ref{prop:warping-identification-class}.

\paragraph*{Section \ref{subsec:results:approach}, Prop.~\ref{prop:warping-identification-class}.}
The warping-identification $\sim$ is an equivalence relation on $\S{T}$. 

\begin{proof}
The relation $\sim$ is reflexive and symmetric by the properties of the dtw-distance. It remains to show that the warping-identification is transitive. Let $x, y, z \in \S{T}$ be time series with $x \sim y$ and $y \sim z$. Then $\dtw(x, y) = \dtw(y, z) = 0$. Hence, there are optimal warping paths $(\Phi, \Psi)$ and $(\Phi', \Psi')$ of length $\ell$ and $\ell'$, resp., such that $\norm{\Phi x - \Psi y} = \norm{\Phi'y - \Psi' z} = 0$. From Lemma \ref{lemma:pullback} follows that there are warping matrices $\Theta$ and $\Theta'$ of the same length $\ell$ such that $\Theta\Psi y = \Theta'\Psi'y$. Observe that
\begin{align*}
\norm{\Theta\Phi x - \Theta' \Psi' z} 
&= \norm{\Theta\Phi x - \Theta\Psi y + \Theta'\Phi' y - \Theta' \Psi' z}\\
&\leq \norm{\Theta\Phi x - \Theta\Psi y} + \norm{\Theta'\Phi' y - \Theta' \Psi' z}\\
&\leq \norm{\Theta}\norm{\Phi x - \Psi y} + \norm{\Theta'}\norm{\Phi'y - \Psi' z}\\
&= 0.
\end{align*}
Note that the second inequality in the third line follows from the fact that the Frobenius norm on matrices is compatible to the vector norm. From Lemma \ref{lemma:composition-01} follows that the products $\Theta\Phi$ and $\Theta' \Psi'$ are warping matrices. Since both products have the same length $\ell$, we find that the pair $(\Theta\Phi, \Theta' \Psi')$ is a warping walk. Then from Prop.~\ref{prop:dtw} follows that $\dtw(x, z) \leq \norm{\Theta\Phi x - \Theta' \Psi' z} = 0$. This proves that $\sim$ is transitive and completes the proof.
\end{proof}

For every $x \in \S{T}$ let $[x] = \cbrace{y \in \S{T} \,:\, x \sim y}$ denote the \emph{warping-identification class} of $x$. The next result presents an equivalent definition of the warping-identification class. 

\paragraph*{Section \ref{subsec:results:approach}, Prop.~\ref{prop:generator-of-[x]}}.
Let $x \in \S{T}$ be a time series with condensed form $x^*$. Then the warping-identification class of $x$ is of the form
\[
[x] = \cbrace{y \in \S{T} \,:\, y \succ x^*}.
\]

\begin{proof}
The warping-identification class is defined by 
\[
[x] = \cbrace{y \in \S{T} \,:\, x \sim y} = \cbrace{y \in \S{T} \,:\, \dtw(x,y) = 0}.
\] 
Let $\S{E}(x^*) = \cbrace{y \in \S{T} \,:\, y \succ x^*}$ denote the set of expansions of $x^*$. We show that $[x] = \S{E}(x^*)$.

\medskip

\noindent
$\subseteq$: Let $y \in [x]$. There is an optimal warping path $(\Phi, \Psi)$ such that $\dtw(x, y) = \norm{\Phi x - \Psi y} = 0$. From Lemma \ref{lemma:x>y=>x=PHIy} follows that there is a warping matrix $\Theta$ with $x = \Theta x^*$. Hence, $\norm{\Phi\Theta x^* - \Psi y} = 0$ and we obtain $\Phi\Theta x^* = \Psi y$. From Lemma \ref{lemma:composition-01} follows that the product $\Phi\Theta$ of warping matrices $\Phi$ and $\Theta$ is a warping matrix. We set $z = \Phi\Theta x^* = \Psi y$. Then $z \succ x^*$ and $z \succ y$ by Lemma \ref{lemma:x>y=>x=PHIy}. From Prop.~\ref{prop:compression-is-expansion-of-condensed-form} follows that $y \succ x^*$. This shows that $y \in \S{E}(x^*)$.

\medskip

\noindent
$\supseteq$: Let $y \in \S{E}(x^*)$. We assume that $\abs{x} = n$, $\abs{y}=m$, and $\abs{x^*} = k$. From Lemma \ref{lemma:x>y=>x=PHIy} follows that there are warping matrices $\Phi \in \B^{n \times k}$ and $\Psi \in \B^{m \times k}$ with $x = \Phi x^*$ and $y = \Psi x^*$, respectively. Then $(\Phi, I_n)$ and $(\Psi, I_{m})$ are warping walks of length $n$ and $m$ respectively. We have
\begin{align*}
0 \leq \dtw(x, x^*) &\leq \norm{I_n x - \Phi x^*} = 0 \\
0 \leq \dtw(y, x^*) &\leq \norm{I_{m} y - \Psi x^*} = 0
\end{align*}
giving $\dtw(x, x^*) = \dtw(y, x^*) = 0$. Hence, we have $x \sim x^*$ and $y \sim x^*$. Since $\sim$ is an equivalence relation, we have $x \sim y$ by Prop.~\ref{prop:warping-identification-class}. This proves $y \in [x]$. 
\end{proof}

\medskip

\noindent
We prove Prop.~\ref{prop:expansion-inequality} from Section \ref{subsec:results:approach}.

\paragraph*{Section \ref{subsec:results:approach}, Prop.~\ref{prop:expansion-inequality}.}
Let $x,y \in \S{T}$ be time series such that $x \succ y$. Then 
\begin{enumerate}
\itemsep0em
\item $\dtw(x, y) = 0$ 
\item $\dtw(x, z) \geq \dtw(y, z)$ for all $z \in \S{T}$. 
\end{enumerate}

\begin{proof}
We first show the second assertion. Let $z \in \S{T}$ be a time series and let $(\Phi, \Psi)$ be an optimal warping path between $x$ and $z$. Then we have $\delta(x, z) = \norm{\Phi x - \Psi z}$ by Prop.~\ref{prop:dtw}. From $x \succ y$ and Lemma \ref{lemma:x>y=>x=PHIy} follows that there is a warping matrix $\Theta$ such that $x = \Theta y$. We obtain
\begin{align*}
\delta(x, z) = \norm{\Phi x - \Psi z} = \norm{\Phi \Theta y - \Psi z} \geq \delta(y, z).
\end{align*}
From Lemma \ref{lemma:composition-02} follows that $(\Phi\Theta, \Psi)$ is a warping walk. The inequality holds, because $(\Phi\Theta, \Psi)$ is not necessarily an optimal warping walk between $y$ and $z$. 

The first assertion follows from the second one by setting $z = x$. We obtain
\[
0 = \dtw(x, x) \geq \dtw(y, x) \geq 0. 
\]
This implies $\dtw(y, x) = 0$ and completes the proof.
\end{proof}

The set $\S{T}^* = \cbrace{[x] \,:\, x \in \S{T}}$ is the quotient space of $\S{T}$ under warping identification $\sim$. We define the distance function 
\[
\delta^*: \S{T}^* \times \S{T}^* \rightarrow \R_{\geq 0}, \quad ([x], [y]) \mapsto \inf_{x' \in [x]}\; \inf_{y' \in [y]}\; \delta(x', y').
\]
We call $\delta^*$ the \emph{time-warp-invariant} (twi) distance induced by $\delta$. 

\paragraph*{Section \ref{subsec:results:approach}, Theorem \ref{theorem:semi-metric}.}
The twi-distance $\delta^*$ induced by the dtw-distance $\delta$ is a well-defined semi-metric satisfying $\delta^*([x], [y]) = \delta(x^*,y^*)$ for all $x, y \in \S{T}$. 

\begin{proof}
Let $x^*$ and $y^*$ be the condensed forms of $x$ and $y$, respectively. Repeatedly applying Prop.~\ref{prop:expansion-inequality} gives 
\begin{align*}
\dtw(x^*,y^*) \leq \dtw(x^*,y') \leq \dtw(x',y')
\end{align*}
for all $x' \in [x]$ and all $y' \in [y]$. Hence, the infimum exists and $\dtw^*([x], [y]) = \dtw(x^*, y^*)$ is well-defined. 

We show that $\delta^*$ is a semi-metric. Non-negativity and symmetry of $\dtw^*$ follow from non-negativity and symmetry of $\dtw$. To show the identity of indiscernibles, we assume that $\dtw^*([x], [y]) = 0$. From the identity $\delta^*([x], [y]) = \delta(x^*,y^*)$ follows $\dtw(x^*, y^*) = 0$. This implies that $x^*$ and $y^*$ are warping identical, that is $x^* \sim y^*$. By Prop.~\ref{prop:warping-identification-class} we have the following relations $[x^*] = [y^*]$, $[x] = [x^*]$, and $[y] = [y^*]$. Combining these relations gives $[x] = [y]$. This shows that $\dtw^*$ satisfies the identity of indiscernibles. Hence, $\delta^*$ is a semi-metric.
\end{proof}

\subsection{Warping-Invariance}\label{subsec:warping-invariance}

This section shows that the canonical extension of the twi-distance is warping-invariant. A distance function $d:\S{T} \times \S{T} \rightarrow \R_{\geq 0}$ is \emph{warping-invariant} if 
\[
d(x, y) = d(x', y')
\]
for all time series $x, y, x', y' \in \S{T}$ with $x \prec x'$ and $y \prec y'$. The twi-distance $\delta^*$ extends to a distance on $\S{T}$ by virtue of
\[
\delta^\sim: \S{T} \times \S{T} \rightarrow \R_{\geq 0}, \quad (x, y) \mapsto \delta^*([x], [y]).
\]
We call $\dtw^\sim$ the \emph{canonical extension} of $\dtw^*$.

\paragraph*{Section \ref{subsec:results:approach}, Theorem \ref{theorem:warping-invariance}.}
The canonical extension $\dtw^\sim$ of the twi-distance $\dtw^*$ is warping-invariant.

\begin{proof}
Let $x, x', y, y' \in \S{T}$ be time series such that there are common compressions $u \prec x, x'$ and $v \prec y, y'$. We show that $\delta^\sim(x, y) = \delta^\sim(x', y')$. Suppose that $x^*$ and $y^*$ are the condensed forms of $u$ and $v$, respectively. By repeatedly applying Prop.~\ref{prop:expansions-are-transitive} we obtain $x, x' \succ x^*$ and $y, y' \succ y^*$. From Prop.~\ref{prop:generator-of-[x]} follows that $[x] = [x']$ and $[y] = [y']$. This gives $\delta^\sim(x, y) = \delta^*([x], [y]) = \delta^*([x'], [y']) = \delta^\sim(x', y')$. The proof is complete. 
\end{proof}

\commentout{
\begin{proposition}\label{prop:equivalent-formulation-warping-invariance}
Let $d:\S{T} \times \S{T} \rightarrow \R_{\geq 0}$ be a distance function. Then the following statements are equivalent:
\begin{enumerate}
\item $d$ is warping-invariant.
\item $d(x, y) = d(x', y')$ for all $x, y, x', y' \in \S{T}$ with $x \sim x'$ and $y \sim y'$. 
\end{enumerate}
\end{proposition}
\begin{proof}
We first assume that $d$ is warping-invariant. Let $x, y, x', y' \in \S{T}$ be time series such that $x \sim x'$ and $y \sim y'$. Proposition~\ref{prop:warping-identification-class} gives $[x] = [x']$ and $[y] = [y']$. From Prop.~\ref{prop:generator-of-[x]} follows that there are condensed forms $x^*$ and $y^*$ that generate the equivalence classes $[x] = [x']$ and $[y] = [y']$, respectively. Thus we have $x^* \prec x, x'$ and $y^* \prec y, y'$. Since $d$ is warping-invariant, we have 
\[
d(x^*, y^*) = d(x, y) \quad \text{ and } \quad d(x^*, y^*) = d(x', y').
\]
Combining both equations yields $d(x, y) = d(x', y')$. This shows that the first statement implies the second.

\medskip

We assume that the second statement holds. Let $x, y, x', y' \in \S{T}$ be time series such that $x \prec x'$ and $y \prec y'$. From Prop.~\ref{prop:expansion-inequality} follows that $\dtw(x, x') = 0$ and $\dtw(y, y') = 0$. Hence, we find that $x \sim x'$ and $y \sim y'$. This implies $d(x, y) = d(x', y')$ by assumption. Hence, the first statement holds. 
\end{proof}
}

\end{document}